\theoremstyle{plain}
\newtheorem{theorem}{Theorem}[section]
\newtheorem{lemma}[theorem]{Lemma}
\newtheorem{corollary}[theorem]{Corollary}
\theoremstyle{definition}
\newtheorem{definition}[theorem]{Definition}
\newtheorem{assumption}[theorem]{Assumption}
\theoremstyle{remark}
\newtheorem{remark}[theorem]{Remark}
\newcommand{\paragraphbe}[1]{\vspace{0.08in} \noindent{\bf \em #1}. }
\definecolor{fhcolor}{rgb}{0.523, 0.235, 0.625}
\def\compileFigures{0}
\pgfplotsset{compat=1.3}
\newtheorem*{rep@theorem}{\rep@title}
\newcommand{\newreptheorem}[2]{%
\newenvironment{rep#1}[1]{%
\smallskip\par
\noindent%
 \def\rep@title{\bfseries \textup{#2 \ref{##1}}}%
 \begin{rep@theorem}\itshape\ignorespaces}%
{\end{rep@theorem}}}
\newcommand{\D}{\mathcal{D}}
\newcommand{\size}{n}
\newcommand{\z}{{\bm{z}}}
\newcommand{\x}{{\bm{x}}}
\newcommand{\y}{{\bm{y}}}
\newcommand{\X}{\mathcal{X}}
\newcommand{\Y}{\mathcal{Y}}
\newcommand{\out}{{\bm f}}
\newcommand{\param}{{\bm W}}
\newcommand{\Loss}{\mathcal{L}}
\newcommand{\sig}{\sigma}
\newcommand{\sgn}{\text{sgn}}
\newcommand{\diag}{\text{diag}}
\newcommand{\brownian}{\bm B}
\crefname{equation}{Eq.}{Eq.}
\let\svthefootnote\thefootnote
\newcommand\freefootnote[1]{%
  \let\thefootnote\relax%
  \footnotetext{#1}%
  \let\thefootnote\svthefootnote%
}
\title{Initialization Matters: Privacy-Utility Analysis of Overparameterized Neural Networks}
\author{
  Jiayuan Ye$^\dagger$, Zhenyu Zhu$^\ddagger$, Fanghui Liu$^{\natural}$\thanks{Work was done while Fanghui was at LIONS, EPFL.},\,\ Reza Shokri$^\dagger$, Volkan Cevher$^\ddagger$\\
  $^\dagger$ National University of Singapore $\quad^\ddagger$ EPFL, Switzerland $\quad^{\natural}$ University of Warwick\\
  {\small$^\dagger$\texttt{\{jiayuan,reza\}@comp.nus.edu.sg} $\quad^\ddagger$\texttt{\{zhenyu.zhu, volkan.cevher\}@epfl.ch}
  $\quad^{\natural}$\texttt{fanghui.liu@warwick.ac.uk}}
}
\begin{document}

\maketitle

\begin{abstract}

We analytically investigate how over-parameterization of models in randomized machine learning algorithms impacts the information leakage about their training data.
Specifically, we prove a privacy bound for the KL divergence between model distributions on worst-case neighboring datasets, and explore its dependence on the initialization, width, and depth of fully connected neural networks. 
We find that this KL privacy bound 
is largely determined by the expected squared gradient norm relative to model parameters during training. Notably, for the special setting of linearized network, our analysis indicates that the squared gradient norm (and therefore the escalation of privacy loss) is tied directly to the per-layer variance of the initialization distribution. 
By using this analysis, we demonstrate that privacy bound improves with increasing depth under certain initializations (LeCun and Xavier), while degrades with increasing depth under other initializations (He and NTK). 
Our work reveals a complex interplay between privacy and depth that depends on the chosen
initialization distribution. We further prove excess empirical risk bounds under a fixed KL privacy budget, and show that the interplay between privacy utility trade-off and depth is similarly affected by the initialization.
\end{abstract}

\section{Introduction}

Deep neural networks (DNNs) in the over-parameterized regime (i.e., more parameters than data) perform well in practice but the model predictions can easily leak private information about the training data under inference attacks such as membership inference attacks~\cite{shokri2017membership} and reconstruction attacks~\cite{carlini2021extracting,balle2022reconstructing,haim2022reconstructing}.
This leakage can be mathematically measured by the extent to which the algorithm's output distribution changes if DNNs are trained on a neighboring dataset (differing only in a one record), following the differential privacy (DP) framework~\cite{dwork2014algorithmic}. 

To train differential private model, a typical way is to randomly perturb each gradient update in the training process, such as stochastic gradient descent (SGD), which leads to the most widely applied DP training algorithm in the literature: DP-SGD~\cite{abadi2016deep}.
To be specific, in each step, DP-SGD employs gradient clipping, adds calibrated Gaussian noise, and yields differential privacy guarantee that scales with the noise multiplier (i.e., per-dimensional Gaussian noise standard deviation divided by the clipping threshold) and number of training epochs.
However, this privacy bound~\cite{abadi2016deep} is overly general as its gradient clipping artificially neglects the network properties (e.g., width and depth) and training schemes (e.g., initializations).
Accordingly, a natural question arises in the community:
\begin{center}
    \emph{How does the over-parameterization of neural networks (under different initializations) affect the privacy bound of the training algorithm over  \emph{worst-case} datasets?}
\end{center}

To answer this question, we circumvent the difficulties of analyzing gradient clipping, and instead \emph{algorithmically} focus on analyzing privacy for the Langevin diffusion algorithm \emph{without} gradient clipping nor Lipschitz assumption on loss function.~\footnote{A key difference between this paper and existing privacy utility analysis of Langevin diffusion~\cite{ganesh2022langevin} is that we analyze in the absence of gradient clipping or Lipschitz assumption on loss function. 
Our results also readily extend to discretized noisy GD with constant step-size (as discussed in \cref{app:extend_noisyGD}). 
}
It avoids an artificial setting in DP-SGD~\cite{abadi2016deep} where a constant sensitivity constraint is enforced for each gradient update and thus makes the privacy bound insensitive to the network over-parameterization. 
\emph{Theoretically}, we prove that the KL privacy loss for Langevin diffusion scales with the expected gradient difference between the training on any two worst-case neighboring datasets (\cref{thm:new_compose}).~\footnote{We focus on KL privacy loss because it is a more relaxed distinguishability notion than standard $(\varepsilon, \delta)$-DP, and therefore could be upper bounded even without gradient clipping. Moreover, KL divergence enables upper bound for the advantage (relative success) of various inference attacks, as studied in recent works~\cite{mahloujifar2022optimal,guo2022analyzing}.\label{footnote:kl_choice}} By proving precise upper bounds on the expected $\ell_2$-norm of this gradient difference, we thus obtain KL privacy bounds for fully connected neural network (\cref{lem:grad_noisy_train}) and its linearized variant (\cref{cor:drift_diff_linearized_training}) that changes with the network width, depth and per-layer variance for the initialization distribution. We summarized the details of our KL privacy bounds in \cref{tab:init_utility}, and highlight our key observations below.
\begin{itemize}
    \item Width always worsen privacy, under all the considered initialization schemes. Meanwhile, the interplay between network depth and privacy is much more complex and crucially depends on which initialization scheme is used and how long the training time is. 
    \item Regarding the specific initialization schemes, under small per-layer variance in initialization (e.g. in LeCun and Xavier), if the depth is large enough, our KL privacy bound for training fully connected network (with a small amount of time) as well as linearized network (with finite time) decays exponentially with increasing depth. To the best of our knowledge, this is the first time that an improvement of privacy bound under over-parameterization is observed.
\end{itemize}
\begin{table}[t]
    \centering
    \caption{Our privacy utility trade-off bounds for training linearized network~\eqref{eqn:network_linearized} via Langevin diffusion, under different hidden-layer width $m$, depth $L$ and initializations. The per-layer widths $m_0 = d$, $m_1,\cdots, m_{L-1} = m$ and $m_L = o$ where $d$ is the data dimension and $o$ is number of classes. For KL privacy bounds, we assume \cref{assum:data} holds and $L\geq 2$ for simplicity. For the excess risk bounds, we assume $o=1$, $d, m = \tilde{\Omega}\left(n\right)$ are large, and~\cref{assum:data}. Under LeCun and Xavier, we prove privacy utility trade-offs that improve with over-parameterization (increasing  depth).
    }
    \label{tab:init_utility}
    \begin{tabular}{c|c|c|c|c}
        \toprule
        \begin{tabular}{@{}c@{}}
            Initialization
        \end{tabular} & \begin{tabular}{@{}c@{}}
            Variance $\beta_l$ \\
            for layer $l$
        \end{tabular}& \begin{tabular}{@{}c@{}}
            Gradient norm\\
            constant $B$~\eqref{eqn:def_B}
        \end{tabular} & \begin{tabular}{@{}c@{}}
            Approximate lazy\\ training distance\\ $R$~\eqref{eqn:def_R_tilde}
        \end{tabular} &
        \begin{tabular}{@{}c@{}}
            Excess Empirical risk \\ under $\varepsilon$-KL privacy\\(\cref{cor:trade_off})
        \end{tabular}\\
        \midrule
        LeCun~\cite{lecun2002efficient} & ${1}/{m_{l-1}}$ & $\frac{om(L-1 + \frac{d}{m})}{2^{L-1}}$ & $\tilde{\mathcal{O}}\left(\frac{n}{m}\cdot \frac{1}{L-1 + \frac{d}{m}}\right)$ & $\tilde{\mathcal{O}}\left(\frac{1}{n^2} + \sqrt{\frac{1}{ 2^{L}\varepsilon}} \right)$ \\\hline
        He~\cite{he2015delving} & ${2}/{m_{l-1}}$ & {\small $om(L-1 + \frac{d}{m})$} & $\tilde{\mathcal{O}}\left(\frac{n}{m}\cdot\frac{1}{L-1 + \frac{d}{m}}\right)$ & $\tilde{\mathcal{O}}\left(\frac{1}{n^2} + \sqrt{\frac{1}{\varepsilon}} \right)$ \\\hline
        NTK~\cite{allen2019convergence} & \begin{tabular}{@{}c@{}}\small
            ${2}/{m_l}$, $l< L$\\
            ${1}/{o}$, $l = L$
        \end{tabular} & $dm\left(\frac{L-1}{2} + \frac{o}{m}\right)$ & $\tilde{\mathcal{O}}\left(\frac{n}{m}\cdot\frac{1}{\frac{L-1}{2} + \frac{1}{m}}\right)$ & $\tilde{\mathcal{O}}\left(\frac{1}{n^2} + \sqrt{\frac{d}{\varepsilon}} \right)$ \\\hline
        Xavier~\cite{glorot2010understanding} & $\frac{2}{m_{l-1} + m_{l}}$ &  $\frac{od(L-1 + \frac{d + o}{2m})}{2^{L-3}(1+\frac{d}{m})(1+\frac{o}{m})} $ & $\tilde{\mathcal{O}}\left(\frac{n}{d}\cdot\frac{(1 + \frac{d}{m})(1 + \frac{1}{m})}{L-1+\frac{d+o}{2m}}\right)$
        & 
        $\tilde{\mathcal{O}}\left(\frac{1}{n^2} + \sqrt{\frac{1}{2^{L}\varepsilon}}\right)$
        \\
        \bottomrule
    \end{tabular}
\end{table}
We further perform numerical experiments (\cref{ssec:numerical_DNN}) on deep neural network trained via noisy gradient descent to validate our privacy analyses. Finally, we analyze the privacy utility trade-off for training linearized network, and prove that the excess empirical risk bound (given any fixed KL privacy budget) scales with a lazy training distance bound $R$ (i.e., how close is the initialization to a minimizer of the empirical risk) and a gradient norm constant $B$ throughout training (\cref{cor:trade_off}). By analyzing these two terms precisely, we prove that under certain initialization distributions (such as LeCun and Xavier), the privacy utility trade-off strictly improves with increasing depth for linearized network (\cref{tab:init_utility}). To our best knowledge, this is the first time that such a gain in privacy-utility trade-off due to over-parameterization (increasing depth) is shown. Meanwhile, prior results only prove (nearly) dimension-independent privacy utility trade-off for such linear models in the literature~\cite{song2021evading,kairouz2021nearly,li2022does}. Our improvement demonstrates the unique benefits of our algorithmic framework and privacy-utility analysis in understanding the effect of over-parameterization. 
\subsection{Related Works}

\paragraphbe{over-parameterization in DNNs and NTK} Theoretical demonstration on the benefit of over-parameterization in DNNs occurs in global convergence \cite{allen2019convergence,du2019gradient} and generalization~\cite{arora2019fine,cao2019generalization}.
Under proper initialization, the training dynamics of over-parameterized DNNs can be described by a kernel function, termed as neural tangent kernel (NTK)~\cite{jacot2018neural}, which stimulates a series of analysis in DNNs. 
Accordingly, over-parameterization has been demonstrated to be beneficial/harmful to several topics in deep learning, e.g., robustness~\cite{bubeck2023universal,zhu2022robustness}, covariate shift~\cite{tripuraneni2021covariate}. However, the relationship between over-parameterization and privacy (based on the differential privacy framework) remains largely an unsolved problem, as the training dynamics typically change~\cite{bu2021convergence} after adding new components in the privacy-preserving learning algorithm (such as DP-SGD~\cite{abadi2016deep}) to enforce privacy constraints.

\paragraphbe{Membership inference privacy risk under over-parameterization} A recent line of works~\cite{tan2022parameters,tan2023blessing} investigates how over-parameterization affects the theoretical and empirical privacy in terms of membership inference advantage, and proves novel trade-off between privacy and  generalization error. These literature are closet to our objective of investigating the interplay between privacy and over-parameterization.  However, \citet{tan2022parameters,tan2023blessing} focus on proving upper bounds for an average-case privacy risk defined by the advantage (relative success) of membership inference attack on models trained from randomly sampled training dataset from a population distribution. By contrast, our KL privacy bound is heavily based on the strongest adversary model in the differential privacy definition, and holds under an arbitrary \textit{worst-case} pair of neighboring datasets, differing only in one record. Our model setting (e.g., fully connected neural networks) is also quite different from that of \citet{tan2022parameters,tan2023blessing}.
The employed analysis tools are accordingly different.

\paragraphbe{Differentially private learning in high dimension} Standard results for private empirical risk minimization~\cite{bassily2014private,talwar2014private} and private stochastic convex optimization~\cite{bassily2019private,bassily2021differentially,asi2021private} prove that there is an unavoidable factor $d$ in the empirical risk and population risk that depends on the model dimension. However, for unconstrained optimization, it is possible to seek for the dimension-dependency in proving risk bounds for certain class of problems (such as generalized linear model~\cite{song2021evading}). Recently, there is a growing line of works that proves dimension-independent excess risk bounds for differentially private learning, by utilizing the low-rank structure of data features~\cite{song2021evading} or gradient matrices~\cite{kairouz2021nearly,li2022does} during training. Several follow-up works~\cite{kasiviswanathan2021sgd, bassily2022differentially} further explore techniques to enforce the low-rank property (via random projection) and boost privacy utility trade-off. However, all the works focus on investigating a general high-dimensional problem for private learning, rather than separating the study for different network choices such as width, depth and initialization. Instead, our study focuses on the fully connected neural network and its linearized variant, which enables us to prove more precise privacy utility trade-off bounds for these particular networks under over-parameterization.

\section{Problem and Methodology}

We consider the following standard multi-class supervised learning setting. Let $\D=(\z_1,\cdots,\z_n)$ be an input dataset of size $\size$, where each data record $\z_i=(\x_i,\y_i)$ contains a $d$-dimensional feature vector $\x_i\in \mathbb{R}^d$ and a label vector $\y_i\in \Y = \{-1, 1\}^o$ on $o$ classes. We aim to learn a neural network output function $\out_{\param}(\cdot):\X\rightarrow \Y$ parameterized by $\param$ via empirical risk minimization (ERM)
\begin{align}
    \label{eqn:obj}
    \min_{\param} \Loss (\param;\D) := \frac{1}{\size}\sum_{i=1}^n \ell(\out_\param(\x_i); \y_i)\,,
\end{align}
where $\ell(\out_\param(\x_i); \y_i)$ is a loss function that reflects the approximation quality of model prediction $f_{\param}(\x_i)$ compared to the ground truth label $\y_i$. For simplicity, throughout our analysis, we employ the cross-entropy loss $\ell(\out_\param(\x); \y) = - \langle \y,  \log \text{softmax}(\out_\param(\x))\rangle$ for multi-class network with $o\geq 2$, and $\ell(\out_\param(\x); \y) = \log (1 + \exp(-\y\out_\param(\x))$ for single-output network with $o=1$.

\paragraphbe{Fully Connected Neural Networks} We consider the $L$-layer, multi-output, fully connected, deep neural network (DNN) with ReLU activation. Denote the width of hidden layer $l$ as $m_l$ for $l=1, \cdots, L-1$. For consistency, we also denote $m_0=d$ and $m_L=o$. The network output $f_{\param}(\x) \coloneqq \bm h_L(\x)$ is defined recursively as follows.
\begin{align}
    \bm{h}_0(\bm{x}) = \bm x;\quad \bm{h}_l(\bm{x}) = \phi(\bm{W}_l \bm{x}) \text{ for }l=1, \cdots, L-1 ;\quad \bm{h}_L(\bm{x}) = \bm{W}_L\bm{h}_{L-1}(\bm{x})\,,
\label{eq:network}
\end{align}
where $h_l(\x)$ denotes the post-activation output at $l$-th layer, and $\{\bm{W}_l \in \mathbb{R}^{m_l \times m_{l-1}}: l = 1,\dots, L\}$ denotes the set of per-layer weight matrices of DNN. For brevity, we denote the vector $\param \coloneqq (\text{Vec}(\bm{W}_1), \dots , \text{Vec}({\bm{W}_L}))\in \mathbb{R}^{m_1 \cdot d + m_2 \cdot m_1 + \cdots + o \cdot m_{L-1}}$, i.e., the the concatenation of vectorizations for weight matrices of all layers, as the model parameter.

\paragraphbe{Linearized Network} We also analyze the following \textit{\bfseries linearized network}, which is used in prior works~\cite{lee2019wide,allen2019convergence,ortiz2021can} as an important tool to (approximately and qualitatively) analyze the training dynamics of DNNs. Formally, the linearized network $\bm f^{lin, 0}_{\param}(\bm x)$ is a first-order Taylor expansion of the fully connected ReLU network at initialization parameter $\param_0^{lin}$, as follows.
\begin{align}
    \bm f^{lin, 0}_{\param}(\bm x)\equiv \out_{\param_{0}^{lin}} (\bm x) + \frac{\partial \out_{\param}(\bm x)}{\partial \param}\Big|_{\param = \param_0^{lin}} \left(\param - \param_{0}^{lin}\right),\label{eqn:network_linearized}
\end{align}
where $\out_{\param_{0}^{lin}} (\bm x)$ is the output function of the fully connected ReLU network \eqref{eq:network} at initialization~$\param_0^{lin}$. We denote $\Loss^{lin}_0(\param; \D) = \frac{1}{n}\sum_{i=1}^{n}\ell\left(\bm f_{\param_0^{lin}}(\x_i) + \frac{\partial \out_{\param}(\bm x)}{\partial \param}|_{\param = \param_0^{lin}} (\param - \param_{0}^{lin}); \y_i\right)$ as the empirical loss function for training linearized network, by plugging \eqref{eqn:network_linearized} into \eqref{eqn:obj}.

\paragraphbe{Langevin Diffusion} Regarding the optimization algorithm, we focus on the \textit{Langevin diffusion} algorithm~\cite{lemons1908theorie} with per-dimensional noise variance $\sigma^2$. Note that we aim to \emph{avoid gradient clipping} while still proving KL privacy bounds. After initializing the model parameters $\param_0$ at time zero, the model parameters $\param_t$ at subsequent time $t$ evolves as the following stochastic differential equation. 
\begin{align}
    \mathrm{d} \param_t = & - \nabla\Loss(\param_t;\D) \mathrm{d}t + \sqrt{2\sig^2}\mathrm{d}\brownian_t\,.
    \label{eqn:langevin}
\end{align}

\paragraphbe{Initialization Distribution} 
The initialization of parameters $\param_0$ crucially affects the convergence of Langevin diffusion, as observed in prior literatures~\cite{vempala2019rapid,ganesh2020faster,freund2022convergence}. In this work, we investigate the following general class of Gaussian initialization distributions with different (possibly depth-dependent) variances for the parameters in each layer. For any layer $l=1,\cdots, L$, we have 
\begin{align}
    \bm [\param^l]_{ij}&\sim \mathcal{N}(0, \beta_l)\text{, for }(i, j)\in [m_l]\times [m_{l-1}] \label{eqn:init}\,,
\end{align}
where $\beta_1,\cdots,\beta_L>0$ are the per-layer variance for Gaussian initialization. By choosing different variances, we recover many common initialization schemes in the literature, as summarized in \cref{tab:init_utility}.

\subsection{Our objective and methodology} 
We aim to understand the relation between privacy, utility and over-parameterization (depth and width) for the Langevin diffusion algorithm (under different initialization distributions). For privacy analysis, we prove a KL privacy bound for running Langevin diffusion on any two \textit{worst-case} neighboring datasets. Below we first give the definition for neighboring datasets.
\begin{definition}
    We denote $\D$ and $\D'$ as neighboring datasets if they are of same size and only differ in one record. For brevity, we also denote the differing records as $(\x, \y)\in \D$ and $(\x', \y')\in \D'$.
\end{definition}

\begin{assumption}[Bounded Data]
\label{assum:data}
    For simplicity, we assume bounded data, i.e., $\lVert\x\rVert_2\leq \sqrt{d}$.
\end{assumption}

We now give the definition for KL privacy, which is a more relaxed, yet closely connected privacy notion to the standard $(\varepsilon, \delta)$ differential privacy~\cite{dwork2006calibrating}, see \cref{app:connect_kl_dp} for more discussions. KL privacy and its relaxed variants are commonly used in previous literature~\cite{barber2014privacy,bassily2016algorithmic,wang2016average}. 

\begin{definition}[KL privacy]
    \label{def:kl_privacy}
    A randomized algorithm $\mathcal{A}$ satisfies $\varepsilon$-KL privacy if for any neighboring datasets $\D$ and $\D'$, we have that the KL divergence $\mathrm{KL}(\mathcal{A}(\D) \rVert \mathcal{A}(\D'))\leq \varepsilon$, where $\mathcal{A}(\D)$ denotes the algorithm's output distribution on dataset $\D$.
\end{definition}

In this paper, we prove KL privacy upper bound for $\max_{\D, \D'}\mathrm{KL}(\param_{[0:T]}\rVert \param'_{[0:T]})$ when running Langevin diffusion on any \textit{worst-case} neighboring datasets. For brevity, here (and in the remaining paper), we abuse the notations and denote $\param_{[0:T]}$ and $\param'_{[0:T]}$ as the distributions of model parameters trajectory during Langevin diffusion processes \cref{eqn:langevin} with time $T$ on $\D$ and $\D'$ respectively.

For utility analysis, we prove the upper bound for the excess empirical risk given any fixed KL divergence privacy budget for a single-output neural network under the following additional assumption (it is only required for utility analysis and not needed for our privacy bound).
\begin{assumption}[{\cite{nguyen2021tight,du2018gradient,du2019gradient}}]
     The training data $\x_1,\cdots,\x_n$ are i.i.d. samples from a distribution $P_x$ that satisfies $\mathbb{E}[\x]=0, \lVert\x\rVert_2=\sqrt{d}$ for $\x\sim P_x$, and with probability one for any $i\neq j$, $\x_i\nparallel \x_j$.
    \label{assum:data_utility}
\end{assumption}
Our ultimate goal is to precisely understand how the excess empirical risk bounds (given a fixed KL privacy budget) are affected by increasing width and depth under different initialization distributions.

\section{KL Privacy for Training Fully Connected ReLU Neural Networks}

\label{sec:dp_deep}

In this section, we perform the composition-based KL privacy analysis for Langevin Diffusion given random Gaussian initialization distribution under \cref{eqn:init} for fully connected ReLU network. More specifically, we prove upper bound for the KL divergence between distribution of output model parameters when running Langevin diffusion on an arbitrary pair of neighboring datasets $\D$ and $\D'$.

Our first insight is that by a Bayes rule decomposition for density function, KL privacy under a relaxed gradient sensitivity condition can be proved (that could hold \textit{without} gradient clipping). 

\begin{theorem}[KL composition under possibly unbounded gradient difference]
\label{thm:new_compose}
The KL divergence between running Langevin diffusion~\eqref{eqn:langevin} for DNN \eqref{eq:network} on neighboring datasets $\D$ and $\D'$ satisfies
\begin{align}
\mathrm{KL}(\param_{[0:T]}\lVert \param'_{[0:T]}) = \frac{1}{2\sigma^2}  \int_0^T\mathbb{E}\left[\left\lVert \nabla \Loss (\param_t;\D) - \nabla \Loss (\param_t;\D')\right\rVert_2^2\right]  \mathrm{d}t \,.
\end{align}
\end{theorem}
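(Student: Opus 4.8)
The plan is to recognize this as the relative entropy between the laws of two Itô diffusions that share the same initialization law and the same (constant, additive) diffusion coefficient $\sqrt{2\sig^2}$, differing only through their drift vectors $-\nabla\Loss(\cdot;\D)$ and $-\nabla\Loss(\cdot;\D')$. The natural tool is Girsanov's theorem, or equivalently the ``Bayes-rule'' chain-rule decomposition of the path density that the text alludes to. Since both processes start from the identical initialization distribution~\eqref{eqn:init}, the contribution of the initial marginal to the trajectory KL vanishes, and everything reduces to accumulating the per-time-step divergence of the transition kernels.

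Concretely, first I would discretize $[0,T]$ into steps of size $\Delta t$ and use the Markov property to factorize each path density into the initial density times a product of one-step transition kernels. The chain rule for KL divergence is an exact identity, so
\begin{align}
\mathrm{KL}(\param_{[0:T]}\lVert\param'_{[0:T]}) = \mathrm{KL}(\param_0\lVert\param'_0) + \sum_{k}\mathbb{E}_{\param_{t_k}}\!\left[\mathrm{KL}\!\left(\param_{t_{k+1}}\mid\param_{t_k}\,\big\lVert\,\param'_{t_{k+1}}\mid\param_{t_k}\right)\right],
\end{align}
where the first term is $0$ because the initialization is shared. By the Euler--Maruyama description of~\eqref{eqn:langevin}, each transition kernel is Gaussian with covariance $2\sig^2\Delta t\, I$ and means differing exactly by $-(\nabla\Loss(\param_{t_k};\D)-\nabla\Loss(\param_{t_k};\D'))\Delta t$. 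Because the two Gaussians share the same covariance, their KL is a clean quadratic form; summing over $k$ and letting $\Delta t\to 0$ turns the sum into the advertised time integral of $\mathbb{E}[\lVert\nabla\Loss(\param_t;\D)-\nabla\Loss(\param_t;\D')\rVert_2^2]$, with the prefactor pinned down by the Gaussian-to-Gaussian KL formula. Equivalently, one can invoke Girsanov directly: the log-likelihood ratio splits into a stochastic integral against the driving Brownian motion plus a quadratic-variation term, and taking expectation under $\param_{[0:T]}$ kills the mean-zero martingale part, leaving only the quadratic term.

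The step I expect to be the main obstacle is making the change of measure rigorous \emph{without} a Lipschitz or bounded-gradient assumption on the loss, which is precisely the setting the paper insists on (``possibly unbounded gradient difference''). One must ensure that the two path measures are genuinely mutually absolutely continuous and that the stochastic-integral term really has zero expectation; a direct application of Girsanov requires an integrability certificate such as Novikov's condition on $\int_0^T\lVert\nabla\Loss(\param_t;\D)-\nabla\Loss(\param_t;\D')\rVert_2^2\,\mathrm{d}t$. I would handle this by a localization argument — stopping the diffusion at exit times of large balls to obtain the identity on each stopped process, where integrability is automatic, and then passing to the limit via monotone convergence — so that the final identity holds as an equality valued in $[0,\infty]$ and is informative exactly when the right-hand side is finite. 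The remaining ingredients (the factorization, the Gaussian KL computation, and the Riemann-sum limit) are routine once this foundational point is settled.
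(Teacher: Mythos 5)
Your proposal takes essentially the same route as the paper's proof: the paper likewise decomposes the trajectory KL via the Bayes/chain rule for path measures (exploiting the Markov property and the shared initialization), applies Girsanov's theorem to the conditional law over a short interval $[t, t+\eta]$ to obtain the quadratic drift-difference term, and then differentiates in $t$ and integrates over $[0,T]$ (using Tonelli to swap expectation and integration) — which is the continuous-time version of your Euler--Maruyama chain-rule limit. If anything, your stopping-time localization is a more careful treatment of the integrability issue than the paper's brief assertion that Novikov's condition holds once $\eta$ is small, so the proposal is sound and matches the paper's argument.
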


\paragraphbe{Proof sketch} We compute the partial derivative of KL divergence with regard to time $t$, and then integrate it over $t\in[0,T]$ to compute the KL divergence during training with time $T$. For computing the limit of differentiation, we use Girsanov's theorem to compute the KL divergence between the trajectory of Langevin diffusion processes on $\mathcal{D}$ and $\mathcal{D}'$. The complete proof is in \cref{app:proof_compose}. 

\cref{thm:new_compose} is an extension of the standard additivity~\cite{van2014renyi} of KL divergence (also known as chain rule~\cite{elementsinfomationtheory}) for a finite sequence of distributions to continuous time processes with (possibly) unbounded drift difference. The key extension is that \cref{thm:new_compose} does not require bounded sensitivity between the drifts of Langevin Diffusion on neighboring datasets. Instead, it only requires finite second-order moment of drift difference (in the $\ell_2$-norm sense) between neighboring datasets $\D, \D'$, which can be proved by the following Lemma. 
We prove that this expectation of squared gradient difference incurs closed-form upper bound under deep neural network (under mild assumptions), for running Langevin diffusion (without gradient clipping) on any neighboring dataset $\D$ and $\D'$.

\begin{lemma}[Drift Difference in Noisy Training]
\label{lem:grad_noisy_train}
Let $M_T$ be the subspace spanned by gradients $\{\nabla\ell(\out_{\param_t}(\x_i; \y_i): (\x_i, \y_i)\in \D, t\in[0, T] \}_{i=1}^n$ throughout Langevin diffusion $\left(\param_t\right)_{t\in[0, T]}$. Denote $\lVert \cdot \rVert_{M_T}$ as the $\ell_2$ norm of the projected input vector onto $M_T$. 
Suppose that there exists constants $c, \beta >0$ such that for any $\param\,,\param'$ and $(\x,\y)$, we have $\lVert \nabla \ell(f_\param(\x); \y)) - \nabla\ell(f_{\param'}(\x); \y)\rVert_2 < \max\{c\,, \beta \lVert \param - \param'\rVert_{M_T}\}$. Then running Langevin diffusion \cref{eqn:langevin} with Gaussian initialization distribution \eqref{eqn:init} satisfies $\varepsilon$-KL privacy with $\varepsilon = \frac{\max_{\D, \D'}\int_0^T\mathbb{E}\left[\left\lVert \nabla \Loss (\param_t;\D) - \nabla \Loss (\param_t;\D')\right\rVert_2^2\right] \mathrm{d}t}{2\sigma^2}$ where
\begin{align}
    &\int_0^T\mathbb{E}\left[\left\lVert \nabla \Loss (\param_t;\D) - \nabla \Loss (\param_t;\D')\right\rVert_2^2\right] \mathrm{d}t \leq 2T \cdot \underbrace{\mathbb{E}\left[\left\lVert \nabla \Loss (\param_0;\D) - \nabla \Loss (\param_0;\D')\right\rVert_2^2\right]}_{\text{gradient difference at initialization}}\nonumber\\  & + \underbrace{\frac{2\beta^2}{n^2(2 + \beta^2)}\left(\frac{e^{(2 + \beta^2) T} - 1}{2 + \beta^2} - T\right) \cdot \left( \mathbb{E}\left[\lVert\nabla\Loss(\param_0; \D)\rVert_2^2\right] + 2\sigma^2 \text{rank}(M_T) + c^2\right)}_{\text{gradient difference fluctuation during training}} + \underbrace{\frac{2 c^2 T}{n^2}}_{\text{non-smoothness}}\text{.}\nonumber
\end{align}
\end{lemma}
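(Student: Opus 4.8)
The plan is to start from \cref{thm:new_compose}, which already equates the KL divergence (up to the factor $\tfrac{1}{2\sigma^2}$) with the time-integral $\int_0^T \mathbb{E}[\lVert \nabla\Loss(\param_t;\D) - \nabla\Loss(\param_t;\D')\rVert_2^2]\,\mathrm{d}t$, so the whole task reduces to bounding this integral over worst-case neighbors. Writing $\Delta_t := \nabla\Loss(\param_t;\D) - \nabla\Loss(\param_t;\D')$ and noting that, since $\D,\D'$ differ in a single record, $\Delta_t = \tfrac1n(\nabla\ell(\out_{\param_t}(\x);\y) - \nabla\ell(\out_{\param_t}(\x');\y'))$ depends only on the two differing points, I would split $\Delta_t = \Delta_0 + (\Delta_t - \Delta_0)$ and apply $\lVert a+b\rVert^2 \le 2\lVert a\rVert^2 + 2\lVert b\rVert^2$. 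Integrating over $[0,T]$ immediately produces the stated \emph{gradient difference at initialization} term $2T\,\mathbb{E}[\lVert\Delta_0\rVert_2^2]$ and leaves a fluctuation integral $2\int_0^T \mathbb{E}[\lVert \Delta_t - \Delta_0\rVert_2^2]\,\mathrm{d}t$ to control.

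Next I would reduce this fluctuation to the expected parameter movement inside $M_T$. Applying the assumed local bound $\lVert \nabla\ell(\out_\param(\x);\y) - \nabla\ell(\out_{\param'}(\x);\y)\rVert_2 < \max\{c,\beta\lVert\param-\param'\rVert_{M_T}\}$ to each of the two differing records, the triangle inequality gives $\lVert \Delta_t - \Delta_0\rVert_2 \lesssim \tfrac1n \max\{c,\beta\lVert\param_t - \param_0\rVert_{M_T}\}$, hence $\mathbb{E}[\lVert\Delta_t-\Delta_0\rVert_2^2] \lesssim \tfrac{1}{n^2}(c^2 + \beta^2 u(t))$ where $u(t) := \mathbb{E}[\lVert\param_t-\param_0\rVert_{M_T}^2]$, using $\max\{c,\beta s\}^2 \le c^2 + \beta^2 s^2$. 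The $c^2$ piece integrates to the \emph{non-smoothness} term $\tfrac{2c^2 T}{n^2}$, so everything comes down to bounding $u(t)$.

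For $u(t)$ I would derive a Grönwall differential inequality. From the Langevin SDE $\mathrm{d}\param_t = -\nabla\Loss(\param_t;\D)\,\mathrm{d}t + \sqrt{2\sigma^2}\,\mathrm{d}\brownian_t$, and noting that $\nabla\Loss(\param_t;\D)$ is an average of per-example gradients and therefore lies in $M_T$, I would apply Itô's formula to $\lVert \param_t-\param_0\rVert_{M_T}^2 = \lVert P_{M_T}(\param_t-\param_0)\rVert_2^2$. The drift contributes $-2\mathbb{E}[\langle P_{M_T}(\param_t-\param_0), \nabla\Loss(\param_t;\D)\rangle]$ and the noise contributes a quadratic-variation term $2\sigma^2\,\mathrm{rank}(M_T)$. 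Splitting $\nabla\Loss(\param_t;\D) = \nabla\Loss(\param_0;\D) + (\nabla\Loss(\param_t;\D) - \nabla\Loss(\param_0;\D))$, applying Cauchy--Schwarz, the same local bound on the second piece, and Young's inequality, the cross terms collapse to $(2+\beta^2)u(t) + \mathbb{E}[\lVert\nabla\Loss(\param_0;\D)\rVert_2^2] + c^2$. This yields $u'(t) \le (2+\beta^2)u(t) + C_0$ with $C_0 = \mathbb{E}[\lVert\nabla\Loss(\param_0;\D)\rVert_2^2] + 2\sigma^2\,\mathrm{rank}(M_T) + c^2$ and $u(0)=0$; solving gives $u(t) \le \tfrac{C_0}{2+\beta^2}(e^{(2+\beta^2)t}-1)$, and integrating over $[0,T]$ produces exactly the bracket $\big(\tfrac{e^{(2+\beta^2)T}-1}{2+\beta^2} - T\big)$ in the \emph{fluctuation} term. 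Assembling the three contributions and tracking constants gives the claimed bound, and the $\varepsilon$-KL statement then follows from \cref{thm:new_compose}.

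I expect the main obstacle to be the rigorous justification of this Itô step, because $M_T$ is a \emph{random, non-adapted} subspace: it is measurable only with respect to the full trajectory up to time $T$, so $P_{M_T}$ cannot be treated as a fixed coefficient in the stochastic integral, and in particular the naive bound $\mathbb{E}[\lVert P_{M_T}\brownian_t\rVert_2^2] \le \mathrm{rank}(M_T)\cdot t$ need not hold for a trajectory-dependent subspace. The key structural fact I would exploit is that the drift always lies in the \emph{adapted} increasing family $M_t \subseteq M_T$; I would either run the argument with the adapted projections $P_{M_t}$ and control the discrepancy, or pass through a time-discretization and take limits, so as to legitimately replace the ambient parameter dimension by $\mathrm{rank}(M_T)$ in the noise contribution. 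A secondary check is verifying the finite-second-moment condition under which \cref{thm:new_compose} is valid.
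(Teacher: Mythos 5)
Your proposal follows essentially the same route as the paper's proof: the same split $\Delta_t = \Delta_0 + (\Delta_t - \Delta_0)$ with Cauchy--Schwarz giving the three labeled terms, the same reduction of the fluctuation to $u(t)=\mathbb{E}\bigl[\lVert\param_t-\param_0\rVert_{M_T}^2\bigr]$, and the same Gr\"onwall inequality $u'(t)\le(2+\beta^2)u(t)+C_0$ yielding the factor $\bigl(\tfrac{e^{(2+\beta^2)T}-1}{2+\beta^2}-T\bigr)$, the only cosmetic difference being that the paper derives the drift and trace contributions via a second-order Taylor/Euler--Maruyama expansion of $\Phi(\param)=\lVert\param-\param_0\rVert_{M_T}^2$ rather than invoking It\^o's formula by name. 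The non-adaptedness of $M_T$ that you flag is a genuine subtlety the paper silently glosses over---it treats $P_{M_T}$ as a fixed deterministic projection when computing $\mathbb{E}[\mathrm{Tr}(\nabla^2\Phi)]=2\,\mathrm{rank}(M_T)$---so your proposed remedies (working with the adapted family $M_t\subseteq M_T$ or passing through a discretization) go beyond, rather than fall short of, the paper's own level of rigor.
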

\paragraphbe{Proof sketch} The key is to reduce the problem of upper bounding the gradient difference at any training time $T$, to analyzing its two subcomponents: $\left\lVert \nabla\ell(f_{\bm W_t}(\bm x); \bm y)) - \nabla\ell(f_{\bm W_t}(\bm x'); \bm y')\right\rVert_2^2\leq  \underbrace{2 \left\lVert \nabla\ell(f_{\bm W_0}(\bm x); \bm y)) - \nabla\ell(f_{\bm W_0}(\bm x'); \bm y')\right\rVert_2^2}_{\text{gradient difference at initialization}} + 2 \beta^2 \underbrace{ \lVert \bm W_t - \bm W_0\rVert_{M_T}^2}_{\text{parameters' change after time $T$}}  + 2 c^2$, where $(\bm x, \bm y)$ and $(\bm x', \bm y')$ are the differing data between neighboring datasets $\mathcal{D}$ and $\mathcal{D}'$. This inequality is by the Cauchy-Schwartz inequality. In this way, the second term in \cref{lem:grad_noisy_train} uses the change of parameters to bound the gradient difference between datasets $\D$ and $\D'$ at time $T$, via the relaxed smoothness assumption of loss function (that is explained in \cref{assum:relaxed_smoothness} in details). The complete proof is in \cref{app:proof_drift_mlp}. 
\begin{remark}[Gradient difference at initialization]
    The first term and in our upper bound linearly scales with the difference between gradients on neighboring datasets $\D$ and $\D'$ at initialization. Under different initialization schemes, this gradient difference exhibits different dependency on the network depth and width, as we will prove theoretically in \cref{lem:grad_init}.
\end{remark}
\begin{remark}[Gradient difference fluctuation during training]
    The second term in \cref{lem:grad_noisy_train} bounds the change of gradient difference during training, and is proportional to the the rank of a subspace $M_T$ spanned by gradients of all training data. Intuitively, this fluctuation is because Langevin diffusion adds per-dimensional noise with variance $\sigma^2$, thus perturbing the training parameters away from the initialization at a scale of $O(\sigma \sqrt{\text{rank}(M_T)})$ in the expected $\ell_2$ distance. 
\end{remark}
\begin{remark}[Relaxed smoothness of loss function]
    \label{assum:relaxed_smoothness}
    The third term in \cref{lem:grad_noisy_train} is due to the assumption $\lVert \nabla \ell(f_\param(\x); \y)) - \nabla\ell(f_{\param'}(\x); \y)\rVert_2 < \max\{c\,, \beta \lVert \param - \param'\rVert_{M_T}\}$. This assumption is similar to smoothness of loss function, but is more relaxed as it allows non-smoothness at places where the gradient is bounded by $c$. Therefore, this assumption is general to cover commonly-used smooth, non-smooth activation functions, e.g., sigmoid, ReLU. 
\end{remark}
\paragraphbe{Growth of KL privacy bound with increasing training time $T$} The first and third terms in our upper bound~\cref{lem:grad_noisy_train} grow linearly with the training time $T$, while the second term grows exponentially with regard to $T$. Consequently, for learning tasks that requires a long training time to converge, the second term will become the dominating term and the KL privacy bound suffers from exponential growth with regard to the training time. Nevertheless, observe that for small $T\rightarrow 0$, the second component in \cref{lem:grad_noisy_train} contains a small factor $\frac{e^{(2 + \beta^2) T} - 1}{2 + \beta^2} - T = o(T)$ by Taylor expansion. Therefore, for small training time, the second component is smaller than the first and the third components in \cref{lem:grad_noisy_train} that linearly scale with $T$, and thus does not dominate the privacy bound. Intuitively, this phenomenon is related to lazy training~\cite{chizat2019lazy}. In \cref{ssec:numerical_DNN} and \cref{fig:mlp_depth}, we also numerically validate that the second component does not have a high effect on the KL privacy loss in the case of small training time.

\paragraphbe{Dependence of KL privacy bound on network over-parameterization} Under a fixed training time $T$ and noise scale $\sigma^2$, \cref{lem:grad_noisy_train} predicts that the KL divergence upper bound in~\cref{thm:new_compose} is dependent on the gradient difference and gradient norm at initialization, and the rank of gradient subspace $\text{rank}(M_T)$ throughout training. We now discuss the how these two terms change under increasing width and depth, and whether there are possibilities to improve them under over-parameterization.
\begin{enumerate}
\item The gradient norm at initialization crucially depends on how the per-layer variance in the Gaussian initialization distribution scales with the network width and depth. Therefore, it is possible to reduce the gradient difference at initialization (and thus improve the KL privacy bound) by using specific initialization schemes, as we later show in \cref{ssec:dimension_dependency_linearized} and \cref{ssec:numerical_DNN}.
\item Regarding the rank of gradient subspace $\text{rank}(M_T)$:  when the gradients along the training trajectory span the whole optimization space,  $\text{rank}(M_T)$ would equal the dimension of the learning problem. Consequently, the gradient fluctuation upper bound (and thus the KL privacy bound) worsens with increasing number of model parameters (over-parameterization) in the worst-case. However, if the gradients are low-dimensional~\cite{song2021evading,kairouz2021nearly,shankar2020neural} or sparse~\cite{li2022does}, $\text{rank}(M_T)$ could be dimension-independent and thus enables better bound for gradient fluctuation (and KL privacy bound). We leave this as an interesting open problem.
\end{enumerate} 
\section{KL privacy bound for Linearized Network under over-parameterization}
\label{sec:privacy_linearized}

In this section, we focus on the training of linearized networks~\eqref{eqn:network_linearized}, which fosters a refined analysis on the interplay between KL privacy and over-parameterization (increasing width and depth). 
Analysis of DNNs via linearization is a commonly used technique in both theory \cite{chizat2019lazy} and practice ~\cite{shankar2020neural,ortiz2021can}. We hope our analysis for linearized network serves as an initial attempt that would open a door to theoretically understanding the relationship between over-parameterization and privacy. 

To derive a composition-based KL privacy bound for training a linearized network, we apply \cref{thm:new_compose} which requires an upper bound for the norm of gradient difference between the training processes on neighboring datasets $\D$ and $\D'$ at any time $t$. Note that the empirical risk function for training linearized models enjoys convexity, and thus a relatively short amount of training time is enough for convergence. In this case, intuitively, the gradient difference between neighboring datasets does not change a lot during training, which allows for a tighter upper bound for the gradient difference norm for linearized networks (than \cref{lem:grad_noisy_train}). 

In the following theorem, we prove that for a linearized network, the gradient difference throughout training has a uniform upper bound that only depends on the network width, depth and initialization. 

\label{sec:dp_linear}

\label{ssec:dimension_dependency_linearized}

\begin{theorem}[Gradient Difference throughout training linearized network]
\label{lem:grad_init} Under \cref{assum:data}, taking over the randomness of the random initialization and the Brownian motion, for any $t\in[0, T]$, running Langevin diffusion on a linearized network in \cref{eqn:network_linearized} satisfies that
\begin{align}
        \mathbb{E} \left[ \lVert \nabla\Loss(\param_t^{lin}; \D) - \Loss(\param_t^{lin}; \D')\rVert_2^2 \right] \leq \frac{4 B}{n^2}\,,
        \text{ where }B \coloneqq d\cdot o \cdot \left(\prod_{i=1}^{L-1}\frac{\beta_im_i}{2}\right) \sum_{l = 1}^{L} \frac{\beta_L}{\beta_l}\,,
        \label{eqn:def_B}
\end{align}
where $n$ is the training dataset size, and $B$ is a constant that only depends on the data dimension $d$, the number of classes $o$, the network depth $L$, the per-layer network width $\{ m_i \}_{i=1}^{L}$, and the per-layer variances $\{ \beta_i \}_{i=1}^L$ of the Gaussian initialization distribution.
\end{theorem}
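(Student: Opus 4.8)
The plan is to bound the expected squared gradient difference by reducing it, record-by-record, to an expected squared Jacobian norm at initialization, and then to evaluate that norm by forward/backward second-moment propagation through the ReLU layers. Since $\D$ and $\D'$ differ in a single record $(\x,\y)$ versus $(\x',\y')$, we have $\nabla\Loss(\param_t^{lin};\D)-\nabla\Loss(\param_t^{lin};\D')=\frac{1}{n}\bigl(\nabla_{\param}\ell(\bm f^{lin,0}_{\param_t}(\x);\y)-\nabla_{\param}\ell(\bm f^{lin,0}_{\param_t}(\x');\y')\bigr)$, so by $\lVert a-b\rVert_2^2\le 2\lVert a\rVert_2^2+2\lVert b\rVert_2^2$ it suffices to show the per-example bound $\mathbb{E}\bigl[\lVert\nabla_{\param}\ell(\bm f^{lin,0}_{\param_t}(\x);\y)\rVert_2^2\bigr]\le B$, which then yields $\tfrac{2}{n^2}(B+B)=\tfrac{4B}{n^2}$. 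Here the linearization is essential: the Jacobian $J(\x):=\frac{\partial\out_{\param}(\x)}{\partial\param}\big|_{\param_0^{lin}}$ is \emph{frozen at initialization}, so $\nabla_{\param}\ell=J(\x)^\top\bm g_t$ with $\bm g_t:=\nabla_{\bm f}\ell(\bm f^{lin,0}_{\param_t}(\x);\y)$, and the only $t$-dependence lives in $\bm g_t$. The output-space gradient of the logistic (resp.\ cross-entropy) loss is uniformly bounded, $\lVert\bm g_t\rVert_2\le 1$, giving the uniform-in-$t$ estimate $\lVert J(\x)^\top\bm g_t\rVert_2^2\le\lVert\bm g_t\rVert_2^2\,\lVert J(\x)\rVert_F^2\le\lVert J(\x)\rVert_F^2$. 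It remains to prove $\mathbb{E}\bigl[\lVert J(\x)\rVert_F^2\bigr]\le B$ over the random Gaussian initialization \eqref{eqn:init}.

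Next I would expand the Frobenius norm layer by layer using backpropagation: $\lVert J(\x)\rVert_F^2=\sum_{k=1}^{o}\sum_{l=1}^{L}\lVert\nabla_{\bm W_l}f_k(\x)\rVert_F^2$, and since $\nabla_{\bm W_l}f_k=\bm b_l^{(k)}\bm h_{l-1}^\top$ is an outer product, $\lVert\nabla_{\bm W_l}f_k\rVert_F^2=\lVert\bm b_l^{(k)}\rVert_2^2\,\lVert\bm h_{l-1}\rVert_2^2$. Here $\bm h_{l-1}$ is the forward post-activation and $\bm b_l^{(k)}=\bm D_l\bm W_{l+1}^\top\bm D_{l+1}\cdots\bm D_{L-1}\bm W_L^\top\bm e_k$ is the backpropagated signal, with $\bm D_l=\diag(\phi'(\bm W_l\bm h_{l-1}))$ a $0/1$ ReLU mask. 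The forward moment is clean: conditioning on $\bm h_{l-1}$, each pre-activation coordinate is a centered Gaussian, and the symmetry of $z^2\mathbb{1}[z>0]$ versus $z^2\mathbb{1}[z<0]$ gives $\mathbb{E}[\lVert\bm h_l\rVert_2^2\mid\bm h_{l-1}]=\tfrac{m_l\beta_l}{2}\lVert\bm h_{l-1}\rVert_2^2$, so by iterating $\mathbb{E}[\lVert\bm h_{l-1}\rVert_2^2]=\lVert\x\rVert_2^2\prod_{i=1}^{l-1}\tfrac{m_i\beta_i}{2}$.

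The \textbf{main obstacle} is the backward moment $\mathbb{E}[\lVert\bm b_l^{(k)}\rVert_2^2]$, because $\bm b_l^{(k)}$ couples each weight matrix $\bm W_j$ with the mask $\bm D_j$ that depends on the \emph{same} $\bm W_j$; the naive factorization (each layer contributes $m\beta$, each ReLU contributes $\tfrac12$) is therefore not automatic. I would resolve this by an induction from layer $L$ down to $l$, peeling one layer at a time: writing $\bm b_j=\bm D_j\bm W_{j+1}^\top\bm b_{j+1}$, expanding the squared norm over the independent rows of $\bm W_{j+1}$ (whose cross terms vanish after taking expectation over the outer, independent factor), and then invoking the sign-flip symmetry $\bm w\mapsto-\bm w$ of each Gaussian row. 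This symmetry flips the ReLU indicator $\mathbb{1}[\bm w^\top\bm h>0]$ while leaving the quadratic form $\lVert\bm D\,\bm w\rVert_2^2$ invariant, which decouples the mask from the weight norm and extracts the factor $\tfrac12$ rigorously despite the dependence. The result is $\mathbb{E}[\lVert\bm b_l^{(k)}\rVert_2^2]=\prod_{i=l}^{L-1}\tfrac{m_i\beta_{i+1}}{2}$ (and $=1$ for $l=L$); the same conditioning (freeze $\bm h_{l-1}$ via $\bm W_{1:l-1}$, then run the backward recursion) decouples $\lVert\bm b_l^{(k)}\rVert_2^2$ from $\lVert\bm h_{l-1}\rVert_2^2$ in expectation.

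Finally I would multiply the two moments and sum. For $l\le L-1$ the forward and backward products telescope into $\mathbb{E}[\lVert\nabla_{\bm W_l}f_k\rVert_F^2]=\lVert\x\rVert_2^2\cdot\tfrac{1}{2^{L-1}}\bigl(\prod_{i=1}^{L-1}m_i\bigr)\tfrac{\prod_{i=1}^{L}\beta_i}{\beta_l}$, and the $l=L$ term fits the same expression with $\beta_L$ in the denominator. Summing over $l=1,\dots,L$ and $k=1,\dots,o$, and applying $\lVert\x\rVert_2^2\le d$ from \cref{assum:data}, collapses the bound to $do\cdot\tfrac{1}{2^{L-1}}\bigl(\prod_{i=1}^{L-1}m_i\bigr)\bigl(\prod_{i=1}^{L}\beta_i\bigr)\sum_{l=1}^{L}\tfrac{1}{\beta_l}=d\,o\,\bigl(\prod_{i=1}^{L-1}\tfrac{\beta_im_i}{2}\bigr)\sum_{l=1}^{L}\tfrac{\beta_L}{\beta_l}=B$, which is exactly $\mathbb{E}[\lVert J(\x)\rVert_F^2]\le B$. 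Combining with the record-wise reduction of the first paragraph gives the stated $\tfrac{4B}{n^2}$, uniformly in $t$.
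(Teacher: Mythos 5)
Your proposal is correct and takes essentially the same route as the paper's proof: the same reduction of the single-record gradient difference to the per-example Jacobian Frobenius norm frozen at initialization (using linearization and the bounded output-space loss gradient to get uniformity in $t$, exactly as in \cref{proof:grad_init}), followed by the same forward/backward second-moment propagation in which the mask--weight coupling is broken by the sign-flip symmetry of \cref{lem:end_decompose} and \cref{lem:recursive_chi_squared}. Your bookkeeping via the backward recursion $\mathbb{E}[\lVert\bm b_l^{(k)}\rVert_2^2]=\prod_{i=l}^{L-1}\tfrac{m_i\beta_{i+1}}{2}$ multiplied by the forward moment is only a cosmetic reindexing of the paper's telescoping ratios $\lVert\bm t_l^{l'+1}\rVert_F^2/\lVert\bm t_l^{l'}\rVert_F^2$ in \cref{lem:grad_init_app}, and all intermediate constants and the final $B$ and $\tfrac{4B}{n^2}$ agree with the paper's.
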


\cref{lem:grad_init} provides a precise analytical upper bound for the gradient difference during training linearized network, by tracking the gradient distribution for fully connected feed-forward ReLU network with  Gaussian weight matrices. Our proof borrows some techniques from~\cite{allen2019convergence,zhu2022robustness} for computing the gradient distribution, 
refer to \cref{app:proof_dimension_dependency_linearized} and \ref{proof:grad_init} for the full proofs.
By plugging~\cref{eqn:def_B} into \cref{thm:new_compose}, we obtain the following KL privacy bound for training a linearized network.

\begin{corollary}[KL privacy bound for training linearized network]
    Under \cref{assum:data} and neural networks~\eqref{eqn:network_linearized} initialized by Gaussian distribution with per-layer variance $\{ \beta_i \}_{i=1}^L$, running Langevin diffusion for linearized network with time $T$ on any neighboring datasets satisfies that 
    \begin{align}
        \mathrm{KL}(\param_{[0:T]}^{lin}\lVert {\param'}_{[0:T]}^{lin}) \leq \frac{2BT}{n^2\sigma^2}\,,
        \label{eqn:kl_linearized_composition_bound}
    \end{align}
    where $B$ is the constant that specifies the gradient norm upper bound, given by~\cref{eqn:def_B}.
    \label{cor:drift_diff_linearized_training}
\end{corollary}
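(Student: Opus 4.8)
The plan is to obtain this corollary as an immediate composition of the two preceding results: the exact KL identity of \cref{thm:new_compose} and the uniform-in-time gradient-difference bound of \cref{lem:grad_init}. No new estimate is required; all the technical work has already been front-loaded into establishing the constant $B$ in \eqref{eqn:def_B}, and what remains is to substitute and integrate.

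First I would verify that \cref{thm:new_compose} applies to the linearized network. Although that theorem is phrased for the fully connected DNN \eqref{eq:network}, its proof rests only on Girsanov's theorem applied to the Langevin SDE \eqref{eqn:langevin}, whose drift is the negative gradient of whatever empirical loss is being optimized. Running Langevin diffusion on the linearized objective $\Loss^{lin}_0$ therefore yields the identical equality with the linearized loss in place of $\Loss$, namely
\begin{equation}
\mathrm{KL}(\param_{[0:T]}^{lin}\lVert {\param'}_{[0:T]}^{lin}) = \frac{1}{2\sig^2}\int_0^T \mathbb{E}\left[\left\lVert \nabla \Loss(\param_t^{lin};\D) - \nabla \Loss(\param_t^{lin};\D')\right\rVert_2^2\right]\mathrm{d}t .
\end{equation}
The only regularity point to check is that the integrand is finite for every $t$, which is guaranteed by the uniform bound invoked next.

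Second, I would apply \cref{lem:grad_init}, which states that for every $t\in[0,T]$ the expected squared gradient difference is at most $4B/n^2$, with $B$ defined in \eqref{eqn:def_B}. Crucially, this bound is \emph{uniform} in $t$: the constant $B$ depends only on $d$, $o$, $L$, the widths $\{m_i\}$ and the variances $\{\beta_i\}$, none of which vary along the trajectory. Hence it can be pulled out of the time integral, turning the integration into a trivial multiplication by $T$,
\begin{equation}
\mathrm{KL}(\param_{[0:T]}^{lin}\lVert {\param'}_{[0:T]}^{lin}) \le \frac{1}{2\sig^2}\int_0^T \frac{4B}{n^2}\,\mathrm{d}t = \frac{1}{2\sig^2}\cdot\frac{4B}{n^2}\cdot T = \frac{2BT}{n^2\sig^2},
\end{equation}
which is exactly \eqref{eqn:kl_linearized_composition_bound}.

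There is essentially no genuine obstacle in the corollary itself; the difficulty lives entirely in \cref{lem:grad_init}, whose proof must track the distribution of gradients of the linearized ReLU network under Gaussian weights to produce the closed-form constant $B$. The two points I would make sure to justify explicitly are (i) the transfer of \cref{thm:new_compose} from the DNN loss to the linearized loss, which holds because the Girsanov argument is agnostic to the particular form of the drift, and (ii) the observation that the uniform-in-$t$ character of \cref{lem:grad_init} is precisely what allows the time integral to collapse to a factor of $T$.
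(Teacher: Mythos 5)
Your proposal is correct and follows exactly the paper's route: the authors obtain \cref{cor:drift_diff_linearized_training} precisely by plugging the uniform-in-time bound $4B/n^2$ from \cref{lem:grad_init} into the KL identity of \cref{thm:new_compose} and integrating over $[0,T]$. Your two explicit justifications (the transfer of the Girsanov argument to the linearized drift, and the uniformity in $t$ that collapses the integral) are sound and merely make explicit what the paper leaves implicit.
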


\paragraphbe{Over-parameterization affects privacy differently under different initialization} \cref{cor:drift_diff_linearized_training} and \cref{lem:grad_init} prove the role of over-parameterization in our KL privacy bound, crucially depending on how the per-layer Gaussian initialization variance $\beta_i$ scales with the per-layer network width $m_i$ and depth $L$. We summarize our KL privacy bound for the linearized network under different width, depth and  initialization schemes in \cref{tab:init_utility}, and elaborate the comparison below. 

{\bfseries (1) LeCun initialization} uses small, width-independent variance for initializing the first layer $\beta_1 = \frac{1}{d}$ (where $d$ is the number of input features), and width-dependent variance $\beta_2 = \cdots = \beta_L = \frac{1}{m}$ for initializing all the subsequent layers. Therefore, the second term $\sum_{l=1}^L\frac{\beta_L}{\beta_l}$ in the constant $B$ of \cref{eqn:def_B} increases linearly with the width $m$ and depth $L$. However, due to $\frac{m_l\cdot \beta_l}{2} <1$ for all $l=2, \cdots, L$, the first product term $\prod_{l=1}^{L-1}\frac{\beta_lm_l}{2}$ in constant $B$ decays with the increasing depth. Therefore, by combining the two terms, we prove that the KL privacy bound worsens with increasing width, but improves with increasing depth (as long as the depth is large enough). Similarly, under {\bfseries Xavier initialization} $\beta_l = \frac{2}{m_{l-1} + m_{l}}$, we prove that the KL privacy bound (especially the constant $B$ \eqref{eqn:def_B}) improves with increasing depth as long as the depth is large enough.

{\bfseries (2) NTK and He initializations} use large per-layer variance $\beta_l = \begin{cases}\frac{2}{m_l}& l =1,\cdots, L-1\\
\frac{1}{o}& l = L\end{cases}$ (for NTK) and $\beta_l=\frac{2}{m_{l-1}}$ (for He). Consequently, the gradient difference under NTK or He initialization is significantly larger than that under LeCun initialization. Specifically, the gradient norm constant $B$ in \cref{eqn:def_B} grows linearly with the width $m$ and the depth $L$ under He and NTK initializations, thus indicating a worsening of KL privacy bound under increasing width and depth.
\section{Numerical validation of our KL privacy bounds}
\label{ssec:numerical_DNN}
To understand the relation between privacy and over-parameterization in \textit{practical} DNNs training (and to validate our KL privacy bounds \cref{lem:grad_noisy_train} and \cref{cor:drift_diff_linearized_training}), we perform experiments for DNNs training via noisy GD to numerically estimate the KL privacy loss.  We will show that if the total training time is small, it is indeed possible to obtain numerical KL privacy bound estimates that does not grow with the total number of parameter (under carefully chosen initialization distributions).

\setlength{\intextsep}{1pt}
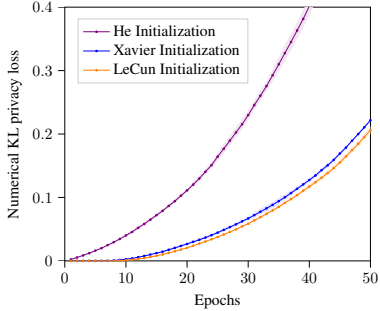
\begin{wrapfigure}{R}{0.37\textwidth}
    \resizebox{0.37\textwidth}{!}{
        \begin{tikzpicture}
            \begin{axis}[
                    table/col sep=comma,
                    legend cell align={left},
                    legend style={
                    fill opacity=0.8,
                    draw opacity=1,
                    text opacity=1,
                    at={(0.6,0.7)},
                    anchor=south east,
                    draw=white!80!black,
                    },
                    tick align=outside,
                    tick pos=left,
                    x grid style={white!70!black},
                    xlabel={Epochs},
                    xtick style={color=black},
                    xmin=0, xmax=50,
                    ymin=0, ymax=0.4,
                    y grid style={white!70!black},
                    ylabel={Numerical KL privacy loss},
                    ytick style={color=black},
                    ylabel near ticks, ylabel shift={0pt}
                 ]        
                \addplot+[violet, mark=*, mark size=0.5, mark options={solid,fill=violet}] table[x=epochs,y=kl_means] {./data/DNN_larger_epochs_50/privacy_accuracy_dynamics_width_1024_depth_10_init_He.csv};
                \addlegendentry{He Initialization}
                
                \addplot+[blue, mark=*, mark size=0.5, mark options={solid,fill=blue}] table[x=epochs,y=kl_means] {./data/DNN_larger_epochs_50/privacy_accuracy_dynamics_width_1024_depth_10_init_Xavier.csv};
                \addlegendentry{Xavier Initialization}

                \addplot+[orange, mark=*, mark size=0.5, mark options={solid,fill=orange}] table[x=epochs,y=kl_means] {./data/DNN_larger_epochs_50/privacy_accuracy_dynamics_width_1024_depth_10_init_LeCun.csv};
                \addlegendentry{LeCun Initialization}
                
                \addplot [name path=upper,draw=none] table[x=epochs,y expr=\thisrow{kl_means} + \thisrow{kl_stds}] {./data/DNN_larger_epochs_50/privacy_accuracy_dynamics_width_1024_depth_10_init_He.csv};
                \addplot [name path=lower,draw=none] table[x=epochs,y expr=\thisrow{kl_means} - \thisrow{kl_stds}] {./data/DNN_larger_epochs_50/privacy_accuracy_dynamics_width_1024_depth_10_init_He.csv};
                \addplot [fill=violet!10] fill between[of=upper and lower];

                \addplot [name path=upper,draw=none] table[x=epochs,y expr=\thisrow{kl_means} + \thisrow{kl_stds}] {./data/DNN_larger_epochs_50/privacy_accuracy_dynamics_width_1024_depth_10_init_Xavier.csv};
                \addplot [name path=lower,draw=none] table[x=epochs,y expr=\thisrow{kl_means} - \thisrow{kl_stds}] {./data/DNN_larger_epochs_50/privacy_accuracy_dynamics_width_1024_depth_10_init_Xavier.csv};
                \addplot [fill=blue!10] fill between[of=upper and lower];
                
                \addplot [name path=upper,draw=none] table[x=epochs,y expr=\thisrow{kl_means} + \thisrow{kl_stds}] {./data/DNN_larger_epochs_50/privacy_accuracy_dynamics_width_1024_depth_10_init_LeCun.csv};
                \addplot [name path=lower,draw=none] table[x=epochs,y expr=\thisrow{kl_means} - \thisrow{kl_stds}] {./data/DNN_larger_epochs_50/privacy_accuracy_dynamics_width_1024_depth_10_init_LeCun.csv};
                \addplot [fill=orange!10] fill between[of=upper and lower];
            \end{axis}
        \end{tikzpicture}
    }
    \vspace*{-6mm}
    \caption{Numerically estimated KL privacy loss for noisy GD with constant step-size $0.001$ on deep neural network with width $1024$ and depth $10$. We report the mean and standard deviation across $6$ training runs, taking worst-case over all neighboring datasets. The numerical KL privacy loss grows with the number of training epochs under all initializations. The growth rate is close to linear at beginning of training (epochs $<10$) and is faster than linear at epochs $\geq 10$.}
    \label{fig:kl_time}
    \vspace*{-4mm}
\end{wrapfigure}
\paragraphbe{Numerical estimation procedure} \cref{thm:new_compose} proves that the exact KL privacy loss scales with the expectation of squared gradient norm during training. This could be estimated by empirically average of gradient norm across training runs. For training dataset $\D$, we consider all 'car' and 'plane' images of the CIFAR-10. For neighboring dataset, we consider all possible $\D'$ that removes a record from $\D$, or adds a test record to $\D$, i.e., the standard "add-or remove-one" neighboring notion~\cite{abadi2016deep}. We run noisy gradient descent with constant step-size $0.01$ for $50$ epochs on both datasets.

\paragraphbe{Numerically validate the growth of KL privacy loss with regard to training time} \cref{fig:kl_time} shows numerical KL privacy loss under different initializations, for fully connected networks with width $1024$ and depth $10$.  We observe that the KL privacy loss grows linearly at the beginning of training ($<10$ epochs), which validates the first and third term in the KL privacy bound \cref{lem:grad_noisy_train}. Moreover, the KL privacy loss under LeCun and Xavier initialization is close to zero at the beginning of training ($<10$ epochs). This shows LeCun and Xavier initialization induce small gradient norm at small training time, which is consistent with \cref{lem:grad_init}. However, when the number of epochs is large, the numerical KL privacy loss grows faster than linear accumulation under all initializations, thus validating the second term in \cref{lem:grad_noisy_train}.
\vspace{1mm}

\begin{figure}[h]
    \centering
    \begin{subfigure}[b]{0.32\textwidth}
        \centering
        \resizebox{\textwidth}{!}{
\begin{tikzpicture}
    \begin{axis}[
    table/col sep=comma,
    legend cell align={left},
    tick align=outside,
    tick pos=left,
    x grid style={white!70!black},
    xlabel={depth},
    ymin =0,
    ymax=0.28,
    xtick style={color=black},
    y grid style={white!70!black},
    ylabel={Numerical KL privacy loss},
    ytick style={color=black},
    ytick = {0.1, 0.2},
    ylabel near ticks, ylabel shift={0pt}
    ]
    
    \addplot [violet, mark=*, mark size=1, mark options={solid,fill=violet}, error bars/.cd, 
    y fixed,
    y dir=both, 
    y explicit] table[x=depth,y=privacy_means, y error = privacy_stds] {./data/DNN_larger_epochs_20/last_privacy_accuracy_vs_depth_init=He_width=1024.csv};
    \addlegendentry{He Initialization}

    \addplot [blue, mark=*, mark size=1, mark options={solid,fill=blue}, error bars/.cd, 
    y fixed,
    y dir=both, 
    y explicit] table[x=depth,y=privacy_means, y error = privacy_stds] {./data/DNN_larger_epochs_20/last_privacy_accuracy_vs_depth_init=Xavier_width=1024.csv};
    \addlegendentry{Xavier Initialization}

    \addplot [orange, mark=*, mark size=1, mark options={solid,fill=brown}, error bars/.cd, 
    y fixed,
    y dir=both, 
    y explicit] table[x=depth,y=privacy_means, y error = privacy_stds] {./data/DNN_larger_epochs_20/last_privacy_accuracy_vs_depth_init=LeCun_width=1024.csv};
    \addlegendentry{LeCun Initialization}

    \end{axis}

\end{tikzpicture}
}
        \caption{Privacy vs depth (20 epochs, $\sigma = 0.01$, width = $1024$)}
        \label{fig:privacy_depth_early_training}
    \end{subfigure}
    \hfill
    \begin{subfigure}[b]{0.3\textwidth}
        \centering
        \resizebox{\textwidth}{!}{
\begin{tikzpicture}
    \begin{axis}[
    table/col sep=comma,
    legend cell align={left},
    tick align=outside,
    tick pos=left,
    x grid style={white!70!black},
    xlabel={depth},
    ymin =0,
    ymax=0.28,
    xtick style={color=black},
    y grid style={white!70!black},
    ytick style={color=black},
    ytick = {0.1, 0.2},
    ylabel near ticks, ylabel shift={0pt}
    ]

    \addplot [blue, mark=*, mark size=1, mark options={solid,fill=blue}, error bars/.cd, 
    y fixed,
    y dir=both, 
    y explicit] table[x=depth,y=privacy_means, y error = privacy_stds] {./data/DNN_larger_epochs_50/last_privacy_accuracy_vs_depth_init=Xavier_width=1024.csv};
    
    \addplot [orange, mark=*, mark size=1, mark options={solid,fill=brown}, error bars/.cd, 
    y fixed,
    y dir=both, 
    y explicit] table[x=depth,y=privacy_means, y error = privacy_stds] {./data/DNN_larger_epochs_50/last_privacy_accuracy_vs_depth_init=LeCun_width=1024.csv};
    
    \addplot [violet, mark=*, mark size=1, mark options={solid,fill=violet}, error bars/.cd, 
    y fixed,
    y dir=both, 
    y explicit] table[x=depth,y=privacy_means, y error = privacy_stds] {./data/DNN_larger_epochs_50/last_privacy_accuracy_vs_depth_init=He_width=1024.csv};
    
    \end{axis}

\end{tikzpicture}
}
        \caption{Privacy vs depth (50 epochs, $\sigma = 0.01$, width = $1024$)}
        \label{fig:privacy_depth_late_training}
    \end{subfigure}
    \hfill
    \begin{subfigure}[b]{0.3\textwidth}
        \centering
        \resizebox{\textwidth}{!}{
\begin{tikzpicture}
    \begin{axis}[
    table/col sep=comma,
    legend cell align={left},
    tick align=outside,
    tick pos=left,
    x grid style={white!70!black},
    xlabel={width},
    ymin =0,
    ymax=0.28,
    xtick style={color=black},
    y grid style={white!70!black},
    ytick style={color=black},
    ytick = {0.1, 0.2},
    ylabel near ticks, ylabel shift={0pt}
    ]

    \addplot [blue, mark=*, mark size=1, mark options={solid,fill=blue}, error bars/.cd, 
    y fixed,
    y dir=both, 
    y explicit] table[x=width,y=privacy_means, y error = privacy_stds] {./data/DNN_larger_epochs_50/last_privacy_accuracy_vs_width_init=Xavier_depth=10.csv};
    
    \addplot [orange, mark=*, mark size=1, mark options={solid,fill=brown}, error bars/.cd, 
    y fixed,
    y dir=both, 
    y explicit] table[x=width,y=privacy_means, y error = privacy_stds] {./data/DNN_larger_epochs_50/last_privacy_accuracy_vs_width_init=LeCun_depth=10.csv};
    
    \addplot [violet, mark=*, mark size=1, mark options={solid,fill=violet}, error bars/.cd, 
    y fixed,
    y dir=both, 
    y explicit] table[x=width,y=privacy_means, y error = privacy_stds] {./data/DNN_larger_epochs_50/last_privacy_accuracy_vs_width_init=He_depth=10.csv};
    
    \end{axis}

\end{tikzpicture}
}
        \caption{Privacy vs width (50 epochs, $\sigma = 0.01$, depth = $10$)}
        \label{fig:privacy_width}
    \end{subfigure}
    \caption{Numerically estimated KL privacy loss for noisy GD with constant step-size on fully connected ReLU network with different width, depth and initializations. We report the mean and standard deviation across $6$ training runs, taking worst-case over all neighboring datasets. Under increasing width, the KL privacy loss always grows under all evaluated initializations. Under increasing depth, at the beginning of training (20 epochs), the KL privacy loss worsens with depth under He initialization, but first worsens with depth ($\leq 8$) and then improves with depth ($\geq 8$) under Xavier and LeCun initializations. At later phases of the training (50 epochs), KL privacy worsens (increases) with depth under all evaluated initializations.}
    \label{fig:mlp_depth}
\end{figure}
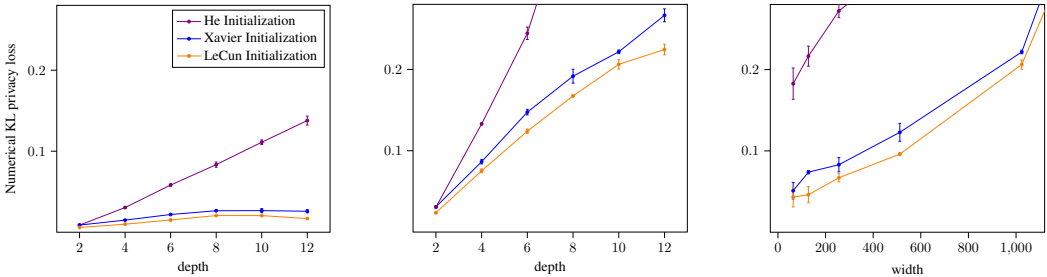

\paragraphbe{Numerically validate the dependency of KL privacy loss on network width, depth and initializations} \cref{fig:mlp_depth} shows the numerical KL privacy loss under different network depth, width and initializations, for a fixed training time. In \cref{fig:privacy_width}, we observe that increasing width and training time always increases KL privacy loss. This is consistent with \cref{lem:grad_init}, which shows that increasing width worsens the gradient norm at initialization (given fixed depth), thus harming KL privacy bound \cref{lem:grad_noisy_train} at the beginning of training. We also observe that the relationship between KL privacy and network depth depends on the initialization distributions and the training time.  Specifically, in \cref{fig:privacy_depth_early_training}, when the training time is small (20 epochs), for LeCun and Xavier initializations, the numerical KL privacy loss improves with increasing depth when depth $>8$. Meanwhile, when the training time is large (50 epochs) in \cref{fig:privacy_depth_late_training}, KL privacy loss worsens with increasing depth under all intializations. This shows that given small training time, the choice of initialization distribution affects the dependency of KL privacy loss on increasing depth, thus validating~\cref{lem:grad_noisy_train} and~\cref{lem:grad_init}.
\section{Utility guarantees for Training Linearized Network}

\label{sec:opt_linearized}

Our privacy analysis suggests that training linearized network under certain initialization schemes (such as LeCun initialization) allows for significantly better privacy bounds under over-parameterization by increasing depth. In this section, we further prove utility bounds for Langevin diffusion under initialization schemes and investigate the effect of over-parameterization on the privacy utility trade-off. In other words, we aim to understand whether there is any utility degradation for training linearized networks when using the more privacy-preserving initialization schemes.

\paragraphbe{Convergence of training linearized network} We now prove convergence of the excess empirical risk in training linearized network via Langevin diffusion. This is a well-studied problem in the literature for noisy gradient descent. We extend the convergence theorem to continuous-time Langevin diffusion below and investigate factors that affect the convergence under over-parameterization.
The proof is deferred to \cref{app:proof_opt_linearized_avg_iterate}.

\begin{lemma}[{Extension of \cite[Theorem 2]{shamir2013stochastic} and \cite[Theorem 3.1]{song2021evading}}]
    \label{prop:extension_glm_evading}
    Let $\Loss_0^{lin}(\param; \D)$ be the empirical risk function of a linearized network in \cref{eqn:network_linearized} expanded at initialization vector $\param_0^{lin}$. Let $\param^{*}_{0}$ be an $\alpha$-near-optimal solution for the ERM problem such that $\Loss_0^{lin}(\param_0^*; \D) - \min_{\param}\Loss_0^{lin}(\param; \D)\leq \alpha$. Let $\D = \{ \x_i \}_{i=1}^n$ be an arbitrary training dataset of size $n$, and denote $M_0 = \begin{pmatrix}\nabla\out_{\param_0^{lin}}(\x_1), \cdots, \nabla\out_{\param_0^{lin}}(\x_n)\end{pmatrix}^\top$ as the NTK feature matrix at initialization. Then running Langevin diffusion~\eqref{eqn:langevin} on $\Loss_0^{lin}(\param)$ with time $T$ and initialization vector $\param_0^{lin}$ satisfies
    \begin{align}
        \mathbb{E} [\Loss_0^{lin}(\bar{\param}_{T}^{lin})] - \min_{\param} \Loss_0^{lin}(\param; \D) &\leq \alpha + \frac{R}{2T} + \frac{1}{2}\sigma^2 rank(M_0)\,,\nonumber
    \end{align}
    where the expectation is over Brownian motion $B_T$ in Langevin diffusion in \cref{eqn:langevin}, $\bar{\param}_T^{lin} = \frac{1}{T}\int \bar{\param}_t^{lin} \mathrm{d}t$ is the average of all iterates,   and $R= \lVert \param_0^{lin} - \param_0^*\rVert_{M_0}^2$ is the gap between initialization parameters $\param_0^{lin}$ and solution $\param^*_{0}$.
\end{lemma}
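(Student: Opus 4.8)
The plan is to cast this as the continuous-time counterpart of the classical averaged-iterate regret bound for stochastic gradient methods on a convex objective, exploiting two structural features of the linearized model. First, $\bm f^{lin,0}_{\param}(\x)$ is affine in $\param$ and the logistic/cross-entropy loss is convex in the network output, so $\Loss_0^{lin}(\cdot;\D)$ is convex in $\param$; this lets me replace the suboptimality gap by an inner product of the gradient with $\param_t^{lin}-\param_0^*$. Second, $\nabla\Loss_0^{lin}(\param)$ is always a linear combination of the NTK features $\nabla\out_{\param_0^{lin}}(\x_i)$ and hence lies in the row space $M_0$ of the feature matrix; this confinement is exactly what makes the injected-noise contribution scale with $\text{rank}(M_0)$ instead of the ambient parameter count, and it is why distances are measured in the seminorm $\lVert\cdot\rVert_{M_0}$.

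Concretely, I would take the potential $V_t=\tfrac12\lVert\param_t^{lin}-\param_0^*\rVert_{M_0}^2$, i.e. $\tfrac12(\param_t^{lin}-\param_0^*)^\top P(\param_t^{lin}-\param_0^*)$ with $P$ the orthogonal projector onto $M_0$, and apply It\^o's formula along the SDE \cref{eqn:langevin}. The first-order term yields drift $-\langle P(\param_t^{lin}-\param_0^*),\nabla\Loss_0^{lin}(\param_t^{lin})\rangle\,\mathrm dt$, and since the gradient already lies in $M_0$ the projector is inert, giving $-\langle\param_t^{lin}-\param_0^*,\nabla\Loss_0^{lin}(\param_t^{lin})\rangle\,\mathrm dt$; the second-order It\^o term contributes $\tfrac12\sigma^2\,\mathrm{tr}(P)=\tfrac12\sigma^2\,\text{rank}(M_0)$ (the factor $\tfrac12$ from It\^o's formula and $\sigma^2$ the per-dimensional noise variance); and the stochastic part is an integral against $\brownian_t$. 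Using convexity, $\langle\nabla\Loss_0^{lin}(\param_t^{lin}),\param_t^{lin}-\param_0^*\rangle\ge \Loss_0^{lin}(\param_t^{lin};\D)-\Loss_0^{lin}(\param_0^*;\D)$, and taking expectations so that the martingale term drops out, I obtain
\[
\mathbb E\big[\Loss_0^{lin}(\param_t^{lin};\D)-\Loss_0^{lin}(\param_0^*;\D)\big]\,\mathrm dt \le -\,\mathrm d\,\mathbb E[V_t] + \tfrac12\sigma^2\,\text{rank}(M_0)\,\mathrm dt.
\]
Integrating over $[0,T]$, discarding the nonnegative $\mathbb E[V_T]$, and using $V_0=\tfrac12 R$ then gives $\tfrac1T\int_0^T\mathbb E[\Loss_0^{lin}(\param_t^{lin};\D)]\,\mathrm dt-\Loss_0^{lin}(\param_0^*;\D)\le \tfrac{R}{2T}+\tfrac12\sigma^2\,\text{rank}(M_0)$.

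Finally, convexity gives Jensen's inequality $\Loss_0^{lin}(\bar\param_T^{lin};\D)\le \tfrac1T\int_0^T\Loss_0^{lin}(\param_t^{lin};\D)\,\mathrm dt$ for the averaged iterate, and the $\alpha$-near-optimality $\Loss_0^{lin}(\param_0^*;\D)\le \min_{\param}\Loss_0^{lin}(\param;\D)+\alpha$ converts the reference point into the true minimum, yielding the claimed bound. I expect the main obstacle to be the continuous-time stochastic calculus rather than the convex-analytic skeleton: I must justify that the stochastic integral against $\brownian_t$ is a genuine martingale (so its expectation vanishes), which requires an integrability estimate such as $\mathbb E\int_0^T\lVert P(\param_t^{lin}-\param_0^*)\rVert_2^2\,\mathrm dt<\infty$, and I must carefully establish the invariance claim that every gradient along the trajectory stays in $M_0$ so that $\mathrm{tr}(P)$ reduces to $\text{rank}(M_0)$ and not the full parameter count. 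This last step is precisely what delivers the dimension-free (rank-dependent) noise term and is the crux of connecting the utility analysis to the over-parameterization story.
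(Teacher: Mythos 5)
Your proposal is correct and is essentially the paper's own proof (\cref{app:proof_opt_linearized_avg_iterate}): convexity of $\Loss_0^{lin}$ plus Jensen for the averaged iterate, confinement of $\nabla\Loss_0^{lin}$ to the span of the NTK features so that the projector $\Pi_{M_0}$ acts trivially on the drift and the injected noise contributes only through $\mathrm{tr}(\Pi_{M_0})=\mathrm{rank}(M_0)$, an It\^o/generator computation for the potential $\lVert\param_t^{lin}-\param_0^*\rVert_{M_0}^2$, integration over $[0,T]$ discarding the nonnegative terminal term, and finally the $\alpha$-near-optimality of $\param_0^*$. One bookkeeping note: since \cref{eqn:langevin} has noise amplitude $\sqrt{2\sigma^2}$, the strict It\^o correction for your $V_t=\tfrac12\lVert\param_t^{lin}-\param_0^*\rVert_{M_0}^2$ is $\tfrac12\cdot 2\sigma^2\,\mathrm{tr}(\Pi_{M_0})=\sigma^2\,\mathrm{rank}(M_0)$ rather than your claimed $\tfrac12\sigma^2\,\mathrm{rank}(M_0)$ --- but the paper's derivation silently drops the same factor of two (between \cref{eqn:dW_partial_pt}, where $\nabla^2\Phi=2\Pi_{M_0}$ gives $2\sigma^2\,\mathrm{rank}(M_0)$, and \cref{eqn:potential_param_diff}, which records $\sigma^2\,\mathrm{rank}(M_0)$), so your final constant matches the lemma exactly as stated.
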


\begin{remark}
    The excess empirical risk bound in~\cref{prop:extension_glm_evading} is smaller if data is low-rank, e.g., image data, then $\text{rank}(M_0)$ is small.  This is consistent with the prior dimension-independent private learning literature~\cite{kairouz2021nearly,kasiviswanathan2021sgd,li2022does} and shows the benefit of low-dimensional gradients on private learning.
\end{remark}

\cref{prop:extension_glm_evading} highlights that the excess empirical risk scales with the  gap $R$ between initialization and solution (denoted as lazy training distance), the rank of the gradient subspace, and the constant $B$ that specifies upper bound for expected gradient norm during training. Specifically, the smaller the lazy training distance $R$ is, the better is the excess risk bound given fixed training time $T$ and noise variance $\sigma^2$. We have discussed how over-parameterization affects the gradient norm constant $B$ and the gradient subspace rank $\text{rank}(M_0)$ in \cref{sec:dp_deep}. Therefore, we only still need to investigate how the lazy training distance $R$ changes with the network width, depth, and initialization, as follows. 

\paragraphbe{Lazy training distance $R$ decreases with model over-parameterization} It is widely observed in the literature~\cite{chizat2019lazy,zou2020gradient,luo2021phase} that under appropriate choices of initializations, gradient descent on fully connected neural network falls under a lazy training regime. That is, with high probability, there exists a (nearly) optimal solution for the ERM problem that is close to the initialization parameters in terms of $\ell_2$ distance. Moreover, this lazy training distance $R$ is closely related to the smallest eigenvalue of the NTK matrix, and generally decreases as the model becomes increasingly overparameterized. In the following proposition, we compute a near-optimal solution via the pseudo inverse of the NTK matrix, and prove that it has small distance to the initialization parameters via existing lower bounds for the smallest eigenvalue of the NTK matrix~\cite{nguyen2021tight}.
\begin{lemma}
    [Bounding lazy training distance via smallest eigenvalue of the NTK matrix]
    Under \cref{assum:data_utility} and single-output linearized network~\cref{eqn:network_linearized} with $o=1$, assume that the per-layer network widths $m_0,\cdots,m_L = \tilde{\Omega}(n)$ are large. Let $\Loss_0^{lin}(\param)$ be the empirical risk \cref{eqn:obj} for linearized network expanded at initialization vector $\param_0^{lin}$. Then for any $\param_0^{lin}$, there exists a corresponding solution $\param_0^{\frac{1}{n^2}}$, s.t. $\Loss_0^{lin}(\param_0^{\frac{1}{n^2}}) - \min_{\param}\Loss_0^{lin}(\param; \D) \leq \frac{1}{n^2}$, $\text{rank}(M_0) = n$ and
    \begin{align}
        R & \leq \tilde{\mathcal{O}}\left(\max\left\{\frac{1}{d\beta_L\left(\prod_{i=1}^{L-1} \beta_im_i\right)}, 1\right\}\frac{n}{\sum_{l=1}^L\beta_l^{-1}}\right)\,,
        \label{eqn:def_R_tilde}
    \end{align}
    with high probability over training data sampling and random initialization \cref{eqn:init}, where $\tilde{\mathcal{O}}$ ignores logarithmic factors with regard to $n$, $m$, $L$, and tail probability $\delta$.
\label{prop:approximate_lazy_training}
\end{lemma}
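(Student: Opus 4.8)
The plan is to exploit that the linearized network is a \emph{linear} model in the shifted parameter $\bm v \coloneqq \param - \param_0^{lin}$ with a convex (logistic) loss, so that the whole problem collapses to linear algebra on the NTK Gram matrix $K \coloneqq M_0 M_0^\top \in \mathbb{R}^{n\times n}$. First I would record that under \cref{assum:data_utility} (i.i.d., non-degenerate, pairwise non-parallel data) and widths $\tilde\Omega(n)$, the smallest eigenvalue $\lambda_{\min}(K)$ is strictly positive with high probability; this is precisely the content of the NTK eigenvalue lower bound of \cite{nguyen2021tight}, adapted to the general per-layer variances $\{\beta_l\}$ of \eqref{eqn:init}. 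Positivity of $\lambda_{\min}(K)$ immediately gives $\mathrm{rank}(M_0)=n$, and since the $n$ gradient features $\nabla\out_{\param_0^{lin}}(\x_i)$ are then linearly independent, the linearized model can realize an arbitrary output vector in $\mathbb{R}^n$; in particular the infimum of $\Loss_0^{lin}(\cdot;\D)$ over $\param$ equals $0$.

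Second, I would construct the near-optimal solution explicitly. Write $\bm c \coloneqq (\out_{\param_0^{lin}}(\x_1),\dots,\out_{\param_0^{lin}}(\x_n))^\top$ for the outputs at initialization, and choose the target margin vector $\bm f^{*} = \tau\,\y$ with $\tau = \Theta(\log n)$ large enough that $\frac1n\sum_i \log(1+e^{-y_i f^{*}_i}) = \log(1+e^{-\tau}) \le 1/n^2$, which holds once $\tau \ge 2\log n$ since $y_i^2=1$. Let $\bm v$ be the minimum-$\ell_2$-norm solution of $M_0\bm v = \bm f^{*}-\bm c$, namely $\bm v = M_0^\top K^{-1}(\bm f^{*}-\bm c)$, and set $\param_0^{1/n^2} = \param_0^{lin}+\bm v$. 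By construction $\Loss_0^{lin}(\param_0^{1/n^2};\D)\le 1/n^2$, certifying $\tfrac1{n^2}$-near-optimality. Because $\bm v$ lies in the row space of $M_0$, the projection in the definition of $R$ is inactive, so
\begin{align}
R = \lVert \param_0^{lin}-\param_0^{1/n^2}\rVert_{M_0}^2 = \lVert \bm v\rVert_2^2 = (\bm f^{*}-\bm c)^\top K^{-1}(\bm f^{*}-\bm c) \le \frac{\lVert \bm f^{*}-\bm c\rVert_2^2}{\lambda_{\min}(K)}\,.\nonumber
\end{align}

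Third, I would bound the two factors separately. For the numerator, $\lVert \bm f^{*}-\bm c\rVert_2^2 \le 2\tau^2 n + 2\lVert\bm c\rVert_2^2$: the margin part is $\tilde{\mathcal{O}}(n)$ since $\tau=\Theta(\log n)$, while $\lVert\bm c\rVert_2^2=\sum_i\out_{\param_0^{lin}}(\x_i)^2$ is controlled by concentrating the initialization output, whose per-sample second moment is $\beta_L\prod_{i=1}^{L-1}\frac{\beta_i m_i}{2}\cdot d$ (the same forward-propagation bookkeeping that underlies \cref{lem:grad_init}). For the denominator I would lower bound $\lambda_{\min}(K)$ by tracking, layer by layer, how the variances $\beta_l$ rescale the forward features $\lVert \bm h_l\rVert^2$ and the backpropagated signals, and then applying the data-dependent conditioning estimate of \cite{nguyen2021tight} to the resulting rescaled kernel, yielding $\lambda_{\min}(K)=\tilde\Omega\big(\text{explicit function of }\beta_l,m_l,d\big)$. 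Substituting both estimates and simplifying — where the factor $\max\{1/(d\beta_L\prod_{i=1}^{L-1}\beta_i m_i),1\}$ in \eqref{eqn:def_R_tilde} arises from weighing the initialization-output contribution $\lVert\bm c\rVert_2^2$ against the pure margin contribution $\tau^2 n$ — reproduces the claimed bound, with $\mathrm{rank}(M_0)=n$ already supplied by the first step.

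The hard part will be the lower bound on $\lambda_{\min}(K)$ with the \emph{precise} per-layer-variance dependence. The eigenvalue estimate of \cite{nguyen2021tight} is stated for a single fixed normalization, so I must re-derive how the product $\prod_i \beta_i m_i$, the factor $d\beta_L$, and the harmonic-type sum $\sum_l \beta_l^{-1}$ each enter, and verify that this scaling combines with the numerator to give exactly $\max\{1/(d\beta_L\prod_{i=1}^{L-1}\beta_i m_i),1\}\cdot n/\sum_l\beta_l^{-1}$ rather than a looser dimension-dependent surrogate. In particular, keeping the accumulated powers of two straight across the $L$ layers is what ultimately separates the depth-improving initializations (LeCun, Xavier), where $\frac{\beta_l m_l}{2}<1$ shrinks the product, from the depth-worsening ones (He, NTK); a secondary subtlety is confirming that the minimum-norm solution $\bm v$, rather than a constrained (inequality-margin) solution, is already tight enough to yield the stated rate.
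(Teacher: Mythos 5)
Your proposal matches the paper's proof essentially step for step: the paper also constructs $\param_0^{1/n^2}$ as the pseudo-inverse (minimum-norm) solution interpolating the inflated targets $2\ln n\cdot y_i$ minus the initialization outputs, bounds $R\le \lVert \bm f^*-\bm c\rVert_2^2/\lambda_0$, invokes the smallest-eigenvalue bound of \cite[Theorem 4.1]{nguyen2021tight} (with auxiliary variables $\xi_l=1$, which already carries the general $\beta_l$-dependence you flagged as needing re-derivation), and controls $\lVert\bm c\rVert_2^2$ via a high-probability per-sample output bound of order $\tilde{\mathcal{O}}(d\,\beta_L\prod_{i=1}^{L-1}\beta_i m_i)$, exactly as you sketch. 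The only cosmetic differences are your margin parameter $\tau\ge 2\log n$ versus the paper's fixed $2\ln n$, and your observation that the $M_0$-projection is inactive versus the paper's direct H\"older bound with $\lVert M_0^\dagger\rVert_2^2$ — both immaterial.
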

The full proof is deferred to \cref{ssec:lazy_R_proof}. By using \cref{prop:approximate_lazy_training}, we provide a summary of bounds for $R$ under different initializations in \cref{tab:init_utility}. We observe that the lazy training distance $R$ decreases with increasing width and depth under LeCun, He and NTK initializations, while under Xavier initialization $R$ only decreases with increasing depth.

\paragraphbe{Privacy \& Excess empirical risk tradeoffs for Langevin diffusion under linearized network} We now use the lazy training distance $R$ to prove empirical risk bound and combine it with our KL privacy bound~\cref{sec:dp_linear} to show the privacy utility trade-off under over-parameterization. 
\begin{corollary}[Privacy utility trade-off for linearized network]
    Assume that all conditions in \cref{prop:approximate_lazy_training} holds. Let $B$ be the gradient norm constant in \cref{eqn:def_B}, and let $R$ be the lazy training distance bound in \cref{prop:approximate_lazy_training}. Then for $\sigma^2 = \frac{2BT}{\varepsilon n^2}$ and $T = \sqrt{\frac{\varepsilon n R}{2B}}$, releasing all iterates of Langevin diffusion with time $T$ satisfies $\varepsilon$-KL privacy, and has empirical excess risk upper bounded by
    \begin{align}
        \mathbb{E}[\Loss_0^{lin}(\bar{\param}_{T}^{lin})] - &\min_{\param}\Loss_0^{lin}(\param; \D) \leq \tilde{\mathcal{O}}\left(\frac{1}{n^2} + \sqrt{\frac{B R}{\varepsilon n }} \right)\label{eqn:trade_off_remark}\\
        &= \tilde{\mathcal{O}}\left(\frac{1}{n^2} + \sqrt{\frac{\max\{1, d\beta_L\prod_{l=1}^{L-1}\beta_lm_l\}}{2^{L-1}\varepsilon }} \right)\label{eqn:trade_off_table}
    \end{align}
    with high probability over random initiailization \cref{eqn:init}, where the expectation is over Brownian motion $B_T$ in Langevin diffusion, and $\tilde{O}$ ignores logarithmic factors with regard to width $m$, depth $L$, number of training data $n$ and tail probability $\delta$.\label{cor:trade_off}
\end{corollary}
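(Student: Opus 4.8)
The plan is to combine the privacy guarantee of \cref{cor:drift_diff_linearized_training} with the utility guarantee of \cref{prop:extension_glm_evading}, calibrate the noise variance $\sigma^2$ so that the KL privacy loss exactly meets the budget $\varepsilon$, and then optimize over the single remaining free parameter, the training time $T$. Privacy will hold deterministically over worst-case neighboring datasets, while the utility bound inherits the high-probability qualifier from \cref{prop:approximate_lazy_training}.

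First I would fix privacy. By \cref{cor:drift_diff_linearized_training}, $\mathrm{KL}(\param_{[0:T]}^{lin}\lVert {\param'}_{[0:T]}^{lin})\leq \frac{2BT}{n^2\sigma^2}$ for every $T$. Setting the right-hand side equal to $\varepsilon$ forces $\sigma^2 = \frac{2BT}{\varepsilon n^2}$, which is exactly the prescribed choice and yields $\varepsilon$-KL privacy regardless of how $T$ is later selected. Next I would substitute this $\sigma^2$ into the excess-risk bound. \cref{prop:approximate_lazy_training} supplies, with high probability, a near-optimal solution $\param_0^{1/n^2}$ with suboptimality $\alpha=\frac{1}{n^2}$, together with $\mathrm{rank}(M_0)=n$ and the bound on $R$ in \eqref{eqn:def_R_tilde}. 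Feeding $\alpha$, $\mathrm{rank}(M_0)$ and $\sigma^2$ into \cref{prop:extension_glm_evading} gives
\begin{align}
\mathbb{E}[\Loss_0^{lin}(\bar{\param}_T^{lin})] - \min_{\param}\Loss_0^{lin}(\param;\D) \leq \frac{1}{n^2} + \frac{R}{2T} + \frac{BT}{\varepsilon n}\,.\nonumber
\end{align}
The two $T$-dependent terms are then balanced by AM--GM, $\frac{R}{2T}+\frac{BT}{\varepsilon n}\geq \sqrt{\frac{2BR}{\varepsilon n}}$ with equality at $T=\sqrt{\frac{\varepsilon n R}{2B}}$; this recovers the prescribed training time and establishes the first form \eqref{eqn:trade_off_remark}.

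The remaining and only genuinely computational step is to pass from \eqref{eqn:trade_off_remark} to the explicit \eqref{eqn:trade_off_table} by inserting the closed forms of $B$ from \eqref{eqn:def_B} (with $o=1$) and of $R$ from \eqref{eqn:def_R_tilde}. I expect the crux to be the algebraic cancellation: the gradient-norm sum factors as $\sum_{l=1}^L\frac{\beta_L}{\beta_l}=\beta_L\sum_{l=1}^L\beta_l^{-1}$, so the factor $\sum_{l=1}^L\beta_l^{-1}$ appearing in the denominator of $R$ cancels exactly against it. Writing $Q\coloneqq d\,\beta_L\prod_{l=1}^{L-1}\beta_l m_l$ and using $\prod_{l=1}^{L-1}\frac{\beta_l m_l}{2}=\frac{Q}{d\beta_L 2^{L-1}}$, the product collapses to $BR=\tilde{\mathcal{O}}\!\left(\frac{n}{2^{L-1}}\,Q\max\{Q^{-1},1\}\right)=\tilde{\mathcal{O}}\!\left(\frac{n}{2^{L-1}}\max\{1,Q\}\right)$. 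Dividing by $\varepsilon n$ and taking the square root gives \eqref{eqn:trade_off_table}.

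This cancellation is the real content of the statement: $B$ grows while $R$ shrinks as the per-layer variances increase, in a complementary manner that leaves the trade-off depending on the variances only through the compact quantity $\max\{1,Q\}/2^{L-1}$. The explicit factor $2^{-(L-1)}$ inherited from $\prod_{l=1}^{L-1}\frac{\beta_l m_l}{2}$ is precisely what produces the depth-dependent gains under LeCun and Xavier recorded in \cref{tab:init_utility}, so I would finish by specializing $Q$ to each initialization to read off the per-row excess-risk entries.
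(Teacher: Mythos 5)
Your proposal is correct and follows essentially the same route as the paper's own proof: calibrate $\sigma^2 = \frac{2BT}{\varepsilon n^2}$ via \cref{cor:drift_diff_linearized_training}, instantiate \cref{prop:extension_glm_evading} with the near-optimal solution $\param_0^{1/n^2}$ from \cref{prop:approximate_lazy_training} (so $\alpha = \frac{1}{n^2}$, $\mathrm{rank}(M_0)=n$), and balance $\frac{R}{2T}+\frac{BT}{\varepsilon n}$ at $T=\sqrt{\frac{\varepsilon n R}{2B}}$. Your explicit cancellation $\sum_{l=1}^L \frac{\beta_L}{\beta_l} = \beta_L\sum_{l=1}^L\beta_l^{-1}$ against the denominator of $R$, yielding $BR = \tilde{\mathcal{O}}\bigl(\frac{n}{2^{L-1}}\max\{1, d\beta_L\prod_{l=1}^{L-1}\beta_l m_l\}\bigr)$, correctly fills in the algebra the paper compresses into ``elaborating the computations.''
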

See \cref{app:proof_tradeoff} for the full proof. \cref{cor:trade_off} proves that the excess empirical risk worsens in the presence of a stronger privacy constraint, i.e., a small privacy budget $\varepsilon$, thus contributing to a trade-off between privacy and utility. However, the excess empirical risk also scales with the lazy training distance $R$ and the gradient norm constant $B$. These constants depend on network width, depth and initialization distributions, and we prove privacy utility trade-offs for training linearized network under commonly used initialization distributions, as summarized in \cref{tab:init_utility}.

We would like to highlight that our privacy utility trade-off bound under LeCun and Xavier initialization strictly improves with increasing depth as long as the data satisfy \cref{assum:data_utility} and the hidden-layer width is large enough.  To our best knowledge, this is the first time that a strictly improving privacy utility trade-off under over-parameterization is shown in literature. This shows the benefits of precisely bounding the gradient norm (\cref{app:proof_dimension_dependency_linearized}) in our privacy and utility analysis.
\section{Conclusion}

We prove new KL privacy bound for training fully connected ReLU network (and its linearized variant) using the Langevin diffusion algorithm, and investigate how privacy is affected by the network width, depth and initialization. Our results suggest that there is a complex interplay between privacy and over-parameterization (width and depth) that crucially relies on what initialization distribution is used and the how much the gradient fluctuates during training. Moreover, for a linearized variant of fully connected network, we prove KL privacy bounds that improve with increasing depth under certain initialization distributions (such as LeCun and Xavier). We further prove excess empirical risk bounds for linearized network under KL privacy, which similarly improve as depth increases under LeCun and Xavier initialization. This shows the gain of our new privacy analysis for capturing the effect of over-parameterization. We leave it as an important open problem as to whether our privacy utility trade-off results for linearized network could be generalized to deep neural networks.
\begin{ack}
    The authors would like to thank Yaxi Hu and anonymous reviewers for helpful discussions on drafts of this paper. 
    This work was supported by Hasler Foundation Program: Hasler Responsible AI (project number 21043), and the Swiss National Science Foundation (SNSF) under grant number 200021\_205011, Google PDPO faculty research award, Intel within the www.private-ai.org center, Meta faculty research award, the NUS Early Career Research Award (NUS ECRA award number NUS ECRA FY19 P16), and the National Research Foundation, Singapore under its Strategic Capability Research Centres Funding Initiative. Any opinions, findings and conclusions or recommendations expressed in this material are those of the author(s) and do not reflect the views of National Research Foundation, Singapore.
\end{ack}

\bibliography{reference}

\begin{thebibliography}{55}
\providecommand{\natexlab}[1]{#1}
\providecommand{\url}[1]{\texttt{#1}}
\expandafter\ifx\csname urlstyle\endcsname\relax
  \providecommand{\doi}[1]{doi: #1}\else
  \providecommand{\doi}{doi: \begingroup \urlstyle{rm}\Url}\fi

\bibitem[ele(2005)]{elementsinfomationtheory}
\emph{Entropy, Relative Entropy, and Mutual Information}, chapter~2, pages
  13--55.
\newblock John Wiley \& Sons, Ltd, 2005.
\newblock ISBN 9780471748823.
\newblock \doi{https://doi.org/10.1002/047174882X.ch2}.
\newblock URL
  \url{https://onlinelibrary.wiley.com/doi/abs/10.1002/047174882X.ch2}.

\bibitem[Abadi et~al.(2016)Abadi, Chu, Goodfellow, McMahan, Mironov, Talwar,
  and Zhang]{abadi2016deep}
Martin Abadi, Andy Chu, Ian Goodfellow, H~Brendan McMahan, Ilya Mironov, Kunal
  Talwar, and Li~Zhang.
\newblock Deep learning with differential privacy.
\newblock In \emph{Proceedings of the 2016 ACM SIGSAC conference on computer
  and communications security}, pages 308--318, 2016.

\bibitem[Allen-Zhu et~al.(2019)Allen-Zhu, Li, and Song]{allen2019convergence}
Zeyuan Allen-Zhu, Yuanzhi Li, and Zhao Song.
\newblock A convergence theory for deep learning via over-parameterization.
\newblock In \emph{International Conference on Machine Learning}, pages
  242--252. PMLR, 2019.

\bibitem[Arora et~al.(2019)Arora, Du, Hu, Li, and Wang]{arora2019fine}
Sanjeev Arora, Simon~S. Du, Wei Hu, Zhiyuan Li, and Ruosong Wang.
\newblock Fine-grained analysis of optimization and generalization for
  overparameterized two-layer neural networks.
\newblock In \emph{International Conference on Machine Learning}, pages
  322--332. PMLR, 2019.

\bibitem[Asi et~al.(2021)Asi, Feldman, Koren, and Talwar]{asi2021private}
Hilal Asi, Vitaly Feldman, Tomer Koren, and Kunal Talwar.
\newblock Private stochastic convex optimization: Optimal rates in l1 geometry.
\newblock In \emph{International Conference on Machine Learning}, pages
  393--403. PMLR, 2021.

\bibitem[Balle et~al.(2018)Balle, Barthe, and Gaboardi]{balle2018privacy}
Borja Balle, Gilles Barthe, and Marco Gaboardi.
\newblock Privacy amplification by subsampling: Tight analyses via couplings
  and divergences.
\newblock \emph{Advances in neural information processing systems}, 31, 2018.

\bibitem[Balle et~al.(2022)Balle, Cherubin, and Hayes]{balle2022reconstructing}
Borja Balle, Giovanni Cherubin, and Jamie Hayes.
\newblock Reconstructing training data with informed adversaries.
\newblock In \emph{2022 IEEE Symposium on Security and Privacy (SP)}, pages
  1138--1156. IEEE, 2022.

\bibitem[Barber and Duchi(2014)]{barber2014privacy}
Rina~Foygel Barber and John~C Duchi.
\newblock Privacy and statistical risk: Formalisms and minimax bounds.
\newblock \emph{arXiv preprint arXiv:1412.4451}, 2014.

\bibitem[Bassily et~al.(2014)Bassily, Smith, and Thakurta]{bassily2014private}
Raef Bassily, Adam Smith, and Abhradeep Thakurta.
\newblock Private empirical risk minimization: Efficient algorithms and tight
  error bounds.
\newblock In \emph{2014 IEEE 55th annual symposium on foundations of computer
  science}, pages 464--473. IEEE, 2014.

\bibitem[Bassily et~al.(2016)Bassily, Nissim, Smith, Steinke, Stemmer, and
  Ullman]{bassily2016algorithmic}
Raef Bassily, Kobbi Nissim, Adam Smith, Thomas Steinke, Uri Stemmer, and
  Jonathan Ullman.
\newblock Algorithmic stability for adaptive data analysis.
\newblock In \emph{Proceedings of the forty-eighth annual ACM symposium on
  Theory of Computing}, pages 1046--1059, 2016.

\bibitem[Bassily et~al.(2019)Bassily, Feldman, Talwar, and
  Guha~Thakurta]{bassily2019private}
Raef Bassily, Vitaly Feldman, Kunal Talwar, and Abhradeep Guha~Thakurta.
\newblock Private stochastic convex optimization with optimal rates.
\newblock \emph{Advances in neural information processing systems}, 32, 2019.

\bibitem[Bassily et~al.(2021)Bassily, Guzm{\'a}n, and
  Menart]{bassily2021differentially}
Raef Bassily, Crist{\'o}bal Guzm{\'a}n, and Michael Menart.
\newblock Differentially private stochastic optimization: New results in convex
  and non-convex settings.
\newblock \emph{Advances in Neural Information Processing Systems},
  34:\penalty0 9317--9329, 2021.

\bibitem[Bassily et~al.(2022)Bassily, Mohri, and
  Suresh]{bassily2022differentially}
Raef Bassily, Mehryar Mohri, and Ananda~Theertha Suresh.
\newblock Differentially private learning with margin guarantees.
\newblock \emph{arXiv preprint arXiv:2204.10376}, 2022.

\bibitem[Bu et~al.(2021)Bu, Wang, and Long]{bu2021convergence}
Zhiqi Bu, Hua Wang, and Qi~Long.
\newblock On the convergence and calibration of deep learning with differential
  privacy.
\newblock \emph{arXiv preprint arXiv:2106.07830}, 2021.

\bibitem[Bubeck and Sellke(2023)]{bubeck2023universal}
S{\'e}bastien Bubeck and Mark Sellke.
\newblock A universal law of robustness via isoperimetry.
\newblock \emph{Journal of the ACM}, 70\penalty0 (2):\penalty0 1--18, 2023.

\bibitem[Cao and Gu(2019)]{cao2019generalization}
Yuan Cao and Quanquan Gu.
\newblock Generalization bounds of stochastic gradient descent for wide and
  deep neural networks.
\newblock \emph{Advances in neural information processing systems}, 32, 2019.

\bibitem[Carlini et~al.(2021)Carlini, Tramer, Wallace, Jagielski, Herbert-Voss,
  Lee, Roberts, Brown, Song, Erlingsson, et~al.]{carlini2021extracting}
Nicholas Carlini, Florian Tramer, Eric Wallace, Matthew Jagielski, Ariel
  Herbert-Voss, Katherine Lee, Adam Roberts, Tom~B Brown, Dawn Song, Ulfar
  Erlingsson, et~al.
\newblock Extracting training data from large language models.
\newblock In \emph{USENIX Security Symposium}, volume~6, 2021.

\bibitem[Chen et~al.(2022)Chen, Chewi, Li, Li, Salim, and
  Zhang]{chen2022sampling}
Sitan Chen, Sinho Chewi, Jerry Li, Yuanzhi Li, Adil Salim, and Anru~R Zhang.
\newblock Sampling is as easy as learning the score: theory for diffusion
  models with minimal data assumptions.
\newblock \emph{arXiv preprint arXiv:2209.11215}, 2022.

\bibitem[Chizat et~al.(2019)Chizat, Oyallon, and Bach]{chizat2019lazy}
Lenaic Chizat, Edouard Oyallon, and Francis Bach.
\newblock On lazy training in differentiable programming.
\newblock \emph{Advances in neural information processing systems}, 32, 2019.

\bibitem[Du et~al.(2018)Du, Zhai, Poczos, and Singh]{du2018gradient}
Simon~S. Du, Xiyu Zhai, Barnabas Poczos, and Aarti Singh.
\newblock Gradient descent provably optimizes over-parameterized neural
  networks.
\newblock \emph{arXiv preprint arXiv:1810.02054}, 2018.

\bibitem[Du et~al.(2019)Du, Lee, Li, Wang, and Zhai]{du2019gradient}
Simon~S. Du, Jason Lee, Haochuan Li, Liwei Wang, and Xiyu Zhai.
\newblock Gradient descent finds global minima of deep neural networks.
\newblock In \emph{International conference on machine learning}, pages
  1675--1685. PMLR, 2019.

\bibitem[Dwork et~al.(2006)Dwork, McSherry, Nissim, and
  Smith]{dwork2006calibrating}
Cynthia Dwork, Frank McSherry, Kobbi Nissim, and Adam Smith.
\newblock Calibrating noise to sensitivity in private data analysis.
\newblock In \emph{Theory of Cryptography: Third Theory of Cryptography
  Conference, TCC 2006, New York, NY, USA, March 4-7, 2006. Proceedings 3},
  pages 265--284. Springer, 2006.

\bibitem[Dwork et~al.(2014)Dwork, Roth, et~al.]{dwork2014algorithmic}
Cynthia Dwork, Aaron Roth, et~al.
\newblock The algorithmic foundations of differential privacy.
\newblock \emph{Found. Trends Theor. Comput. Sci.}, 9\penalty0 (3-4):\penalty0
  211--407, 2014.

\bibitem[Freund et~al.(2022)Freund, Ma, and Zhang]{freund2022convergence}
Yoav Freund, Yi-An Ma, and Tong Zhang.
\newblock When is the convergence time of langevin algorithms dimension
  independent? a composite optimization viewpoint.
\newblock \emph{Journal of Machine Learning Research}, 23\penalty0
  (214):\penalty0 1--32, 2022.

\bibitem[Ganesh and Talwar(2020)]{ganesh2020faster}
Arun Ganesh and Kunal Talwar.
\newblock Faster differentially private samplers via r{\'e}nyi divergence
  analysis of discretized langevin mcmc.
\newblock \emph{Advances in Neural Information Processing Systems},
  33:\penalty0 7222--7233, 2020.

\bibitem[Ganesh et~al.(2022)Ganesh, Thakurta, and Upadhyay]{ganesh2022langevin}
Arun Ganesh, Abhradeep Thakurta, and Jalaj Upadhyay.
\newblock Langevin diffusion: An almost universal algorithm for private
  euclidean (convex) optimization.
\newblock \emph{arXiv preprint arXiv:2204.01585}, 2022.

\bibitem[Glorot and Bengio(2010)]{glorot2010understanding}
Xavier Glorot and Yoshua Bengio.
\newblock Understanding the difficulty of training deep feedforward neural
  networks.
\newblock In \emph{Proceedings of the thirteenth international conference on
  artificial intelligence and statistics}, pages 249--256. JMLR Workshop and
  Conference Proceedings, 2010.

\bibitem[Guo et~al.(2022)Guo, Sablayrolles, and Sanjabi]{guo2022analyzing}
Chuan Guo, Alexandre Sablayrolles, and Maziar Sanjabi.
\newblock Analyzing privacy leakage in machine learning via multiple hypothesis
  testing: A lesson from fano.
\newblock \emph{arXiv preprint arXiv:2210.13662}, 2022.

\bibitem[Haim et~al.(2022)Haim, Vardi, Yehudai, Shamir, and
  Irani]{haim2022reconstructing}
Niv Haim, Gal Vardi, Gilad Yehudai, Ohad Shamir, and Michal Irani.
\newblock Reconstructing training data from trained neural networks.
\newblock \emph{arXiv preprint arXiv:2206.07758}, 2022.

\bibitem[He et~al.(2015)He, Zhang, Ren, and Sun]{he2015delving}
Kaiming He, Xiangyu Zhang, Shaoqing Ren, and Jian Sun.
\newblock Delving deep into rectifiers: Surpassing human-level performance on
  imagenet classification.
\newblock In \emph{Proceedings of the IEEE international conference on computer
  vision}, pages 1026--1034, 2015.

\bibitem[Jacot et~al.(2018)Jacot, Gabriel, and Hongler]{jacot2018neural}
Arthur Jacot, Franck Gabriel, and Cl{\'e}ment Hongler.
\newblock Neural tangent kernel: Convergence and generalization in neural
  networks.
\newblock \emph{Advances in neural information processing systems}, 31, 2018.

\bibitem[Kairouz et~al.(2021)Kairouz, Diaz, Rush, and
  Thakurta]{kairouz2021nearly}
Peter Kairouz, Monica~Ribero Diaz, Keith Rush, and Abhradeep Thakurta.
\newblock (nearly) dimension independent private erm with adagrad rates via
  publicly estimated subspaces.
\newblock In \emph{Conference on Learning Theory}, pages 2717--2746. PMLR,
  2021.

\bibitem[Kasiviswanathan(2021)]{kasiviswanathan2021sgd}
Shiva~Prasad Kasiviswanathan.
\newblock Sgd with low-dimensional gradients with applications to private and
  distributed learning.
\newblock In \emph{Uncertainty in Artificial Intelligence}, pages 1905--1915.
  PMLR, 2021.

\bibitem[LeCun et~al.(2002)LeCun, Bottou, Orr, and
  M{\"u}ller]{lecun2002efficient}
Yann LeCun, L{\'e}on Bottou, Genevieve~B Orr, and Klaus-Robert M{\"u}ller.
\newblock Efficient backprop.
\newblock In \emph{Neural networks: Tricks of the trade}, pages 9--50.
  Springer, 2002.

\bibitem[Lee et~al.(2019)Lee, Xiao, Schoenholz, Bahri, Novak, Sohl-Dickstein,
  and Pennington]{lee2019wide}
Jaehoon Lee, Lechao Xiao, Samuel Schoenholz, Yasaman Bahri, Roman Novak, Jascha
  Sohl-Dickstein, and Jeffrey Pennington.
\newblock Wide neural networks of any depth evolve as linear models under
  gradient descent.
\newblock \emph{Advances in neural information processing systems}, 32, 2019.

\bibitem[Lemons et~al.(1908)Lemons, Gythiel, and
  Langevin’s]{lemons1908theorie}
DS~Lemons, A~Gythiel, and Paul Langevin’s.
\newblock Sur la th{\'e}orie du mouvement brownien [on the theory of brownian
  motion].
\newblock \emph{CR Acad. Sci.(Paris)}, 146:\penalty0 530--533, 1908.

\bibitem[Li et~al.(2022)Li, Liu, Hashimoto, Inan, Kulkarni, Lee, and
  Guha~Thakurta]{li2022does}
Xuechen Li, Daogao Liu, Tatsunori~B Hashimoto, Huseyin~A Inan, Janardhan
  Kulkarni, Yin-Tat Lee, and Abhradeep Guha~Thakurta.
\newblock When does differentially private learning not suffer in high
  dimensions?
\newblock \emph{Advances in Neural Information Processing Systems},
  35:\penalty0 28616--28630, 2022.

\bibitem[Luo et~al.(2021)Luo, Xu, Ma, and Zhang]{luo2021phase}
Tao Luo, Zhi-Qin~John Xu, Zheng Ma, and Yaoyu Zhang.
\newblock Phase diagram for two-layer relu neural networks at infinite-width
  limit.
\newblock \emph{The Journal of Machine Learning Research}, 22\penalty0
  (1):\penalty0 3327--3373, 2021.

\bibitem[Mahloujifar et~al.(2022)Mahloujifar, Sablayrolles, Cormode, and
  Jha]{mahloujifar2022optimal}
Saeed Mahloujifar, Alexandre Sablayrolles, Graham Cormode, and Somesh Jha.
\newblock Optimal membership inference bounds for adaptive composition of
  sampled gaussian mechanisms.
\newblock \emph{arXiv preprint arXiv:2204.06106}, 2022.

\bibitem[Nguyen et~al.(2021)Nguyen, Mondelli, and Montufar]{nguyen2021tight}
Quynh Nguyen, Marco Mondelli, and Guido~F Montufar.
\newblock Tight bounds on the smallest eigenvalue of the neural tangent kernel
  for deep relu networks.
\newblock In \emph{International Conference on Machine Learning}, pages
  8119--8129. PMLR, 2021.

\bibitem[Ortiz-Jim{\'e}nez et~al.(2021)Ortiz-Jim{\'e}nez, Moosavi-Dezfooli, and
  Frossard]{ortiz2021can}
Guillermo Ortiz-Jim{\'e}nez, Seyed-Mohsen Moosavi-Dezfooli, and Pascal
  Frossard.
\newblock What can linearized neural networks actually say about
  generalization?
\newblock \emph{Advances in Neural Information Processing Systems},
  34:\penalty0 8998--9010, 2021.

\bibitem[Shamir and Zhang(2013)]{shamir2013stochastic}
Ohad Shamir and Tong Zhang.
\newblock Stochastic gradient descent for non-smooth optimization: Convergence
  results and optimal averaging schemes.
\newblock In \emph{International conference on machine learning}, pages 71--79.
  PMLR, 2013.

\bibitem[Shankar et~al.(2020)Shankar, Fang, Guo, Fridovich-Keil, Ragan-Kelley,
  Schmidt, and Recht]{shankar2020neural}
Vaishaal Shankar, Alex Fang, Wenshuo Guo, Sara Fridovich-Keil, Jonathan
  Ragan-Kelley, Ludwig Schmidt, and Benjamin Recht.
\newblock Neural kernels without tangents.
\newblock In \emph{International Conference on Machine Learning}, pages
  8614--8623. PMLR, 2020.

\bibitem[Shokri et~al.(2017)Shokri, Stronati, Song, and
  Shmatikov]{shokri2017membership}
Reza Shokri, Marco Stronati, Congzheng Song, and Vitaly Shmatikov.
\newblock Membership inference attacks against machine learning models.
\newblock In \emph{2017 IEEE symposium on security and privacy (SP)}, pages
  3--18. IEEE, 2017.

\bibitem[Song et~al.(2021)Song, Steinke, Thakkar, and
  Thakurta]{song2021evading}
Shuang Song, Thomas Steinke, Om~Thakkar, and Abhradeep Thakurta.
\newblock Evading the curse of dimensionality in unconstrained private glms.
\newblock In \emph{International Conference on Artificial Intelligence and
  Statistics}, pages 2638--2646. PMLR, 2021.

\bibitem[Talwar et~al.(2014)Talwar, Thakurta, and Zhang]{talwar2014private}
Kunal Talwar, Abhradeep Thakurta, and Li~Zhang.
\newblock Private empirical risk minimization beyond the worst case: The effect
  of the constraint set geometry.
\newblock \emph{arXiv preprint arXiv:1411.5417}, 2014.

\bibitem[Tan et~al.(2022)Tan, Mason, Javadi, and Baraniuk]{tan2022parameters}
Jasper Tan, Blake Mason, Hamid Javadi, and Richard Baraniuk.
\newblock Parameters or privacy: A provable tradeoff between
  overparameterization and membership inference.
\newblock \emph{Advances in Neural Information Processing Systems},
  35:\penalty0 17488--17500, 2022.

\bibitem[Tan et~al.(2023)Tan, LeJeune, Mason, Javadi, and
  Baraniuk]{tan2023blessing}
Jasper Tan, Daniel LeJeune, Blake Mason, Hamid Javadi, and Richard~G Baraniuk.
\newblock A blessing of dimensionality in membership inference through
  regularization.
\newblock In \emph{International Conference on Artificial Intelligence and
  Statistics}, pages 10968--10993. PMLR, 2023.

\bibitem[Tonelli(1909)]{tonelli1909sull}
Leonida Tonelli.
\newblock Sull’integrazione per parti.
\newblock \emph{Rend. Acc. Naz. Lincei}, 5\penalty0 (18):\penalty0 246--253,
  1909.

\bibitem[Tripuraneni et~al.(2021)Tripuraneni, Adlam, and
  Pennington]{tripuraneni2021covariate}
Nilesh Tripuraneni, Ben Adlam, and Jeffrey Pennington.
\newblock Covariate shift in high-dimensional random feature regression.
\newblock \emph{arXiv preprint arXiv:2111.08234}, 2021.

\bibitem[Van~Erven and Harremos(2014)]{van2014renyi}
Tim Van~Erven and Peter Harremos.
\newblock R{\'e}nyi divergence and kullback-leibler divergence.
\newblock \emph{IEEE Transactions on Information Theory}, 60\penalty0
  (7):\penalty0 3797--3820, 2014.

\bibitem[Vempala and Wibisono(2019)]{vempala2019rapid}
Santosh Vempala and Andre Wibisono.
\newblock Rapid convergence of the unadjusted langevin algorithm: Isoperimetry
  suffices.
\newblock \emph{Advances in neural information processing systems}, 32, 2019.

\bibitem[Wang et~al.(2016)Wang, Lei, and Fienberg]{wang2016average}
Yu-Xiang Wang, Jing Lei, and Stephen~E Fienberg.
\newblock On-average kl-privacy and its equivalence to generalization for
  max-entropy mechanisms.
\newblock In \emph{Privacy in Statistical Databases: UNESCO Chair in Data
  Privacy, International Conference, PSD 2016, Dubrovnik, Croatia, September
  14--16, 2016, Proceedings}, pages 121--134. Springer, 2016.

\bibitem[Zhu et~al.(2022)Zhu, Liu, Chrysos, and Cevher]{zhu2022robustness}
Zhenyu Zhu, Fanghui Liu, Grigorios~G Chrysos, and Volkan Cevher.
\newblock Robustness in deep learning: The good (width), the bad (depth), and
  the ugly (initialization).
\newblock \emph{arXiv preprint arXiv:2209.07263}, 2022.

\bibitem[Zou et~al.(2020)Zou, Cao, Zhou, and Gu]{zou2020gradient}
Difan Zou, Yuan Cao, Dongruo Zhou, and Quanquan Gu.
\newblock Gradient descent optimizes over-parameterized deep relu networks.
\newblock \emph{Machine learning}, 109:\penalty0 467--492, 2020.

\end{thebibliography}
\bibliographystyle{plainnat}


\newpage
\appendix
\onecolumn
\tableofcontents

\section{Symbols and definitions}
\label{app:symbol}

\subsection{Additional notations}

Vecorization $\text{Vec}(\cdot)$ denotes the transformation that takes an input matrix $\bm A = (a_{ij})_{i\in[r], j\in[c]}\in \mathbb{R}^{r\times c}$ (with $r$ rows and $c$ columns) and outputs a $rc$-dimensional column vector: $\text{Vec} (\bm A) = (a_{1, 1}, \cdots, a_{r, 1}, a_{1, 2}, \cdots, a_{r, 2}, \cdots, a_{1, c}, \cdots, a_{r, c})^\top$.

Softmax function: $\text{softmax}(\y) = \frac{e^{\y^{[j]}}}{\sum_{j=1}^{o} e^{\y^{[j]}}}$ where $o$ is the number of output classes.

$o$: number of output classes for the neural network.

\subsection{Relation between KL privacy \cref{def:kl_privacy} to differential privacy}
\label{app:connect_kl_dp}

KL privacy is a more relaxed, yet closely connected privacy notion to $(\varepsilon, \delta)$ differential privacy~\cite{dwork2006calibrating}. 

\begin{enumerate}
    \item KL privacy and differential privacy are both worst-case privacy notions over all possible neighboring datasets, by requiring bounded distinguishability between the algorithm's output distributions on neighboring datasets in statistical divergence. The difference is that KL privacy requires bounded KL divergence, while $(\varepsilon, \delta)$-differential privacy is equivalent to bounded Hockey-stick divergence~\cite{balle2018privacy}. 
    \item KL privacy and differential privacy are both definitions based on the privacy loss random variable $\log\frac{P(A(D) = o)}{P(A(D') = o)}, o\sim A(D)$ (following the definition in \cite[Equation 1]{abadi2016deep}). KL privacy implies that the privacy loss random variable has a bounded first order moment, while differential privacy requires a high probability argument that the privacy loss random variable is bounded by $\varepsilon$ with probability $1-\delta$. Therefore, KL privacy is generally a more relaxed notion than differential privacy. 
    \item Translation to each other: For $\varepsilon = 0$, KL privacy (bounded first-order moment of privacy loss random variable) implies $(0, \delta)$-differential privacy with $\delta = \sqrt{\frac{\varepsilon}{2}}$ by Pinsker inequality. Higher order moments of the privacy loss random variable suffice to prove $(\varepsilon, \delta)$-differential privacy for $\varepsilon >0$. Note that $(\varepsilon, \delta)$-DP with $\delta>0$ does not necessarily imply KL privacy, as the privacy loss random variable may be large at the tail event with $\delta$ probability.
    \item Due to the connection to the privacy loss random variable (which is closely connected to the likelihood ratio test for membership hypothesis testing), both differential privacy and KL privacy incur upper bound on the performance curve of inference attacks, such as the membership inference and attribute inference~\cite{mahloujifar2022optimal,guo2022analyzing}, as we discuss in \cref{footnote:kl_choice}.
\end{enumerate}

\section{Deferred proofs for \cref{sec:dp_deep}}

\subsection{Deferred proofs for \cref{thm:new_compose}}
\label{app:proof_compose}

To prove the new composition theorem, we will use the Girsanov's Theorem. Here we follow the presentation of ~\citep[Theorem 6]{chen2022sampling}.
\begin{theorem}[{Implication of Girsanov's theorem~\citep[Theorem 6]{chen2022sampling}}] Let $(X_t)_{t\in [0, \eta]}$ and $(\tilde{X}_t)_{t\in [0, \eta]}$ be two continuous-time processes over $\mathbb{R}^r$. Let $P_T$ be the probability measure that corresponds to the trajectory of $(X_t)_{t\in [0, \eta]}$, and let $Q_T$ be the probability measure that corresponds to the trajectory of $(\tilde{X}_t)_{t\in [0, \eta]}$. Suppose that the process $(X_t)_{t\in [0, \eta]}$ follows
\begin{align*}
    \mathrm{d}X_t = b_t \mathrm{d}t + \sigma \mathrm{d}B_t\,,
\end{align*}
where $(B_t)_{t\in[0,T]}$ is a standard Brownian motion over $P_T$, and the process $(\tilde{X}_t)_{t\in [0, \eta]}$ follows
\begin{align*}
    \mathrm{d}\tilde{X}_t = \tilde{b}_t \mathrm{d}t + \sigma \mathrm{d}\tilde{B}_t\,,
\end{align*}
where $(\tilde{B}_t)_{t\in[0,T]}$ is a standard Brownian motion over $Q_T$ with $\mathrm{d}\tilde{B}_t = \mathrm{d}B_t + \frac{1}{\sigma}\left(b_t - \tilde{b}_t\right)$. Assume that  $\sigma$ is a $r\times r$ symmetric positive definite matrix. Then, provided that Novikov's condition holds,
\begin{align}
    \mathbb{E}_{Q_T}\exp\left(\frac{1}{2}\int_0^\eta\lVert \sigma^{-1}(b_t - \tilde{b}_t)\rVert_2^2\mathrm{d}t\right) < \infty\,,\label{eqn:novikov}
\end{align}
we have that
\begin{align*}
    \frac{\mathrm{d}P_T}{\mathrm{d}Q_T} = \exp\left(\int_0^\eta\sigma^{-1}(b_t - \tilde{b}_t)d\tilde{B}_t - \frac{1}{2}\int_0^\eta\lVert \sigma^{-1}(b_t - \tilde{b}_t)\rVert_2^2\mathrm{d}t\right)\,.
\end{align*}
\label{thm:Girsanov}
\end{theorem}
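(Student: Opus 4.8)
The plan is to construct a change of measure on path space that turns $Q_T$ into $P_T$ and then to read off its density. Write $u_t := \sigma^{-1}(b_t - \tilde b_t)$ for the adapted, time-dependent drift correction, so that the prescribed relation between the driving noises reads $\mathrm{d}\tilde B_t = \mathrm{d}B_t + u_t\,\mathrm{d}t$. Working on the probability space carrying $Q_T$, under which $(\tilde B_t)_{t\in[0,\eta]}$ is a standard Brownian motion, I would introduce the Dol\'eans--Dade exponential
\begin{align*}
Z_t := \exp\left(\int_0^t u_s\,\mathrm{d}\tilde B_s - \frac{1}{2}\int_0^t \lVert u_s\rVert_2^2\,\mathrm{d}s\right)\,,
\end{align*}
which by It\^o's formula satisfies $\mathrm{d}Z_t = Z_t\, u_t\,\mathrm{d}\tilde B_t$ and is therefore a nonnegative local $Q_T$-martingale with $Z_0 = 1$.

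The first real step is to upgrade $Z_t$ from a local martingale to a true, uniformly integrable $Q_T$-martingale. This is exactly the role of the Novikov criterion: the hypothesis \eqref{eqn:novikov}, namely $\mathbb{E}_{Q_T}\exp\bigl(\tfrac{1}{2}\int_0^\eta \lVert u_s\rVert_2^2\,\mathrm{d}s\bigr) < \infty$, guarantees $\mathbb{E}_{Q_T}[Z_\eta] = 1$, so that $\mathrm{d}\tilde P := Z_\eta\,\mathrm{d}Q_T$ defines a bona fide probability measure on $C([0,\eta];\mathbb{R}^r)$ that is mutually absolutely continuous with $Q_T$.

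Next I would apply the classical Girsanov theorem on $(Q_T,\tilde B)$: under the tilted measure $\tilde P$, the process $W_t := \tilde B_t - \int_0^t u_s\,\mathrm{d}s$ is a standard Brownian motion. Substituting $\mathrm{d}\tilde B_t = \mathrm{d}W_t + u_t\,\mathrm{d}t$ into the SDE for $\tilde X$ and using $\sigma u_t = b_t - \tilde b_t$ gives
\begin{align*}
\mathrm{d}\tilde X_t = \tilde b_t\,\mathrm{d}t + \sigma\,\mathrm{d}\tilde B_t = \tilde b_t\,\mathrm{d}t + \sigma\,\mathrm{d}W_t + (b_t - \tilde b_t)\,\mathrm{d}t = b_t\,\mathrm{d}t + \sigma\,\mathrm{d}W_t\,.
\end{align*}
Thus under $\tilde P$ the process $\tilde X$ solves the very same SDE that $X$ solves under $P_T$, driven by the $\tilde P$-Brownian motion $W$. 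By weak uniqueness of solutions to this SDE (guaranteed here since $\sigma$ is a constant positive-definite matrix and the law of the solution is determined by its generator), the law of $\tilde X$ under $\tilde P$ coincides with the law of $X$ under $P_T$, i.e. $\tilde P = P_T$ as measures on path space. Reading off the density yields $\frac{\mathrm{d}P_T}{\mathrm{d}Q_T} = \frac{\mathrm{d}\tilde P}{\mathrm{d}Q_T} = Z_\eta$, which is exactly the claimed expression once $u_t$ is expanded back to $\sigma^{-1}(b_t - \tilde b_t)$.

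The main obstacle is the martingale step. A priori $Z_t$ is only a local martingale, and without integrability control it could be a strict supermartingale with $\mathbb{E}_{Q_T}[Z_\eta] < 1$, in which case $Z_\eta\,\mathrm{d}Q_T$ would fail to be a probability measure and the entire change of variables would collapse. Establishing $\mathbb{E}_{Q_T}[Z_\eta] = 1$ is precisely where Novikov's condition \eqref{eqn:novikov} is indispensable. The only remaining care is to invoke weak uniqueness so that matching the SDE is enough to identify $\tilde P$ with $P_T$, rather than merely concluding that the two laws solve the same martingale problem.
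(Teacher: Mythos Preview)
The paper does not prove this theorem; it simply quotes it as \cite[Theorem 6]{chen2022sampling} and uses it as a black box in the proof of \cref{thm:new_compose}. So there is no ``paper's own proof'' to compare against.

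Your proposal is the standard derivation of the Girsanov density and is correct in outline: form the Dol\'eans--Dade exponential $Z_t$ driven by $u_t=\sigma^{-1}(b_t-\tilde b_t)$ under $Q_T$, use Novikov \eqref{eqn:novikov} to certify $\mathbb{E}_{Q_T}[Z_\eta]=1$, apply Girsanov to get a $\tilde P$-Brownian motion $W_t=\tilde B_t-\int_0^t u_s\,\mathrm{d}s$, and then identify the resulting SDE for $\tilde X$ with that of $X$. One point of care worth making explicit: in this setup $b_t$ and $\tilde b_t$ must be read as adapted functionals of the \emph{same} canonical path (the coordinate process on $C([0,\eta];\mathbb{R}^r)$), not as drifts evaluated along two different processes; otherwise the substitution in your displayed SDE computation is ambiguous. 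With that reading, your weak-uniqueness step is exactly what is needed to pass from ``$\tilde X$ solves the $b$-SDE under $\tilde P$'' to ``$\tilde P=P_T$ on path space''. This is more than the paper supplies, since the paper treats the result as a citation.
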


We are now ready to apply Girsanov's theorem to prove the following new KL privacy composition theorem for Langevin diffusion processes on neighboring datasets $\mathcal{D}$ and $\mathcal{D}'$.
For ease of description, we repeat \cref{thm:new_compose} as below.

\begin{reptheorem}{thm:new_compose}[KL composition under possibly unbounded gradient difference]
    The KL divergence between running Langevin diffusion~\eqref{eqn:langevin} for DNN \eqref{eq:network} on neighboring datasets $\D$ and $\D'$ satisfies
    \begin{align}
    \mathrm{KL}(\param_{[0:T]}\lVert \param'_{[0:T]}) = \frac{1}{2\sigma^2}  \int_0^T\mathbb{E}\left[\left\lVert \nabla \Loss (\param_t;\D) - \nabla \Loss (\param_t;\D')\right\rVert_2^2\right]  \mathrm{d}t \,.
    \end{align}
\end{reptheorem}

\begin{proof}
Recall that $\param_t$ denotes the model parameter after running Langevin diffusion on dataset $\D$ with time $t$, and $\param'_t$ denotes the model parameter after running Langevin diffusion on dataset $\D'$ with time $t$. To avoid confusion, we further denote $p_{[t_1:t_2]}$ and $p'_{[t_1:t_2]}$ as the distributions of model parameters trajectories $\param_{[t_1:t_2]}$ and $\param'_{[t_1:t_2]}$ during time inteval $[t_1, t_2]$ respectively. By definition of partial derivative, we have that
\begin{align}
    \frac{\partial \mathrm{KL}(\param_{[0:t]}\lVert \param'_{[0:t]})}{\partial t} & = \lim_{\eta \rightarrow 0} \frac{\mathrm{KL}(\param_{[0:t+\eta]}\lVert \param'_{[0:t+\eta]}) - \mathrm{KL}(\param_{[0:t]}\lVert \param'_{[0:t]})}{\eta}\label{eqn:partial_kl_partial_t}\,.
\end{align}
Now we compute the term $\mathrm{KL}(\param_{[0:t+\eta]}  \lVert\param'_{[0:t+\eta]})$ as follows. 
\begin{align}
    \mathrm{KL}(\param_{[0:t+\eta]}&\lVert \param'_{[0:t+\eta]}) = \mathbb{E}_{w_{[0:t+\eta]}\sim p_{[0:t+\eta]}}\left[\log \left(\frac{ p_{[t:t+\eta]|[0:t]}\left(w_{[t:t+\eta]}|w_{[0:t]}\right) p_{[0:t]}(w_{[0:t]})}{p'_{[t:t+\eta]|[0:t]}\left(w_{[t:t+\eta]}|w_{[0:t]}\right) p'_{[0:t]}(w_{[0:t]})}\right) \right]\nonumber\\
    = & \mathbb{E}_{w_{[0:t+\eta]}\sim p_{[0:t+\eta]}}\left[\log \left(\frac{ p_{[t:t+\eta]|[0:t]}(w_{[t:t+\eta]}|w_{[0:t]})}{p'_{[t:t+\eta]|[0:t]}(w_{[t:t+\eta]}|w_{[0:t]})}\right) +  \log \left(\frac{p_{[0:t]}(w_{[0:t]})}{p'_{[0:t]}\,,(w_{[0:t]})}\right)\right]\label{eqn:kl_decompose_path}
\end{align}
where $p_{[t:t+\eta]|[0:t]}(\,\cdot\,|w_{[0:t]})$ is the conditional distribution during running Langevin diffusion on dataset $\mathcal{D}$, given fixed values for model parameters trajectory $\param_{[0:t]} = w_{[0:t]}$. Similarly, $p'_{[t:t+\eta]|[0:t]}(\,\cdot\,|w_{[0:t]})$ is the conditional distribution during running Langevin diffusion on dataset $\mathcal{D}'$.

Therefore by using the Markov property of the Langevin diffusion process and the definition of KL divergence in \cref{eqn:kl_decompose_path}, we have that

\begin{align}
    \mathrm{KL}(\param_{[0:t+\eta]}\lVert \param'_{[0:t+\eta]}) & = \mathbb{E}_{w_{t}\sim p_{t}}\left[ \mathrm{KL}\left(p_{[t:t+\eta]|t}(\,\cdot\, |w_t) \Big\lVert p'_{[t:t+\eta]|t}(\,\cdot\,|w_t)\right)\right] + \mathrm{KL}(p_t, p_t')\,.
    \label{eqn:kl_joint}
\end{align}

Now to compute the term $\mathrm{KL}\left(p_{[t:t+\eta]|t}(\,\cdot\, |w_t) \Big\lVert p'_{[t:t+\eta]|t}(\,\cdot\,|w_t)\right)$, we only need to apply \cref{thm:Girsanov} to the following two Langevin diffusion processes $(\param_{t + s|t})_{s\in[0,\eta]}$ and $(\param'_{t + s|t})_{s\in[0,\eta]}$, conditioning on the observation $\param_{t|t} = \param'_{t|t} = w_t$ at time $t$.
\begin{align*}
    \mathrm{d} \param_{t+s|s} & = - \nabla \Loss(\param_{t+s|t}; \D) \mathrm{d}t + \sqrt{2\sigma^2} \mathrm{d}B_s\,.\\
    \mathrm{d} \param'_{t+s|t} & = - \nabla \Loss(\param'_{t+s|t}; \D')  \mathrm{d}t + \sqrt{2\sigma^2} \mathrm{d}\tilde{B}_s\,.
\end{align*}
Note that when $\eta$ is small enough, we have that the Novikov's condition in \cref{eqn:novikov} holds because the exponent inside integration $\frac{1}{2}\int_0^\eta\lVert \sigma_s^{-1}(b_s - \tilde{b}_s)\rVert_2^2\mathrm{d}s$ scales linearly with $\eta$ that can be arbitrarily small. Therefore, by applying Girsanov's theorem, we have that 
\begin{align*}
    \mathrm{KL}\left(p_{[t:t+\eta]|t}(\,\cdot\, |w_t) \Big\lVert p'_{[t:t+\eta]|t}(\,\cdot\,|w_t)\right) = & \mathbb{E}\left[
    \int_0^\eta\sigma^{-1}(b_s - \tilde{b}_s)\mathrm{d}\tilde{B}_s - \frac{1}{2}\int_0^T\lVert \sigma^{-1}(b_s - \tilde{b}_s)\rVert_2^2\mathrm{d}s
    \right]\,,
\end{align*}
where $b_s - \tilde{b}_s = - \nabla\Loss(\param_{t+s|t}; \D) + \nabla \Loss(\param_{t+s|t}; \D')$. By $d\tilde{B}_s = \mathrm{d}B_s + \frac{1}{\sigma}\left(b_s - \tilde{b}_s\right)$ and Itô integration with regard to $\param_{[t:t+\eta]|t}$, we have that
\begin{align}
    \mathrm{KL}\left(p_{[t:t+\eta]|t}(\,\cdot\, |w_t) \Big\lVert p'_{[t:t+\eta]|t}(\,\cdot\,|w_t)\right)& = \frac{\mathbb{E}\left[\int_0^{\eta}\left\lVert \nabla\Loss(\param_{t+s|t}; \D) - \nabla \Loss(\param_{t+s|t}; \D')\right\rVert_2^2 \mathrm{d}s\right]}{2\sigma^2}\,.
    \label{eqn:kl_conditional}
\end{align}
By plugging \cref{eqn:kl_conditional} into \cref{eqn:kl_joint}, we have that
\begin{align}
    \mathrm{KL}(p_{[t:t+\eta]}, p'_{[t:t+\eta]}) = \frac{\mathbb{E}\left[\int_0^{\eta}\left\lVert \nabla\Loss(\param_{t+s}; \D) - \nabla \Loss(\param_{t+s}; \D')\right\rVert_2^2 \mathrm{d}s\right]}{2\sigma^2} +  \mathrm{KL}(p_t, p'_t)\,.
    \label{eqn:kl_joint_finished}
\end{align}
By plugging \cref{eqn:kl_joint_finished} into \cref{eqn:partial_kl_partial_t}, and by exchanging the order of expectation and integration, we have that
\begin{align}
    \frac{\partial  \mathrm{KL}(p_t, p_t')}{\partial t} &  = \frac{1}{2\sigma^2} \lim_{\eta \rightarrow 0} \frac{\int_0^{\eta}\mathbb{E}_{p_{t+s}}\left[\left\lVert \nabla\Loss(\param_{t+s}; \D) - \nabla \Loss(\param_{t+s}; \D')\right\rVert_2^2\right] ds}{\eta}\nonumber\\
    & = \frac{1}{2\sigma^2} \mathbb{E}_{p_{t}}\left[\left\lVert \nabla\Loss(\param_{t}; \D) - \nabla \Loss(\param_{t}; \D')\right\rVert_2^2\right] \,.\label{eqn:partial_kl_partial_t_finished}
\end{align}
Note that here we exchange the order of expectation and integration by using the Tonelli's Theorem~\cite{tonelli1909sull} for non-negative integrand function $\left\lVert \nabla\Loss(\param_{t+s}; \D) - \nabla \Loss(\param_{t+s}; \D')\right\rVert_2^2$.
Integrating \cref{eqn:partial_kl_partial_t_finished} on $t\in [0, T]$ finishes the proof.
\end{proof}

\subsection{Deferred proofs for \cref{lem:grad_noisy_train}}
\label{app:proof_drift_mlp}

\begin{replemma}{lem:grad_noisy_train}
    Let $M_T$ be the subspace spanned by gradients $\{\nabla\ell(\out_{\param_t}(\x_i; \y_i): (\x_i, \y_i)\in \D, t\in[0, T] \}_{i=1}^n$ throughout Langevin diffusion $\left(\param_t\right)_{t\in[0, T]}$. Denote $\lVert \cdot \rVert_{M_T}$ as the $\ell_2$ norm of the projected input vector onto $M_T$. 
Suppose that there exists constants $c, \beta >0$ such that for any $\param\,,\param'$ and $(\x,\y)$, we have $\lVert \nabla \ell(f_\param(\x); \y)) - \nabla\ell(f_{\param'}(\x); \y)\rVert_2 < \max\{c\,, \beta \lVert \param - \param'\rVert_{M_T}\}$. Then running Langevin diffusion \cref{eqn:langevin} with Gaussian initialization distribution \eqref{eqn:init} satisfies $\varepsilon$-KL privacy with $\varepsilon = \frac{\max_{\D, \D'}\int_0^T\mathbb{E}\left[\left\lVert \nabla \Loss (\param_t;\D) - \nabla \Loss (\param_t;\D')\right\rVert_2^2\right] \mathrm{d}t}{2\sigma^2}$ where
\begin{align}
    &\int_0^T\mathbb{E}\left[\left\lVert \nabla \Loss (\param_t;\D) - \nabla \Loss (\param_t;\D')\right\rVert_2^2\right] \mathrm{d}t \leq 2T \cdot \underbrace{\mathbb{E}\left[\left\lVert \nabla \Loss (\param_0;\D) - \nabla \Loss (\param_0;\D')\right\rVert_2^2\right]}_{\text{gradient difference at initialization}}\nonumber\\  & + \underbrace{\frac{2\beta^2}{n^2(2 + \beta^2)}\left(\frac{e^{(2 + \beta^2) T} - 1}{2 + \beta^2} - T\right) \cdot \left( \mathbb{E}\left[\lVert\nabla\Loss(\param_0; \D)\rVert_2^2\right] + 2\sigma^2 \text{rank}(M_T) + c^2\right)}_{\text{gradient difference fluctuation during training}} + \underbrace{\frac{2 c^2 T}{n^2}}_{\text{non-smoothness}}\text{.}\nonumber
\end{align}
\end{replemma}
\begin{proof}
By definition of the neighboring datasets $\D$ and $\D'$, and the definition of empirical risk in \cref{eqn:obj}, we have that for any $\param$, it satisfies that
\begin{align}
    \left\lVert \nabla \Loss (\param;\D) - \nabla \Loss (\param;\D')\right\rVert_2^2 = \frac{1}{n^2}\left\lVert \ell(f_\param(\x); \y)) - \nabla\ell(f_\param(\x'); \y')\right\rVert_2^2,
    \label{eqn:empirical_loss_diff}
\end{align}
where $(\x, \y)$ and $(\x', \y')$ are the differing records between neighboring datasets $\D$ amd $\D'$. By the assumption that $\lVert \nabla \ell(f_\param(\x); \y)) - \nabla\ell(f_{\param'}(\x); \y)\rVert_2 < \max\{c\,, \beta \lVert \param - \param'\rVert_{M_T}\}$, and by the Cauchy-Schwarz inequality, we further have that for any $\param$ and $\param_t$, it satisfies that
\begin{align}
    \left\lVert \nabla\ell(f_{\param_t}(\x); \y)) - \nabla\ell(f_{\param_t}(\x'); \y')\right\rVert_2^2\leq  & 2 \left\lVert \nabla\ell(f_{\param_0}(\x); \y)) - \nabla\ell(f_{\param_0}(\x'); \y')\right\rVert_2^2 \nonumber\\
    & + 2 \beta^2 \lVert \param_t - \param_0\rVert_{M_T}^2  + 2 c^2.\label{eqn:grad_diff_bound}
\end{align}
The first term $\left\lVert \nabla\ell(f_{\param_0}(\x); \y)) - \nabla\ell(f_{\param_0}(\x'); \y')\right\rVert_2^2$ is constant during training (as it only depends on the initialization). Therefore, we only need to bound the second term $\lVert \param_t - \param_0\rVert_{M_T}^2$. For brevity, we denote the function $\Phi(\param) = \lVert\param - \param_0\rVert_{M_T}^2$. Recall our definition, $p_t$ as the distribution of model parameters after running Langevin diffusion on dataset $\D$ with time $t$, and $p'_t$ as the distribution of model parameters after running Langevin diffusion on dataset $\D'$ with time $t$. Then we have that
\begin{align}
    \frac{\partial}{\partial t} \mathbb{E}_{p_t}[\Phi(\param)] = & \lim_{\eta\rightarrow 0}\frac{\mathbb{E}_{p_{t+\eta}}[\Phi(\param)] - \mathbb{E}_{p_{t}}[\Phi(\param)]}{\eta}\,.\label{eqn:partial_loss}
\end{align}

Denote $\Gamma_s$ as the following random operator on model parameters $\theta$.
\begin{align*}
    \Gamma_s(\param) = \theta - s \nabla \Loss(\param;\D) + \sqrt{2\sigma^2 s} Z\,,
\end{align*}
where $Z \sim \mathcal{N}(0, \mathbb{I})$. We first claim that the following equation holds.
\begin{align}
    \lim_{\eta \rightarrow 0} \frac{\mathbb{E}_{p_{t+\eta}}[ \Phi(\param)] - \mathbb{E}_{p_{t}}[ \Phi(\Gamma_\eta(\param))]}{\eta} = 0\,.
    \label{eqn:EM_approximate_SDE}
\end{align}
This is by using Euler-Maruyama discretization method to approximate the solution $\param_t$ of SDE~\cref{eqn:langevin}. More specifically, the approximation error $\mathbb{E}_{p_{t+\eta}}[ \Phi(\param)] - \mathbb{E}_{p_{t}}[ \Phi(\Gamma_\eta(\param))]$ is of size $O(r\eta^2)$ for small $\eta$, where $r$ is the dimension of $\param$.

Therefore, by plugging \cref{eqn:EM_approximate_SDE} into \cref{eqn:partial_loss}, we have that

\begin{align*}
    \frac{\partial}{\partial t} \mathbb{E}_{p_t}[ \Phi(\param)] = & \lim_{\eta\rightarrow 0}\frac{\mathbb{E}_{p_{t}}[ \Phi({\Gamma_\eta(\param)})] - \mathbb{E}_{p_{t}}[\Phi(\param)]}{\eta} \,.
\end{align*}

Recall that $\nabla^2\Phi(\param)$ exists almost everywhere with regard to $\param\sim p_t$. Therefore we could approximate the term $\mathbb{E}_{p_t}[\Phi({\Gamma_\eta(\param)}; \D, \D')]$ via its second-order Taylor expansion at $\param$ as follows.

\begin{align}
    \frac{\partial}{\partial t} \mathbb{E}_{p_t}[\Phi(\param)] = & \lim_{\eta\rightarrow 0}\frac{\mathbb{E}_{p_{t}}[\langle \nabla \Phi(\param),  - \eta \nabla\Loss(\param; \D) + \sqrt{2\sigma^2 \eta}Z\rangle + \sigma^2\eta Z^\top \nabla^2\Phi(\param) Z + o(\eta)]}{\eta}\nonumber\\
    = & - \mathbb{E}_{p_{t}}[\langle \nabla \Phi(\param),  \nabla\Loss(\param;\D)\rangle] + \sigma^2 \mathbb{E}_{p_{t}}[\text{Tr}\left(\nabla^2\Phi(\param)\right)]\,.\label{eqn:dW_partial_pt}
\end{align}

By plugging $\Phi(\param) = \lVert\param - \param_0\rVert_{M_T}^2$  into \cref{eqn:dW_partial_pt}, we have that
\begin{align}
    \frac{\partial}{\partial t}\mathbb{E}_{p_t}[\lVert \param - \param_0\rVert_{M_T}^2] \leq & - 2\mathbb{E}_{p_{t}}[\langle \param - \param_0,  \nabla\Loss(\param;\D)\rangle] + 2\sigma^2 \text{rank}(M_T)\\
    = &  - 2\mathbb{E}_{p_{t}}[\langle \param - \param_0,  \nabla\Loss(\param_0;\D)\rangle] + 2\sigma^2 \text{rank}(M_T)\nonumber\\
    & - 2 \mathbb{E}_{p_{t}}[\langle \param - \param_0, \nabla\Loss(\param;\D) - \nabla\Loss(\param_0;\D)\rangle]\nonumber\\
    \leq\, & \mathbb{E}[  \lVert\nabla\Loss(\param_0;\D)\rVert_2^2] + \mathbb{E}_{p_{t}}[\lVert\param - \param_0\rVert_{M_t}^2] + 2\sigma^2 \text{rank}(M_T)\\
    & +  \mathbb{E}_{p_{t}}[\lVert\param - \param_0\rVert_{M_T}^2]+  \mathbb{E}_{p_{t}}[\lVert \nabla\Loss(\param;\D) - \nabla\Loss(\param_0;\D)\rVert_2^2]\,,
\end{align}
where the last inequality is by using the Cauchy-schwartz inequality. By plugging the assumption that $\lVert \nabla \ell(f_\param(\x); \y)) - \nabla\ell(f_{\param'}(\x); \y)\rVert_2 < \max\{c\,, \beta \lVert \param - \param'\rVert_2\}$ into the above inequality, we have that 
\begin{align}
    \frac{\partial}{\partial t}\mathbb{E}_{p_t}[\lVert \param - \param_0\rVert_{M_T}^2]  &\leq  (2 + \beta^2) \mathbb{E}_{p_{t}}[\lVert\param - \param_0\rVert_{M_T}^2] + \mathbb{E}[  \lVert\nabla\Loss(\param_0;\D)\rVert_2^2] + 2\sigma^2 \text{rank}(M_T) + c^2\label{eqn:param_minus_partial}\,.
\end{align}
By solving the above ordinary differential inequality on $t\in[0, T]$, we have that
\begin{align}
    \mathbb{E}_{p_t}[\lVert \param - \param_0\rVert_{M_T}^2]  \leq \frac{e^{(2 + \beta^2) t} - 1}{2 + \beta^2} \left(\mathbb{E}[  \lVert\nabla\Loss(\param_0;\D)\rVert_2^2] + 2\sigma^2 \text{rank}(M_T) + c^2\right)\,. \label{eqn:param_change_bound}
\end{align}

By plugging \cref{eqn:param_change_bound} into \cref{eqn:grad_diff_bound} and \cref{eqn:empirical_loss_diff}, followed by integration over time $t\in [0, T]$, we have that
\begin{align}
    &\int_0^T \mathbb{E}_{p_t}\left[\left\lVert \nabla \Loss (\param;\D) - \nabla \Loss (\param;\D')\right\rVert_2^2\right] \mathrm{d}t \leq 2T \cdot \mathbb{E}_{p_0}\left[\left\lVert \nabla \Loss (\param;\D) - \nabla \Loss (\param;\D')\right\rVert_2^2\right]\\  & + \frac{2\beta^2}{n^2(2 + \beta^2)}\left(\frac{e^{(2 + \beta^2) T} - 1}{2 + \beta^2} - T\right) \cdot \left( \mathbb{E}_{p_0}\left[\lVert\nabla\Loss(\param; \D)\rVert_2^2\right] + 2\sigma^2 \text{rank}(M_T) + c^2\right) + \frac{2 c^2 T}{n^2}\,,
\end{align}
which concludes our proof.

\end{proof}

\section{Deferred proofs for \cref{sec:privacy_linearized}}

\subsection{Bounding the gradient norm at initialization}

\label{app:proof_dimension_dependency_linearized}

To bound the moment of $\ell_2$ norm of the gradient $\frac{\partial f(\x)}{\partial \param_l}$ of network output function, we need the following (extended) lemmas from \citet{zhu2022robustness}.

\begin{lemma}[{\cite[Lemma 1]{zhu2022robustness}}]
Let $\bm w\sim \mathcal{N}(0,\sigma^2 \mathbb{I}_n)$, then for two fixed non-zero vectors $\bm h_1, \bm h_2 \in \mathbb{R}^n$ whose correlation is unknown, define two random variables $X = (\bm w^\top \bm h_1 1_{\{\bm w^\top \bm h_2 \geq 0\}})^{2}$ and $Y = s(\bm w^\top \bm h_1)^{2}$, where $s\sim \text{Ber}(1, 1/2)$ follows a Bernoulli distribution with $1$ trial and $\frac{1}{2}$ success rate, and $s$ and $\bm w$ are independent random variables. Then $X$ and $Y$ have the same distribution.
\label{lem:end_decompose}
\end{lemma}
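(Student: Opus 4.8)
The plan is to reduce the $n$-dimensional problem to a two-dimensional one by setting $U := \bm w^\top \bm h_1$ and $V := \bm w^\top \bm h_2$. Since $\bm w \sim \mathcal{N}(0, \sigma^2 \mathbb{I}_n)$, the pair $(U, V)$ is jointly Gaussian with mean zero (its covariance depends on $\lVert\bm h_1\rVert$, $\lVert\bm h_2\rVert$ and the unknown inner product $\bm h_1^\top \bm h_2$, but I will not need to know it). In these variables $X = U^2 \, 1_{\{V \geq 0\}}$, and the target is to show $X \stackrel{d}{=} s\, U^2$ with $s$ an independent $\mathrm{Ber}(1/2)$ draw.

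The key structural input is the central symmetry of the centered Gaussian: $\bm w \stackrel{d}{=} -\bm w$, hence $(U, V) \stackrel{d}{=} (-U, -V)$. Applying this to the pair $(U^2, 1_{\{V \geq 0\}})$ --- noting that $U^2$ is invariant under $U \mapsto -U$ while $1_{\{V\geq 0\}} \mapsto 1_{\{-V \geq 0\}} = 1_{\{V \leq 0\}}$ --- yields $(U^2, 1_{\{V\geq 0\}}) \stackrel{d}{=} (U^2, 1_{\{V \leq 0\}})$. Because $\bm h_2 \neq 0$, the variable $V$ has a non-degenerate continuous law, so $P(V = 0) = 0$ and $1_{\{V \leq 0\}} = 1 - 1_{\{V \geq 0\}}$ almost surely. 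Writing $A := U^2$ and $B := 1_{\{V\geq 0\}} \in \{0,1\}$, this establishes the distributional identity $(A, B) \stackrel{d}{=} (A, 1 - B)$, which is the crux of the argument and neatly sidesteps the unknown correlation.

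From here the conclusion follows by a test-function computation. For any bounded measurable $g$, averaging the identity above gives
\begin{align*}
\mathbb{E}[g(AB)] = \tfrac{1}{2}\big(\mathbb{E}[g(AB)] + \mathbb{E}[g(A(1-B))]\big).
\end{align*}
Since $B \in \{0,1\}$, one of the two numbers $AB$ and $A(1-B)$ equals $A$ and the other equals $0$, so pointwise $g(AB) + g(A(1-B)) = g(A) + g(0)$; hence $\mathbb{E}[g(X)] = \mathbb{E}[g(AB)] = \tfrac{1}{2}\mathbb{E}[g(A)] + \tfrac{1}{2} g(0)$. On the other side, independence of $s$ and $\bm w$ gives $\mathbb{E}[g(s\,U^2)] = \tfrac{1}{2}\mathbb{E}[g(U^2)] + \tfrac{1}{2} g(0) = \tfrac{1}{2}\mathbb{E}[g(A)] + \tfrac{1}{2} g(0)$. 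The two expressions agree for every such $g$, so $X \stackrel{d}{=} Y$.

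I expect the only real subtlety --- rather than a genuine obstacle --- to be justifying that the unknown, and possibly strong, correlation between $U$ and $V$ is irrelevant: the symmetry argument works for any centered jointly Gaussian pair regardless of its covariance, and the single place where care is needed is discarding the measure-zero boundary event $\{V = 0\}$, which requires only $\bm h_2 \neq 0$.
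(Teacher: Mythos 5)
Your proof is correct. Note that the paper does not prove this statement itself---it imports it verbatim as \cite[Lemma 1]{zhu2022robustness}---so there is no in-paper proof to compare against; your central-symmetry argument ($\bm w \stackrel{d}{=} -\bm w$, hence $(U^2, 1_{\{V\geq 0\}}) \stackrel{d}{=} (U^2, 1_{\{V\leq 0\}})$, plus $P(V=0)=0$) is precisely the standard route, which the cited reference carries out via CDFs, i.e.\ $P(U^2 \leq t,\, V \geq 0) = P(U^2 \leq t,\, V \leq 0)$ summing to $P(U^2 \leq t)$, rather than via your test-function identity $g(AB) + g(A(1-B)) = g(A) + g(0)$; the two formulations are equivalent and equally rigorous. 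You correctly identify that the unknown correlation between $\bm h_1$ and $\bm h_2$ is immaterial and that the only point of care is the null event $\{V = 0\}$ (which also implicitly uses $\sigma > 0$, as the lemma intends).
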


\begin{lemma}[{Extension of \cite[Lemma 2]{zhu2022robustness}}]
Given a fixed non-zero matrix $\bm H_1\in \mathbb{R}^{p\times r}$ and a fixed non-zero vector $\bm h_2\in \mathbb{R}^p$ and , let $\bm W \in \mathbb{R}^{q\times p}$ be a random matrix with i.i.d. entries $W_{ij}\sim \mathcal{N}(0, \beta)$ and a matrix (or vector) $\bm V = \phi'(\bm W \bm h_2) \bm W \bm H_1 \in \mathbb{R}^{q\times r}$, then, we have $\mathbb{E}[\frac{\lVert \bm V\rVert_{F}^{2}}{\lVert \bm H_1 \rVert_{F}^{2}}] = \frac{q\beta}{2}$. 
\label{lem:recursive_chi_squared}
\end{lemma}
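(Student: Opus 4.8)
The plan is to reduce the Frobenius-norm expectation to a sum of scalar second-moment expectations, each of which is exactly the quantity governed by \cref{lem:end_decompose}, and then to handle the matrix $\bm H_1$ (as opposed to the single-vector case of the original \cite[Lemma 2]{zhu2022robustness}) by summing over its columns. First I would expand $\lVert \bm V\rVert_F^2$ by rows of $\bm W$ and columns of $\bm H_1$. Writing $\bm w_1^\top,\dots,\bm w_q^\top$ for the rows of $\bm W$ (so that each $\bm w_i\sim\mathcal{N}(0,\beta\mathbb{I}_p)$ i.i.d.) and $\bm c_1,\dots,\bm c_r$ for the columns of $\bm H_1$, I note that multiplying by $\phi'(\bm W\bm h_2)$ scales the $i$-th row of $\bm W\bm H_1$ by $\phi'(\bm w_i^\top \bm h_2)=1_{\{\bm w_i^\top \bm h_2\geq 0\}}$, so the $(i,j)$ entry of $\bm V$ equals $1_{\{\bm w_i^\top \bm h_2\geq 0\}}\,(\bm w_i^\top \bm c_j)$. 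Hence
\[
\lVert \bm V\rVert_F^2=\sum_{i=1}^q\sum_{j=1}^r 1_{\{\bm w_i^\top \bm h_2\geq 0\}}\,(\bm w_i^\top \bm c_j)^2 .
\]
Since the rows $\bm w_i$ are i.i.d., every summand with a fixed $j$ has the same expectation, so by linearity $\mathbb{E}[\lVert \bm V\rVert_F^2]=q\sum_{j=1}^r \mathbb{E}\big[1_{\{\bm w^\top \bm h_2\geq 0\}}(\bm w^\top \bm c_j)^2\big]$ for a single row $\bm w\sim\mathcal{N}(0,\beta\mathbb{I}_p)$.

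The key step is the per-row expectation. Because $\bm w^\top \bm h_2$ and $\bm w^\top \bm c_j$ share the same $\bm w$, their correlation is unknown and the expectation does \emph{not} factor into a product of $\mathbb{E}[1_{\{\bm w^\top\bm h_2\geq 0\}}]$ and $\mathbb{E}[(\bm w^\top\bm c_j)^2]$. I would resolve this by invoking \cref{lem:end_decompose} with $\bm h_1=\bm c_j$ and $\sigma^2=\beta$: it states that $1_{\{\bm w^\top\bm h_2\geq 0\}}(\bm w^\top \bm c_j)^2$ is equal in distribution to $s\,(\bm w^\top \bm c_j)^2$ with $s\sim\text{Ber}(1,1/2)$ independent of $\bm w$. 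Taking expectations and using independence gives $\mathbb{E}[s(\bm w^\top\bm c_j)^2]=\tfrac12\mathbb{E}[(\bm w^\top\bm c_j)^2]=\tfrac{\beta}{2}\lVert \bm c_j\rVert_2^2$, where the last equality uses $\bm w^\top \bm c_j\sim\mathcal{N}(0,\beta\lVert\bm c_j\rVert_2^2)$. Summing over $j$ then yields $\mathbb{E}[\lVert\bm V\rVert_F^2]=\tfrac{q\beta}{2}\sum_{j=1}^r\lVert\bm c_j\rVert_2^2=\tfrac{q\beta}{2}\lVert\bm H_1\rVert_F^2$, and dividing by the deterministic constant $\lVert\bm H_1\rVert_F^2$ gives the claim.

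The main obstacle is exactly the correlation issue in the per-row expectation: one cannot naively split off the sign indicator, since conditioning on the half-space event $\{\bm w^\top\bm h_2\geq 0\}$ biases the distribution of $\bm w^\top\bm c_j$ whenever $\bm c_j$ and $\bm h_2$ are not orthogonal. \cref{lem:end_decompose} is precisely what decouples this event into an independent fair coin, which is what makes the clean factor $1/2$ emerge for an arbitrary (unknown) correlation; the extension beyond the vector case of \cite[Lemma 2]{zhu2022robustness} is then only the column-wise bookkeeping in the first and last steps.
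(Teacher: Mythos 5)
Your proposal is correct and follows essentially the same route as the paper's proof: the same entry-wise expansion of $\lVert \bm V\rVert_F^2$ over rows of $\bm W$ and columns of $\bm H_1$, the same invocation of the decoupling lemma (\cref{lem:end_decompose}) to replace the indicator by an independent $\text{Ber}(1,1/2)$ coin, and the same Gaussian second-moment computation $\mathbb{E}[(\bm w^\top \bm c_j)^2]=\beta\lVert\bm c_j\rVert_2^2$. The only cosmetic difference is that you apply the decoupling lemma term by term and use exchangeability of the i.i.d.\ rows, whereas the paper states a joint equality in distribution with one Bernoulli $\rho_i$ per row; since only marginal expectations are needed, both are equally valid.
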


\begin{proof}
    According to the definition of $\bm V = \phi'(\bm W\bm h_2)\bm W \bm H_1 \in \mathbb{R}^{q\times r}$, we have:
    \begin{align*}
        \lVert \bm V\rVert_{F}^{2} = \sum_{i=1}^q\sum_{j=1}^r\left( \bm D_{i,i} \langle \bm W^{[i; ]}, \bm H_1^{[; j]}\rangle\right)^{2},
    \end{align*}
    where $\bm D_{i,i} = 1_{\{\langle \bm W^{[i]}, \bm h_2 \rangle \geq 0 \}}$, $\bm W^{[i; ]}$ is the $i$-th row of $\bm W$, and $\bm H_1^{[;j]}$ is the $j$-th column vector of $\bm H_1$. Therefore by Lemma~\ref{lem:end_decompose}, with i.i.d. Bernoulli random variable $\rho_1, \cdots, \rho_q\sim \text{Ber}(1, 1/2)$, we have
    \begin{align}
        \lVert \bm V \rVert_{F}^{2} \mathop{=}^d \sum_{i=1}^q \sum_{j=1}^r \rho_i \langle \param^{[i;]}, \bm H_1^{[;j]}\rangle^2 = \sum_{i=1}^q \sum_{j=1}^r \rho_i \beta \lVert \bm H_1^{[;j]}\rVert_2^2 \tilde{w}_{ij}^2\,,
        \label{eqn:output_bound_distribution}
    \end{align}
    where $\tilde{w}_{ij} = \langle \bm W^{[i;]}, \bm H_1^{[; j]} \rangle / \left( \sqrt{\beta\lVert \bm H_1^{[; j]} \rVert_2^2} \right)$. By the fact that $\param^{[i;]}$ has i.i.d. Gaussian entries, for any fixed $j$, we have that $\tilde{w}_{ij} \sim \mathcal{N}(0, 1), i = 1, \cdots, q$ independently. Therefore, we have
    \begin{align*}
        \mathbb{E}\left[\lVert \bm V\rVert_F^{2}\right] & = \sum_{i=1}^q \sum_{j=1}^r \mathbb{E} [\rho_i] \beta \lVert \bm H_1^{[;j]}\rVert_2^2 \mathbb{E}[\tilde{w}_{ij}^2] = \frac{q\beta}{2} \mathbb{E}[\lVert \bm H_1\rVert_F^2]\,.
    \end{align*}
\end{proof}

Now, we are ready to prove output gradient expectation at random initialization as follows.

\begin{lemma}[Output Gradient Expectation Bound at Random Initialization]
\label{lem:grad_init_app}
Fix any data record $\x$, then over the randomness of the initialization distributions for $\bm W_1,\cdots, \bm W_L$, i.e., $\bm W_l\sim \mathcal{N}(0, \beta_l\mathbb{I})$ for $l=1,\cdots, L-1$, it satisfies that
\begin{align}
    \label{eqn:app_grad_init}
    \mathbb{E}_{\param} \left[\lVert \frac{\partial f(\x)}{\partial \text{Vec}(\param)} \rVert_F^{2}\right] & = \lVert \x \rVert_2^2 o \left(\prod_{i=1}^{L-1}\frac{\beta_im_i}{2}\right) \sum_{l = 1}^{L} \frac{\beta_L}{\beta_l}\,.
\end{align}
\end{lemma}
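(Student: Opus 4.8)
The plan is to reduce the statement to a layer-by-layer second-moment computation, evaluating each layer's contribution as a product of a \emph{forward} factor and a \emph{backward} factor. Since the parameter blocks of distinct layers occupy orthogonal coordinates of $\text{Vec}(\param)$, the Frobenius norm splits as $\lVert \partial f(\x)/\partial \text{Vec}(\param)\rVert_F^2 = \sum_{l=1}^{L}\lVert \partial f(\x)/\partial \bm W_l\rVert_F^2$, so it suffices to compute $\mathbb{E}\,\lVert \partial f(\x)/\partial \bm W_l\rVert_F^2$ for each $l$ and verify that summing over $l$ reproduces $\eqref{eqn:app_grad_init}$ (the $l=L$ term supplying the $\beta_L/\beta_L=1$ summand).

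First I would unfold the backpropagation. Writing $\bm D_j = \diag(\phi'(\bm W_j \bm h_{j-1}))$ for the ReLU activation pattern at layer $j$ and $\bm J_l = \bm W_L \bm D_{L-1}\bm W_{L-1}\cdots \bm D_{l+1}\bm W_{l+1}$ for the output-to-layer-$l$ sensitivity, the chain rule expresses $\partial f_k/\partial \bm W_l$ as the outer product $(\bm D_l \bm J_l^\top \bm e_k)\,\bm h_{l-1}^\top$. Summing squared norms over the $o$ output coordinates collapses this into the factorization
\[
\left\lVert \frac{\partial f(\x)}{\partial \bm W_l}\right\rVert_F^2 = \lVert \bm h_{l-1}\rVert_2^2\,\lVert \bm G_l\rVert_F^2,\qquad \bm G_l := \bm J_l\bm D_l,
\]
where $\bm G_l$ is a product of post-$l$ weight matrices interleaved with diagonal sign matrices, hence depends on the forward pass only through activation \emph{signs}, never through hidden-layer magnitudes.

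I would then decouple the two factors by conditioning on $\bm h_{l-1}$: the forward factor is measurable with respect to $\bm W_1,\dots,\bm W_{l-1}$, while $\bm G_l$ is built from the fresh matrices $\bm W_l,\dots,\bm W_L$. Since $\bm G_l$ only sees scale- and rotation-invariant sign patterns, $\mathbb{E}[\lVert \bm G_l\rVert_F^2\mid \bm h_{l-1}]$ is a constant $c_l$, giving $\mathbb{E}\,\lVert \partial f/\partial \bm W_l\rVert_F^2 = c_l\,\mathbb{E}\lVert \bm h_{l-1}\rVert_2^2$. The forward expectation follows by iterating \cref{lem:recursive_chi_squared} in the form $\bm h_j = \phi'(\bm W_j\bm h_{j-1})\bm W_j\bm h_{j-1}$ (so $\bm H_1 = \bm h_2 = \bm h_{j-1}$, $r=1$), yielding $\mathbb{E}\lVert \bm h_{l-1}\rVert_2^2 = \lVert \x\rVert_2^2\prod_{i=1}^{l-1}\tfrac{\beta_i m_i}{2}$. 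For $c_l$ I would peel the factors of $\bm G_l = \bm W_L\bm D_{L-1}\bm W_{L-1}\cdots \bm D_{l+1}\bm W_{l+1}\bm D_l$ from the output side: removing the linear layer $\bm W_L$ gives $o\,\beta_L$; stripping each block $\bm D_j\bm W_j$ (for $j=L-1,\dots,l+1$) is an application of \cref{lem:recursive_chi_squared} with $\bm H_1$ the remaining product, contributing $\tfrac{m_j\beta_j}{2}$; and the leftover $\bm D_l$ contributes $\mathbb{E}\lVert \bm D_l\rVert_F^2 = m_l/2$. This gives $c_l = o\,\beta_L\,\tfrac{m_l}{2}\prod_{j=l+1}^{L-1}\tfrac{m_j\beta_j}{2} = o\,\tfrac{\beta_L}{\beta_l}\prod_{i=l}^{L-1}\tfrac{\beta_i m_i}{2}$, and multiplying by the forward factor and summing over $l$ yields the claim.

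The main obstacle is exactly this backward peeling, where each $\bm W_j$ appears both as the linear map and inside its own indicator $\bm D_j = \phi'(\bm W_j\bm h_{j-1})$, a correlation that forbids treating the signs as independent coin flips. The reason the peeling succeeds is that removing layers from the output inwards always exposes precisely the form $\phi'(\bm W_j\bm h_{j-1})\bm W_j\bm H_1$ in which both $\bm h_{j-1}$ and the trailing product $\bm H_1$ are independent of $\bm W_j$; the single-layer correlation is thereby isolated and resolved inside \cref{lem:recursive_chi_squared}, whose proof invokes \cref{lem:end_decompose} to substitute an independent $\mathrm{Ber}(1/2)$ for the ReLU indicator in the second moment. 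Two smaller points still need care: justifying that $c_l$ is genuinely independent of $\bm h_{l-1}$ via scale/rotation invariance of the sign patterns under Gaussian weights, and handling the linear output layer $l=L$ separately (no activation, so its contribution is directly $o\lVert \bm h_{L-1}\rVert_2^2$).
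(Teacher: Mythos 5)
Your proposal is correct and takes essentially the same route as the paper's proof: the same per-layer split $\lVert \partial f(\x)/\partial \bm W_l\rVert_F^2 = \lVert \bm h_{l-1}\rVert_2^2\,\lVert \bm G_l\rVert_F^2$, the same output-side peeling in which \cref{lem:recursive_chi_squared} (built on \cref{lem:end_decompose}) resolves the indicator--weight correlation one layer at a time, rotational invariance giving the $o\beta_L$ factor for the top layer, the same forward recursion $\mathbb{E}\lVert \bm h_{l-1}\rVert_2^2 = \lVert\x\rVert_2^2\prod_{i=1}^{l-1}\tfrac{\beta_i m_i}{2}$, and the separate treatment of $l=L$. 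The only difference is bookkeeping --- the paper writes the backward factor as a telescoping product of nested conditional ratio expectations (its $\bm t_l^{l'}$ notation), which makes your ``constant $c_l$ independent of $\bm h_{l-1}$'' claim explicit rather than an appeal to scale/rotation invariance --- and all your constants match the paper's exactly.
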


\begin{proof}
We use $\text{Vec}(\param_l) $ to denote the concatenation of all row vector of the parameter matrix $\param_l$. By chain rule, for $l=1, \cdots, L-1$, we have that
\begin{align}
    \frac{\partial \out(\x)}{\partial \text{Vec}(\param_l)} & = \frac{\partial h_L(\x)}{\partial h_{L-1}(\x)} \left(\prod_{i = 1}^{L - 1 - l} \frac{\partial h_{L-i}(\x)}{\partial h_{L-1-i}(\x)} \right)\frac{\partial h_l}{\text{Vec}(\param_l)} \\
    & = \param_L \left(\prod_{i = 1}^{L - 1 - l} \sigma_{L-i}'\param_{L-i} \right) \sigma_l' \begin{pmatrix}
        h_{l-1}^\top & 0 &\cdots \\
        \vdots & \vdots & \vdots\\
        0 & 0 & h_{l-1}^\top\\
    \end{pmatrix}_{m_{l}\times m_lm_{l-1}}\,.
    \label{eqn:output_gradient}
\end{align}

Similarly, for the $L$-th layer, we have that
\begin{align}
    \frac{\partial \out(\x)}{\partial \text{Vec}(\param_L)} & = \begin{pmatrix}
        h_{L-1}^\top & 0 &\cdots \\
        \vdots & \vdots & \vdots\\
        0 & 0 & h_{L-1}^\top\\
    \end{pmatrix}_{o\times om_{L-1}}\,.
    \label{eqn:grad_last_layer} 
\end{align}

By properties of ReLU activation $\phi$, we have $\phi'_{L-i} = \diag[\sgn (W_{L-i}h_{L-1-i})]$, where $\sgn(x) = \begin{cases}1& x> 0\\ 0 & x\leq 0\end{cases}$ operates coordinate-wise with regard to the input matrix. Therefore, we have that for $l=1, \cdots, L-1$
\begin{align*}
    \frac{\partial \out (\x)}{\partial \text{Vec}(\param_l)} = & \param_L \left(\prod_{i = 1}^{L - 1 - l} \diag[\sgn (W_{L-i}h_{L-1-i})]\param_{L-i}\right) \cdot \diag[\sgn (W_{l}h_{l-1})]\begin{pmatrix}
        h_{l-1}^\top & 0 &\cdots \\
        \vdots & \vdots & \vdots\\
        0 & 0 & h_{l-1}^\top\\
    \end{pmatrix}_{m_l\times m_lm_{l-1}}\,.
\end{align*}
For notational simplicity, we introduce the notation of $\bm t_l^{l'} $ for $l = 1, \cdots, L-1$ and $l\leq l'< L$ as follows.
\begin{align*}
    \bm t_l^{l'} := \left(\prod_{i = L - l'}^{L - 1 - l} \diag[\sgn (W_{L-i}h_{L-1-i})]\param_{L-i}\right) \cdot \diag[\sgn (W_{l}h_{l-1})]\begin{pmatrix}
        h_{l-1}^\top & 0 &\cdots \\
        \vdots & \vdots & \vdots\\
        0 & 0 & h_{l-1}^\top\\
    \end{pmatrix}_{m_l\times m_lm_{l-1}}\,.
\end{align*}

Then by definition, we have that

\begin{align*}
    \mathbb{E}_{\param} \left[\lVert \frac{\partial f(\x)}{\partial \text{Vec}(\param_l)} \rVert_F^{2}\right] & = \mathbb{E}_{\param}\left[ \lVert \param_L \bm t^{L-1}_l \rVert_F^{2}\right] = \mathbb{E}_{\param}\left[ \frac{\lVert \param_L \bm t^{L-1}_l \rVert_F^{2}}{\lVert \bm t_l^{L-1} \rVert_F^{2}} \cdot \frac{\lVert \bm t^{L-1}_l \rVert_F^{2}}{\lVert \bm t_l^{L-2} \rVert_F^{2}} \cdots \frac{\lVert \bm t^{l+1}_l \rVert_F^{2}}{\lVert \bm t_l^{l} \rVert_F^{2}} \cdot \lVert \bm t_l^l\rVert_F^{2} \right]\\
    & = \mathbb{E}_{\param_1,\cdots, \param_l}\left[ \lVert \bm t_l^l\rVert_F^{2} \cdot 
    \mathbb{E}_{\param_{l+1}}\left[\frac{\lVert \bm t^{l+1}_l \rVert_F^{2}}{\lVert \bm t_l^{l} \rVert_F^{2}} 
    \cdots \mathbb{E}_{\param_L}\left[\frac{\lVert \param_L \bm t^{L-1}_l \rVert_F^{2}}{\lVert \bm t_l^{L-1} \rVert_F^{2}}\right]
    \right]
    \right]\,.
\end{align*}
By rotational invariance of Gaussian column vectors, we prove that for any possible value of $\bm t_{l}^{L-1}$ (which is completely determined by $\param_1,\cdots,\param_{L-1}$ and $\x$), for any $l=1, \cdots, L-1$, we have that

\begin{align}
    \mathbb{E}_{\param_L}\left[\frac{\lVert \param_L \bm t^{L-1}_l \rVert_F^{2}}{\lVert \bm t_l^{L-1} \rVert_F^{2}}\right] & =  \mathbb{E}_{\param_L}\left[\frac{\lVert \param_L \bm e_1 \rVert_2^{2}}{\lVert \bm e_1 \rVert_2^{2}}\right] = \beta_Lo \,.   \label{eqn:first_term}
\end{align}
By Lemma~\ref{lem:recursive_chi_squared}, for any $l = 1, \cdots, L-2$ and $l\leq l'\leq L-2$, we have that
\begin{align}
    \mathbb{E}_{\param_{l+1}}\left[\frac{\lVert \bm t^{l'+1}_l \rVert_F^{2}}{\lVert \bm t_l^{l'} \rVert_F^{2}} \right] =  \frac{\beta_{l'+1}}{2}m_{l'+1} \,.
    \label{eqn:tl_term}
\end{align}
We now bound the last term $\mathbb{E}_{\param_{1}, \cdots, \param_l}\left[\lVert \bm t^{l}_l \rVert_F^{2} \right]$ for $l=1, \cdots, L-1$. By definition, we have that
\begin{align}
    \mathbb{E}_{\param_{1}, \cdots, \param_l}\left[\lVert \bm t^{l}_l \rVert_F^{2} \right] & = \mathbb{E}_{\param_{1}, \cdots, \param_l}\left[\sum_{i=1}^{m_l} 1_{\{\bm W_l^{[i;]} h_{l-1} \leq 0\}} \cdot \lVert h_{l-1}\rVert_2^2 \right] = \frac{m_{l}}{2}\mathbb{E}_{\bm W_1, \cdots, \bm W_{l-1}} \mathbb{E}[\lVert h_{l-1}\rVert_2^2]\,.\label{eqn:last_decompose_second}
\end{align}
To bound the $\mathbb{E}_{\param_1, \cdots, \param_l}[\lVert h_{l-1}\rVert_2^2]$, note that by Lemma~\ref{lem:recursive_chi_squared}, we prove that for any $l=1, \cdots, L-1$
\begin{align}
    \mathbb{E}_{\param_{l-1}}\left[ \frac{\lVert h_{l-1}(\x) \rVert_{2}^{2}}{\lVert h_{l-2}(\x) \rVert_{2}^{2}} \right] = \frac{\beta_{l-1}}{2}m_{l-1}\,. \label{eqn:last_recursive}
\end{align}
Therefore, for any $l = 1, \cdots, L$, we have that
\begin{align}
    \mathbb{E}_{\param_{1}, \cdots, \param_{l-1}}\left[ \lVert h_{l-1}(\x)\rVert_2^{2} \right] & = \mathbb{E}_{\param_{1}, \cdots, \param_{l-1}}\left[\frac{\lVert h_{l-1}(\x) \rVert_2^{2}}{\lVert h_{l-2}(\x) \rVert_2^{2}}\cdots \frac{\lVert h_{1}(\x) \rVert_2^{2}}{\lVert \x \rVert_2^{2}} \right] \cdot \lVert \x \rVert_2^{2}\\
    & = \left(\prod_{i=1}^{l-1}\frac{\beta_i}{2}m_i\right) \lVert \x \rVert_2^{2}  \,.   \label{eqn:last_decompose}
\end{align}
By plugging \eqref{eqn:last_decompose} into \eqref{eqn:last_decompose_second}, we have that
\begin{align}
    \mathbb{E}_{\param_{1}, \cdots, \param_l}\left[\lVert \bm t^{l}_l \rVert_2^{2} \right] & = \frac{m_l}{2} \left(\prod_{i=1}^{l-1}\frac{\beta_i}{2}m_i\right) \lVert \x \rVert_2^2 \,.
    \label{eqn:last_term}
\end{align}
By combining \eqref{eqn:first_term}, \eqref{eqn:tl_term} and \eqref{eqn:last_term}, we have for any $l=1, \cdots, L-1$
\begin{align*}
    &\mathbb{E}_{\param} \left[\lVert \frac{\partial f(\x)}{\partial \text{Vec}(\param_l)} \rVert_F^{2}\right] \nonumber\\
    &= \frac{m_l}{2} \left(\prod_{i=1}^{l-1}\frac{\beta_i}{2}m_i\right) \cdot \left(\prod_{i=l+1}^{L-1}\frac{\beta_i}{2}m_i\right) \cdot \beta_L o \cdot \lVert \x \rVert_2^2 
    = \frac{\beta_L}{\beta_l}\lVert \x \rVert_2^2o \left(\prod_{i=1}^{L-1}\frac{\beta_im_i}{2}\right) \,.
\end{align*}
On the other hand, by plugging \cref{eqn:last_decompose} (under $\ell = L$) into \cref{eqn:grad_last_layer}, we have that
\begin{align*}
    \mathbb{E}_{\param} \left[\lVert \frac{\partial f(\x)}{\partial \text{Vec}(\param_L)} \rVert_2^{2}\right] = o\left(\prod_{i=1}^{L-1}\frac{\beta_i}{2}m_i\right) \lVert \x \rVert_2^{2}
\end{align*}
Therefore, 
\begin{align*}
    \mathbb{E}_{\param} \left[\lVert \frac{\partial f(\x)}{\partial \text{Vec}(\param)} \rVert_F^{2}\right] & = \sum_{l=1}^L \lVert \frac{\partial f(\x)}{\partial \param_l}\rVert_F^2  = \lVert \x \rVert_2^2 o \left(\prod_{i=1}^{L-1}\frac{\beta_im_i}{2}\right) \sum_{l = 1}^{L} \frac{\beta_L}{\beta_l} \,,
\end{align*}
which suffices to prove \cref{eqn:app_grad_init}.
\end{proof}

\subsection{Deferred proof for \cref{lem:grad_init}}
\label{proof:grad_init}
Finally, we prove that the gradient difference between two training datasets under linearized network is bounded by a constant throughout training (which only depends on the network width, depth and initialization distribution).

\begin{reptheorem}{lem:grad_init}
    Under \cref{assum:data}, taking over the randomness of the random initialization and the Brownian motion, for any $t\in[0, T]$, running Langevin diffusion on linearized network \cref{eqn:network_linearized} satisfies that
    \begin{align}
        \mathbb{E} \left[ \lVert \nabla\Loss(\param_t^{lin}; \D) - \Loss(\param_t^{lin}; \D')\rVert_2^2 \right] \leq \frac{4 B}{n^2}\,,
            \label{eqn:grad_diff_linear_app}
    \end{align}
    where $n$ is the training dataset size, and $B$ is a constant that only depends on the data dimension $d$, the number of classes $o$, the network depth $L$, the per-layer network width $\{ m_i \}_{i=1}^{L}$, and the per-layer variances $\{ \beta_i \}_{i=1}^L$ of the Gaussian initialization distribution as follows.
    \begin{align}\label{eqn:def_B_app}
        B := d\cdot o \cdot\left(\prod_{i=1}^{L-1}\frac{\beta_im_i}{2}\right) \sum_{l = 1}^{L} \frac{\beta_L}{\beta_l}\,,
    \end{align}
\end{reptheorem}

\begin{proof}
    Denote $\param$ as the initialization parameters and denote $\param_t^{lin}$ as the parameters for linearized network after training time $t$. Then the gradient difference under linearized network and cross-entropy loss function is as follows.
\begin{align*}
    &\lVert\nabla \mathcal{L}(W_t; \D) - \nabla \mathcal{L}(W_t; \D') \rVert_F^2 \nonumber\\
    = & \left\lVert\frac{\nabla  \out_\param(\x)^{\top} \left(\text{softmax}(\out_{\param_t}(\x)) - \y\right)}{n} - \frac{\nabla  \out_\param(\x')^{\top} \left(\text{softmax}(\out_{\param_t}(\x')) - \y'\right)}{n}\right\rVert_F^2\nonumber\\
    \leq & \frac{2}{n^2}\left(\lVert\nabla  \out_\param(\x) \rVert_F^2 + \lVert\nabla  \out_\param(\x') \rVert_F^2\right)\,.
\end{align*}
Plugging \cref{lem:grad_init_app} into the above equation with data~\cref{assum:data} suffice to prove the result.
\end{proof}

\section{Deferred proofs for \cref{sec:opt_linearized}}

\subsection{Deferred proof for \cref{prop:extension_glm_evading} on convergence of training linearized network}

\label{app:proof_opt_linearized_avg_iterate}

In this section, we prove empirical risk bound for the average of all iterates of Langevin diffusion, building on standard results \cite[Theorem 2]{shamir2013stochastic} and \cite[Theorem 3.1]{song2021evading} for the (discrete time) stochastic gradient descent algorithm.
\begin{replemma}{prop:extension_glm_evading}\textup{(Extension of \cite[Theorem 2]{shamir2013stochastic} and \cite[Theorem 3.1]{song2021evading})}
    Let $\Loss_0^{lin}(\param; \D)$ be the empirical risk function of linearized newtork \cref{eqn:network_linearized} expanded at initialization vector $\param_0^{lin}$. Let $\param^{*}_{0}$ be an $\alpha$-near-optimal solution for the ERM problem such that $\Loss_0^{lin}(\param_0^*; \D) - \min_{\param}\Loss_0^{lin}(\param; \D)\leq \alpha$. Let $\D = \x_1,\cdots,\x_n$ be an arbitrary training dataset of size $n$, and denote $M_0 = \begin{pmatrix}\nabla\out_{\param_0^{lin}}(\x_1), \cdots, \nabla\out_{\param_0^{lin}}(\x_n)\end{pmatrix}^\top$ as the NTK feature matrix at initialization. Then running Langevin diffusion~\eqref{eqn:langevin} on $\Loss_0^{lin}(\param)$ with time $T$ and initialization vector $\param_0^{lin}$ satisfies
    \begin{align}
        \mathbb{E} [\Loss_0^{lin}(\bar{\param}_{T}^{lin})] - \min_{\param} \Loss_0^{lin}(\param; \D) &\leq \alpha + \frac{R}{2T} + \frac{1}{2}\sigma^2 rank(M_0)\nonumber \,,
    \end{align}
    where the expectation is over Brownian motion $B_T$ in Langevin diffusion \cref{eqn:langevin}, $\bar{\param}_T^{lin} = \frac{1}{T}\int \bar{\param}_t^{lin} \mathrm{d}t$ is the average of all iterates, and $R= \lVert \param_0^{lin} - \param_0^*\rVert_{M_0}^2$ is the gap between initialization parameters $\param_0^{lin}$ and solution $\param^*_{0}$.
\end{replemma}

\begin{proof}
    Our proofs are heavily based on the idea in \cite[Theorem 3.1]{song2021evading} to work only in the parameter space spanned by the input feature vectors. And our proof serves as an extension of their bound to the continuous-time Langevin diffusion algorithm. We begin by using convexity of the empirical loss function $\Loss_0^{lin}(\param; \D)$ for linearized network to prove the following standard results
    \begin{align}
        \Loss_0^{lin}(\bar{\param}_T^{lin}; \D) - \Loss_0^{lin}(\param_{0}^*; \D) \leq \langle \bar{\param}_T^{lin} - \param_{0}^*, \nabla\Loss_0^{lin}(\bar{\param}_T^{lin}; \D)\rangle \,.\label{eqn:convexity_loss_bound}
    \end{align}
    Denote $M_{0} = \begin{pmatrix}
        \nabla\out_{\param_0^{lin}}(\x_1) & \cdots & \nabla\out_{\param_0^{lin}}(\x_n)
    \end{pmatrix}$ and compute the gradient under cross entropy loss and linearized network, we have $\nabla\Loss_0^{lin}(\param_T^{lin}; \D)$ lies in the column space of $M_0$. Denote $\Pi_{M_0}$ as the projection operator to the column space of $M_0$, then \eqref{eqn:convexity_loss_bound} can be rewritten as
    \begin{align}
        \Loss_0^{lin}(\bar{\param}_T^{lin}; \D) - \Loss_0^{lin}(\param_{0}^*; \D) \leq \langle \Pi_{M_0}(\bar{\param}_T^{lin} - \param_{0}^*), \nabla\Loss_0^{lin}(\bar{\param}_T^{lin}; \D)\rangle .
    \end{align}

    By taking expectation over the randomness of Brownian motion in Langevin diffusion, we have
    \begin{align}
        \mathbb{E}[\Loss_0^{lin}(\bar{\param}_{T}^{lin})] - \Loss_0^{lin}(\param^*_{0}; \D) \leq \frac{1}{T}\int_0^T\mathbb{E}\left[\langle \Pi_{M_0}(\param_t^{lin} - \param_{0}^*), \nabla\Loss_0^{lin}(\param_t^{lin}; \D)\rangle \right] \mathrm{d}t\,. \label{eqn:convex_loss_bound_with_grad_param_product}
    \end{align}
    We now rewrite $\mathbb{E} \left[\langle \Pi_{M_0}(\bar{\param}_t^{lin} - \param_{0}^*), \nabla\Loss_0^{lin}(\param_t^{lin}; \D)\rangle \right]$ by computing $\frac{\partial}{\partial t}\mathbb{E} [\lVert \param_t^{lin} - \param_{0}^*\rVert_{M_0}^2]$, where $\lVert \param_t^{lin} - \param_{0}^*\rVert_{M_{0}}^2 = \Pi_{M_0}\left(\param_t^{lin} - \param_{0}^*\right)^\top \Pi_{M_0}\left(\param_t^{lin} - \param_{0}^*\right)$. By applying \eqref{eqn:dW_partial_pt} with $p_t$ being the distribution for $\param_t^{lin}$ in Langevin diffusion  for linearized network starting from point initialization $\param_0^{lin}$, and with function $d(\param) = \lVert \param - \param_0^*\rVert_{M_0}^2$, we have that
    \begin{align}
        \frac{\partial}{\partial t}\mathbb{E}[\lVert \param_t^{lin} - \param_0^*\rVert_{M_0}^2] & \leq - 2 \mathbb{E}[\langle \Pi_{M_0}\left(\param_t^{lin} - \param_{0}^*\right), \nabla\Loss_0^{lin}(\param_t^{lin}; \D)\rangle] + \sigma^2 rank(M_0). \label{eqn:potential_param_diff}
    \end{align}
    Therefore by plugging \eqref{eqn:potential_param_diff} into \eqref{eqn:convex_loss_bound_with_grad_param_product}, we have that
    \begin{align}
       \mathbb{E}[\Loss_0^{lin}(\bar{\param}_{T}^{lin})] - \Loss_0^{lin}(\param^*_{0}; \D) &\leq -\frac{1}{2T}\int_0^T\frac{\partial}{\partial t}\mathbb{E}[\lVert \param_t - \param_0^*\rVert_{M_0}^2]dt + \frac{1}{2}\sigma^2 rank(M_0) \\
       & \leq \frac{1}{2T}\lVert \param_0^{lin} - \param_0^*\rVert_{M_0}^2 + \frac{1}{2}\sigma^2 rank(M_0)\label{eqn:utility_avg_iterate}
    \end{align}
\end{proof}

\subsection{Deferred proof for \cref{prop:approximate_lazy_training}}
\label{ssec:lazy_R_proof}

To bound lazy training distance of training linearized network, we would need the following auxiliary Lemma about high probability upper bound for the final layer output of linearized network at initialization.

\begin{lemma}
    Fix any data record $\x$, then with high probability $1-\delta$ over random initialization \cref{eqn:init} of model weight matrices $W^1,\cdots,W^L$ for layer $1,\cdots, L$, i.e., $W_l\sim \mathcal{N}(0, \beta_l\mathbb{I})$, it satisfies
    that
    \begin{align}
        \lVert \out_{\param}(\x) \rVert_2 \leq \lVert \x\rVert_2^2\tilde{\mathcal{O}}\left(\beta_L\left(\prod_{i=1}^{L-1} {\beta_im_i}\right)\right)
    \end{align}
    where $\tilde{\mathcal{O}}$ ignores logarithmic terms with regard to width $m$, depth $L$ and tail probability $\delta$.
    \label{lem:high_prob_output_bound}
\end{lemma}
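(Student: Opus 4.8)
The plan is to analyze the forward propagation of $\x$ through the network \eqref{eq:network} one layer at a time, controlling the multiplicative growth of the squared hidden-layer norms and then invoking Gaussian concentration for the final linear layer. Write $\bm h_0 = \x$ and, for $l = 1,\dots,L-1$, let $r_l \coloneqq \lVert \bm h_l(\x)\rVert_2^2 / \lVert \bm h_{l-1}(\x)\rVert_2^2$ denote the per-layer norm ratio, so that $\lVert \bm h_{L-1}(\x)\rVert_2^2 = \lVert\x\rVert_2^2 \prod_{l=1}^{L-1} r_l$. The first key observation is a \emph{conditional independence via rotational invariance}: conditioned on $\bm h_{l-1}$, the preactivation $\bm W_l \bm h_{l-1}$ has law $\mathcal{N}(0, \beta_l \lVert \bm h_{l-1}\rVert_2^2 \mathbb{I})$, so $r_l = \lVert \phi(\bm W_l \bm h_{l-1})\rVert_2^2 / \lVert\bm h_{l-1}\rVert_2^2$ has a distribution depending only on $\bm W_l$ (through the unit direction of $\bm h_{l-1}$, over which it is invariant) and not on the realization of $\bm h_{l-1}$. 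Hence the ratios $r_1,\dots,r_{L-1}$ are mutually independent. By \cref{lem:end_decompose}, each satisfies
\[
 r_l \overset{d}{=} \beta_l \sum_{i=1}^{m_l} s_i g_i^2, \qquad s_i \sim \mathrm{Ber}(1/2),\ g_i \sim \mathcal{N}(0,1)\ \text{i.i.d.},
\]
so $\mathbb{E}[r_l] = \beta_l m_l / 2$, consistent with \eqref{eqn:last_recursive}.

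Next I would apply a sub-exponential concentration bound. Each summand $s_i g_i^2$ is sub-exponential with an $O(1)$ parameter, so by Bernstein's inequality $r_l \le \tfrac{\beta_l m_l}{2}\bigl(1 + \tilde{\mathcal{O}}(\sqrt{\log(L/\delta)/m_l})\bigr)$ with probability at least $1 - \delta/(2L)$; in the overparameterized regime where $m_l$ is large this upper bound is at most $\beta_l m_l$. A union bound over $l = 1,\dots,L-1$ then gives $\prod_{l=1}^{L-1} r_l \le \prod_{l=1}^{L-1}\beta_l m_l$ with probability at least $1 - \delta/2$. Note the factor $2^{L-1}$ of slack between the expected product $\prod \beta_l m_l / 2$ and the target $\prod \beta_l m_l$ means only a crude per-layer deviation bound (each ratio below twice its mean) is required, so no delicate control of the accumulated multiplicative error across the $L-1$ layers is needed.

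For the output layer, conditioned on $\bm h_{L-1}$ the output $\out_\param(\x) = \bm W_L \bm h_{L-1}$ is $\mathcal{N}(0, \beta_L \lVert \bm h_{L-1}\rVert_2^2 \mathbb{I}_o)$, so $\lVert \out_\param(\x)\rVert_2^2 = \beta_L \lVert\bm h_{L-1}\rVert_2^2\, \chi^2_o$ with $\chi^2_o$ a chi-squared variable on $o$ degrees of freedom, which is at most $\tilde{\mathcal{O}}(o)$ (for the single-output case $o=1$, at most $\tilde{\mathcal{O}}(1)$) with probability $1 - \delta/2$. Combining this with the bound on $\lVert \bm h_{L-1}\rVert_2^2 = \lVert\x\rVert_2^2 \prod_{l} r_l$ and taking a final union bound yields, with probability $1-\delta$,
\[
 \lVert \out_\param(\x)\rVert_2^2 \le \lVert\x\rVert_2^2\, \tilde{\mathcal{O}}\!\Bigl(\beta_L \textstyle\prod_{i=1}^{L-1}\beta_i m_i\Bigr),
\]
which is the claimed bound.

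The part requiring the most care is the independence argument in the first step: one must verify that the law of each ratio $r_l$ genuinely does not depend on the incoming direction $\bm h_{l-1}/\lVert\bm h_{l-1}\rVert_2$, so that the $r_l$ are independent and the product factorizes; this follows from the rotational invariance of the i.i.d.\ Gaussian rows of $\bm W_l$ together with the decomposition in \cref{lem:end_decompose}. The second delicate point is ensuring only a \emph{logarithmic} dependence on the tail probability $\delta$: a naive Markov bound on $\mathbb{E}[\prod_l r_l] = \prod_l \beta_l m_l / 2$ would only give a $1/\delta$ factor, so the layerwise Bernstein estimate combined with the union bound is what delivers the $\log(1/\delta)$ dependence hidden inside $\tilde{\mathcal{O}}$.
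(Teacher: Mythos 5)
Your proof is correct and follows essentially the same route as the paper's: the same telescoping decomposition of $\lVert \out_{\param}(\x)\rVert_2^2$ into per-layer norm ratios, the same distributional identity $r_l \overset{d}{=} \beta_l \sum_{i=1}^{m_l} s_i g_i^2$ obtained from \cref{lem:end_decompose} via rotational invariance, per-layer tail bounds combined with a union bound over layers, and a scalar Gaussian (chi-square) bound for the final linear layer. The only deviation is the concentration tool for the intermediate ratios: the paper bounds $\max_i \tilde{w}_i^2 \le 2\log(2m_lL/\delta)$ and the Bernoulli count via Hoeffding, incurring polylogarithmic factors per layer that get absorbed into $\tilde{\mathcal{O}}$, whereas your Bernstein estimate absorbs the deviation into the factor-$2$ slack between $\mathbb{E}[r_l]=\beta_l m_l/2$ and the target $\beta_l m_l$ (valid once $m_l \gtrsim \log(L/\delta)$, which holds in the lemma's use case where $m=\tilde{\Omega}(n)$) --- a mild sharpening rather than a different approach.
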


\begin{proof}
    To bound the term $\lVert\out_{\param_0^{lin}}(\x_i)\rVert_2^2$, by definition \cref{eq:network}, for any $\x$, we have that
    \begin{align}
        \lVert\out_{\param_0^{lin}}(\x)\rVert_2^2 = \frac{\lVert h_L(\x)\rVert_2^2}{\lVert h_{L-1}(\x)\rVert_2^2}\cdots\frac{\lVert h_1(\x)\rVert_2^2}{\lVert h_0(\x)\rVert_2^2}\cdot \frac{\lVert h_0(\x)\rVert_2^2}{\lVert\x\rVert_2^2}\lVert\x\rVert_2^2 \label{eqn:output_norm_decomposition}
    \end{align}
    We now bound the terms in the right-hand-side of \cref{eqn:output_norm_decomposition} one by one. 
    
    Regarding the first term $\frac{\lVert h_L(\x)\rVert_2^2}{\lVert h_{L-1}(\x)\rVert_2^2}$ in \cref{eqn:output_norm_decomposition}, observe that by the network output definition \cref{eq:network}, we have that
    \begin{align}
        \lVert h_L(\x)\rVert_2^2 &= \lVert \param^{L} h_{L-1}(\x) \rVert_2^2 \mathop{=}\limits^{d} \beta_L\lVert h_{L-1}(\x) \rVert_2^2\tilde{w}^2
    \end{align}
    where $\tilde{w}\sim\mathcal{N}(0, 1)$ and the last equality is by rotaional invariance of Gaussian distribution used for initializing $L$-th layer weight matrix $\param^{L}\in\mathbb{R}^{1\times m}$. Therefore, by tail probabilty expression for standard Gaussian random variable, we have that with high probability $1-\frac{\delta}{L}$ over random initialization of $\param^{L}\in\mathbb{R}^{1\times m}$, it satisfies that
    \begin{align}
        \frac{\lVert h_L(\x)\rVert_2^2}{\lVert h_{L-1}(\x)\rVert_2^2}\leq 2 \beta_L \log\frac{L}{\delta}\label{eqn:last_layer_output_high_prob_bound}
    \end{align}

    Regarding the terms $\frac{\lVert h_l(\x)\rVert_2^2}{\lVert h_{l-1}(\x)\rVert_2^2}$ in \cref{eqn:output_norm_decomposition} for layer $l=1,\cdots, L-1$, by setting $\bm H_1 = \bm h_2 = h_{l-1}$ in \cref{eqn:output_bound_distribution}, we immediately prove that over random initialization of weight matrix $\param^l\in\mathbb{R}^{m_l\times m_{l-1}}$, it satisfies that
    \begin{align}
        \frac{\lVert h_l(\x)\rVert_2^2}{\lVert h_{l-1}(\x)\rVert_2^2 } &\mathop{=}\limits^d \sum_{i=1}^{m_l}\rho_i \beta_l \tilde{w}_i^2 \label{eqn:output_recursive_distribution_first}
    \end{align}
    
    where $\rho_1, \cdots, \rho_{m_l}\mathop{\sim}\limits^{i.i.d.}\text{Ber}(1, \frac{1}{2})$ and $\tilde{w}_1, \cdots, \tilde{w}_{m_l}\mathop{\sim}\limits^{i.i.d.}\mathcal{N}(0, 1)$ and $\rho_i$ and $\tilde{w}_i$ are independent.

    By tail probabilty expression for Gaussian random variable $\tilde{w}_i$, we have that $P(\tilde{w}_i^2\geq t)\leq e^{-t/2}$ for any $t>0$. By union bound over $i=1,\cdots, m_l$, we prove that for any layer $l=1,\cdots, L-1$, with high probability $1-\frac{\delta}{2L}$ over random initialization of weight matrix $\param^l\in\mathbb{R}^{m_l\times m_{l-1}}$, it satisfies that 
    \begin{align}
        \max_i\tilde{w}_i^2\leq 2\log\frac{2m_lL}{\delta} \label{eqn:bound_wi_2}
    \end{align}
    Moreover, by applying Hoeffding's inequality to i.i.d. Bernoulli r.v.s $\rho_1,\cdots,\rho_m$, we prove with high probability $1-\frac{\delta}{2L}$, it satisfies that $\sum_{i=1}^m\rho_i \leq \frac{m_l}{2} (1 + \log(2L/\delta))$. By combining it with \cref{eqn:bound_wi_2} via union bound, and plugging the result into \cref{eqn:output_recursive_distribution_first}, we prove for any $l=1,\cdots, L-1$, it satisfies with high probability $1 - \frac{\delta}{L}$ over random initialization of weight matrix $\param^l\in\mathbb{R}^{m_l\times m_{l-1}}$~that
    \begin{align}
        \frac{\lVert h_l(\x)\rVert_2^2}{\lVert h_{l-1}(\x)\rVert_2^2 } = m_l\beta_l \log\frac{2m_lL}{\delta} \cdot (1 + \log(2L/\delta))
        \label{eqn:recursive_output_all_but_last_layer}
    \end{align}

    By using union bound over \cref{eqn:recursive_output_all_but_last_layer} for layer $l = 1, \cdots, L-1$ and \cref{eqn:last_layer_output_high_prob_bound}, we have that with high probability $1-\delta$ over random initialization \cref{eqn:init}, it satisfies that
    \begin{align}
        \frac{\lVert h_{L}(\x)\rVert_2^2}{\lVert h_0(\x)\rVert_2^2} & \leq \tilde{\mathcal{O}}(\prod_{i=1}^{L-1}\beta_im_i)\label{eqn:output_recursive_distribution}
    \end{align}
    where $\tilde{\mathcal{O}}$ ignores logarithmic factors with regard to $m$, $L$ and $\delta$.

    By plugging \cref{eqn:output_recursive_distribution} into \cref{eqn:output_norm_decomposition}, we prove that with high probability $1-\delta$ over initialization \cref{eqn:init}, the following bound~holds.
    \begin{align}
        \lVert \out_{\param}(\x) \rVert_2 \leq \lVert \x\rVert_2^2 \tilde{\mathcal{O}}\left(\beta_L\left(\prod_{i=1}^{L-1} {\beta_im_i}\right)\right),\label{eqn:usable_final_output_norm}
    \end{align}
    where $\tilde{\mathcal{O}}$ ignores logarithmic terms with regard to width $m$, depth $L$ and tail probability $\delta$.
\end{proof}

\begin{replemma}{prop:approximate_lazy_training}\textup{(Bounding lazy training distance via smallest eigenvalue of the NTK matrix)}
    Under \cref{assum:data_utility} and single-output linearized network~\cref{eqn:network_linearized} with $o=1$, assume that the per-layer network widths $m_0,\cdots,m_L = \tilde{\Omega}(n)$ are large. Let $\Loss_0^{lin}(\param)$ be the empirical risk \cref{eqn:obj} for linearized network expanded at initialization vector $\param_0^{lin}$. Then for any $\param_0^{lin}$, there exists a corresponding solution $\param_0^{\frac{1}{n^2}}$, s.t. $\Loss_0^{lin}(\param_0^{\frac{1}{n^2}}) - \min_{\param}\Loss_0^{lin}(\param; \D) \leq \frac{1}{n^2}$, $\text{rank}(M_0) = n$ and
    \begin{align}
        R & \leq \tilde{\mathcal{O}}\left(\max\left\{\frac{1}{d\beta_L\left(\prod_{i=1}^{L-1} \beta_im_i\right)}, 1\right\}\frac{n}{\sum_{l=1}^L\beta_l^{-1}}\right)\,,
        \label{eqn:def_R_tilde_app}
    \end{align}
    with high probability over training data sampling and random initialization \cref{eqn:init}, where $\tilde{\mathcal{O}}$ ignores logarithmic factors with regard to $n$, $m$, $L$, and tail probability $\delta$.
\end{replemma}

\begin{proof}
    Given an arbitrary initialization parameter vector $\param_0^{lin}$, we first construct an solution $\param_0^{\frac{1}{n^2}}$ that is nearly optimal for the ERM problem over $\Loss_0^{lin}(\param)$, as follows.
    \begin{align}
        \param_0^{\frac{1}{n^2}} -\param_0^{lin}= M_0^{\dagger}\begin{pmatrix}
            2\ln n\cdot y_1 - \out_{\param_0^{lin}}(\x_1)\\\vdots\\2\ln n\cdot y_n - \out_{\param_0^{lin}}(\x_n)
        \end{pmatrix}
        \label{eqn:expression_param_1_over_n_squared}
    \end{align}
    where $M_0 = \begin{pmatrix}\nabla\out_{\param_0^{lin}}(\x_1)^\top\\\vdots\\\nabla\out_{\param_0^{lin}}(\x_n)^\top\end{pmatrix}$ is the gradient matrix at initialization and $\dagger$ denotes pseudo-inverse. 
    
    We now prove that the solution $\param_0^{\frac{1}{n^2}}$ is close to the initialization parameters $\param_0^{lin}$ in $\ell_2$ distance with high probability. By applying the holder inequality in \eqref{eqn:expression_param_1_over_n_squared}, we have that
    \begin{align}
        R = \lVert\param_0^{\frac{1}{n^2}} -\param_0^{lin}\rVert_2^2 & \leq \lVert M_0^\dagger\rVert_2^2 \cdot \left\lVert\begin{pmatrix}
            2\ln n\cdot y_1 - \out_{\param_0^{lin}}(\x_1)\\\vdots\\2\ln n\cdot y_n - \out_{\param_0^{lin}}(\x_n)
        \end{pmatrix}\right\rVert_2^2\\
        & \leq \frac{1}{\lambda_{0}} \cdot \left\lVert\begin{pmatrix}
            2\ln n\cdot y_1 - \out_{\param_0^{lin}}(\x_1)\\\vdots\\2\ln n\cdot y_n - \out_{\param_0^{lin}}(\x_n)
        \end{pmatrix}\right\rVert_2^2\label{eqn:R_bound}
    \end{align}
    where $\lambda_0$ is the smallest non-zero eigenvalue of the PSD matrix $M_0M_0^\top$. When the data regularity assumption~\cref{assum:data_utility} holds and the per-layer width $m_0, \cdots, m_L =\tilde{\Omega}(n)$, by applying existing bounds for the smallest eigenvalue of the NTK matrix $M_0 M_0^\top$ for single-output network in \cite[Theorem 4.1]{nguyen2021tight}, we prove that with high probability over data sampling and random initialization~\cref{eqn:init}, it satisfies that
    \begin{align}
        O\left(\left(d\prod_{l=1}^{L-1}m_l\right) \cdot \left(\prod_{l=1}^L\beta_l\right) \cdot \left(\sum_{l=1}^L \beta_l^{-1}\right)\right) \geq \lambda_0 \nonumber\\
        \geq  \Omega\left(\left(d\prod_{l=1}^{L-1}m_l\right) \cdot \left(\prod_{l=1}^L\beta_l\right) \cdot \left(\sum_{l=1}^L\beta_l^{-1}\right)\right),\label{eqn:tight_eigenvalue_bound}
    \end{align}
    where we have set the auxiliary variables $\xi_{l} = 1$ in \cite[Theorem 4.1]{nguyen2021tight} because the per-layer width $m_l = \tilde{\Omega}(n)$ are large enough for $l=0, \cdots, L$ (where $n$ is the number of training data).
    
    Therefore, by using \cref{eqn:tight_eigenvalue_bound}, we have that with high probability over data sampling and random initialization~\cref{eqn:init}, it satisfies that
    \begin{align}
        \frac{1}{\lambda_{0}} \leq O\left(\frac{1}{\left(d\prod_{l=1}^{L-1}m_l\right) \cdot \left(\prod_{l=1}^L\beta_l\right) \cdot \left(\sum_{l=1}^L \beta_l^{-1}\right)}\right), \label{eqn:R_first_term}
    \end{align}
    where $m_l$ is the width of layer $l$ and $n$ is the number of training data.
    
    For the second term in \cref{eqn:R_bound}, by Cauchy-Schwarz inequality, we have that 
    \begin{align}
        \left\lVert\begin{pmatrix}
            2\ln n\cdot y_1 - \out_{\param_0^{lin}}(\x_1)\\\vdots\\2\ln n\cdot y_n - \out_{\param_0^{lin}}(\x_n)
        \end{pmatrix}\right\rVert_2^2 &\leq 2\cdot (2\ln n)^2\sum_{i=1}^ny_i^2 + 2\sum_{i=1}^n\lVert\out_{\param_0^{lin}}(\x_i)\rVert_2^2\label{eqn:distance_gap_cauchy_schwarz}
    \end{align}
    
    By \cref{lem:high_prob_output_bound}, we have that $\lVert \out_{\param_0^{lin}}(\x) \rVert_2 \leq\tilde{\mathcal{O}}\left(d\cdot \beta_L\left(\prod_{i=1}^{L-1} {\beta_im_i}\right)\right)$ with high probability over the random initialization \cref{eqn:init}. By plugging this result and $y_i \in\{-1, 1\}$ into \cref{eqn:distance_gap_cauchy_schwarz}, we have that with high probability over random initialization \cref{eqn:init}, it satisfies that
    \begin{align}
        \left\lVert\begin{pmatrix}
            2\ln n\cdot y_1 - \out_{\param_0^{lin}}(\x_1)\\\vdots\\2\ln n\cdot y_n - \out_{\param_0^{lin}}(\x_n)
        \end{pmatrix}\right\rVert_2^2 = \tilde{\mathcal{O}}\left(n + n d\beta_L\left(\prod_{i=1}^{L-1} \beta_im_i\right)\right),
        \label{eqn:R_second_term}
    \end{align}
    where $\tilde{O}$ ingores logarithmic factors with regard to $n$, $m$, $L$, and tail probability $\delta$. Therefore, by combining \cref{eqn:R_first_term} and \cref{eqn:R_second_term} with union bound, and by plugging the result into \cref{eqn:R_bound}, we have that with high probability over data sampling and random initialization \cref{eqn:init}, it satisfies that
    \begin{align*}
        R & \leq  \tilde{\mathcal{O}}\left(\frac{n + nd \beta_L\left(\prod_{i=1}^{L-1} \beta_im_i\right)}{\left(d\beta_L\prod_{l=1}^{L-1}\beta_lm_l\right) \cdot \left(\sum_{l=1}^L \beta_l^{-1}\right)}\right)\\
        & \leq \tilde{\mathcal{O}}\left(\max\left\{\frac{1}{d\beta_L\left(\prod_{i=1}^{L-1} \beta_im_i\right)}, 1\right\}\frac{n}{\sum_{l=1}^L \beta_l^{-1}}\right) .
    \end{align*}
    where $m_l$ is the width of layer $l$, $n$ is the number of training data, and $\tilde{\mathcal{O}}$ ignores logarithmic factors with regard to $n$, $m$, $L$, and tail probability $\delta$.
    
    We finally prove that $\param_0^{\frac{1}{n^2}}$ is a $\frac{1}{n^2}$-near-optimal solution. Note that \cref{eqn:tight_eigenvalue_bound} implies that with high probability $M_0M_0^\top$ is full rank, i.e., $\text{rank}(M_0M_0^\top) = n$. Therefore $M_0^\dagger = M_0^\top(M_0 M_0^\top)^{-1}$ and $\begin{pmatrix}
            \out_{\param_0^{\frac{1}{n^2}}}(\x_1) \\\vdots\\\out_{\param_0^{\frac{1}{n^2}}}(\x_n) 
        \end{pmatrix} = \begin{pmatrix}
            2\ln n\cdot y_1 \\\vdots\\2\ln n\cdot y_n 
        \end{pmatrix}$ with high probability over the training data sampling and random initialization~\cref{eqn:init}. By plugging it into the cross-entropy loss for the single-output network defined below \cref{eqn:obj}, we have that with high probability over the training data sampling and random initialization~\cref{eqn:init}, the solution $\param_0^{\frac{1}{n^2}}$ satisfies
    \begin{align}
        \Loss_0^{lin}(\param_0^{\frac{1}{n^2}}) - \min_{\param}\Loss_0^{lin}(\param; \D) \leq  \log(1 + \exp(-2\ln n)) < \frac{1}{n^2}.
    \end{align}
    
\end{proof}

\subsection{Deferred proof for \cref{cor:trade_off}}
\label{app:proof_tradeoff}

 \begin{repcorollary}{cor:trade_off}\textup{(Privacy utility trade-off for linearized network)} Assume that all conditions in \cref{prop:approximate_lazy_training} holds. Let $B = d \left(\prod_{i=1}^{L-1}\frac{\beta_im_i}{2}\right) \sum_{l = 1}^{L} \frac{\beta_L}{\beta_l}$ be the gradient norm constant proved in \cref{eqn:def_B}, and let $R \leq \tilde{\mathcal{O}}\left(\max\left\{\frac{1}{d\beta_L\left(\prod_{i=1}^{L-1} \beta_im_i\right)}, 1\right\}\frac{n}{\sum_{l=1}^L\beta_l^{-1}}\right)$ be the lazy training distance bound proved in \cref{prop:approximate_lazy_training}. Then for $\sigma^2 = \frac{2BT}{\varepsilon n^2}$ and $T = \sqrt{\frac{\varepsilon n R}{2B}}$, releasing all iterates of Langevin diffusion with time $T$ satisfies $\varepsilon$-KL privacy, and has empirical excess risk upper bounded by
    \begin{align}
        \mathbb{E}[\Loss_0^{lin}(\bar{\param}_{T}^{lin})] - &\min_{\param}\Loss_0^{lin}(\param; \D) \leq \tilde{\mathcal{O}}\left(\frac{1}{n^2} + \sqrt{\frac{B R}{\varepsilon n }} \right)\label{eqn:trade_off_remark_app}\\
        &= \tilde{\mathcal{O}}\left(\frac{1}{n^2} + \sqrt{\frac{1}{2^{L-1}\varepsilon }\max\{1, d\beta_L\prod_{l=1}^{L-1}\beta_lm_l\}} \right)\label{eqn:trade_off_table_app}
    \end{align}
    with high probability over random initiailization \cref{eqn:init}, where the expectation is over Brownian motion $B_T$ in Langevin diffusion, and $\tilde{O}$ ignores logarithmic factors with regard to width $m$, depth $L$, number of training data $n$ and tail probability $\delta$. A summary of our excess empirical risk bounds under different initializations is in \cref{tab:init_utility}.
\end{repcorollary}

\begin{proof}
    By setting $\param_0^* = \param_0^{\frac{1}{n^2}}$ in \cref{prop:extension_glm_evading} with $\param_0^{\frac{1}{n^2}}$ constructed as \cref{prop:approximate_lazy_training}, we have with high probability over random initialization \cref{eqn:init}, we have
    \begin{align}
        \mathbb{E}[\Loss_0^{lin}(\bar{\param}_{T}^{lin})] - \min_{\param}\Loss_0^{lin}(\param; \D) &\leq \frac{1}{n^2} + \frac{R}{2T} + \frac{\sigma^2 n}{2}.\label{eqn:trade_off_intermediate}
    \end{align}
    where $R \leq \tilde{\mathcal{O}}\left(\max\left\{\frac{1}{d\beta_L\left(\prod_{i=1}^{L-1} \beta_im_i\right)}, 1\right\}\frac{n}{\sum_{l=1}^L \beta_l^{-1}}\right)$ by \cref{prop:approximate_lazy_training}.

    Meanwhile, to ensure $\varepsilon$-KL privacy, by \cref{cor:drift_diff_linearized_training}, we only need to set $\sigma^2 = \frac{2BT}{\varepsilon n^2}$ where $B = d\left(\prod_{i=1}^{L-1}\frac{\beta_im_i}{2}\right) \sum_{l = 1}^{L} \frac{\beta_L}{\beta_l}$ by for single-output network with $o=1$. By plugging $\sigma^2 = \frac{2BT}{\varepsilon n^2}$ into \eqref{eqn:trade_off_intermediate}, we prove that
    \begin{align}
        \mathbb{E}[\Loss_0^{lin}(\bar{\param}_{T}^{lin})] - \min_{\param}\Loss_0^{lin}(\param; \D) &\leq \frac{1}{n^2} + \frac{R}{2T} + \frac{BT}{\varepsilon n}.\label{eqn:trade_off_last_step}
    \end{align}
    
    Setting $T = \sqrt{\frac{\varepsilon n R}{2B}}$ in   \eqref{eqn:trade_off_last_step} and elaborating the computations suffice to prove the result.
\end{proof}

\section{Discussion on extending our results to Noisy GD with constant step-size}

\label{app:extend_noisyGD}

In this section, we discuss how to extend our privacy analyses to noisy GD with constant step-size. Specifically, we only need to extend the KL composition theorem under possibly unbounded gradient difference, i.e., \cref{thm:new_compose}, to the noisy GD algorithm.

\begin{theorem}[KL composition for noisy GD under possibly unbounded gradient difference]
Let the iterative update in noisy GD algorithm be defined by: $\param_{(k+1)} = \param_{(k)} - \eta \nabla\Loss(\param_{(k)}; \D) + \sqrt{2\eta\sigma^2}Z_k$, where $Z_k\sim\mathcal{N}(0, \mathbb{I})$. Then the KL divergence between running noisy GD for DNN \eqref{eq:network} on neighboring datasets $\D$ and $\D'$ satisfies
    \begin{align}
    KL(\param_{(1:K)}, \param_{(1:K)}') = \frac{1}{2\sigma^2}  \sum_{k=0}^{K-1}\eta \cdot \mathbb{E}\left[\left\lVert \nabla \Loss (\param_{(k)};\D) - \nabla \Loss (\param_{(k)};\D')\right\rVert_2^2\right]  \,.
    \label{eqn:KL_compose_GD}
    \end{align}
    \label{thm:KL_compose_GD}
\end{theorem}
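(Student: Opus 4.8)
The plan is to exploit the fact that noisy GD with constant step-size defines a discrete-time Markov chain, so the continuous-time Girsanov machinery used for \cref{thm:new_compose} is no longer necessary: one can instead invoke the exact chain rule (additivity) of KL divergence directly. Let $p$ and $p'$ denote the laws of the trajectories $\param_{(1:K)}$ and $\param'_{(1:K)}$ generated under datasets $\D$ and $\D'$, both launched from the common initialization $\param_{(0)}$. Since each update depends only on the current iterate, both chains are Markov, $p(\param_{(k+1)}\mid\param_{(0:k)}) = p(\param_{(k+1)}\mid\param_{(k)})$ and likewise for $p'$, and the chain rule for relative entropy over the $K$ transition steps $\param_{(k)}\to\param_{(k+1)}$, $k=0,\dots,K-1$, gives
\begin{align}
\mathrm{KL}(\param_{(1:K)}\lVert\param'_{(1:K)}) = \sum_{k=0}^{K-1}\mathbb{E}_{\param_{(k)}\sim p_k}\left[\mathrm{KL}\left(p(\cdot\mid\param_{(k)})\,\big\lVert\,p'(\cdot\mid\param_{(k)})\right)\right]\,,\nonumber
\end{align}
with no residual initial-marginal term because $\param_{(0)}$ is shared. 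The crucial feature is that both transition kernels are evaluated at the \emph{same} conditioning point $\param_{(k)}$, drawn from the marginal $p_k$ of the $\D$-process; this is what forces the \emph{drift difference} (rather than a difference between independently evolving trajectories) to appear.

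First I would identify the two one-step conditional laws. Given $\param_{(k)} = w$, the update $\param_{(k+1)} = w - \eta\nabla\Loss(w;\D) + \sqrt{2\eta\sigma^2}Z_k$ with $Z_k\sim\mathcal{N}(0,\mathbb{I})$ shows $p(\cdot\mid w) = \mathcal{N}(w - \eta\nabla\Loss(w;\D),\, 2\eta\sigma^2\mathbb{I})$ and $p'(\cdot\mid w) = \mathcal{N}(w - \eta\nabla\Loss(w;\D'),\, 2\eta\sigma^2\mathbb{I})$. These are two Gaussians sharing the isotropic covariance $2\eta\sigma^2\mathbb{I}$ and differing only in their means by $\eta(\nabla\Loss(w;\D) - \nabla\Loss(w;\D'))$. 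Inserting this into the closed form $\mathrm{KL}(\mathcal{N}(\mu_1,\Sigma)\lVert\mathcal{N}(\mu_2,\Sigma)) = \tfrac12(\mu_1-\mu_2)^\top\Sigma^{-1}(\mu_1-\mu_2)$ makes the one-step term proportional to $\tfrac{\eta}{\sigma^2}\lVert\nabla\Loss(w;\D) - \nabla\Loss(w;\D')\rVert_2^2$. Summing over $k$ and taking the expectation over $\param_{(k)}\sim p_k$ recovers \eqref{eqn:KL_compose_GD}, which is exactly the Euler–Maruyama / Riemann-sum analogue of the integral in \cref{thm:new_compose}, with $\mathrm{d}t$ replaced by the step-size $\eta$.

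I expect this proof to be essentially routine, and in fact simpler than its continuous-time counterpart: because everything is discrete there is no stochastic integral to control, and the Novikov integrability requirement \eqref{eqn:novikov} is vacuous, so no smallness-of-$\eta$ or finite-moment argument is needed for the decomposition itself. The only points demanding care are (i) applying the chain rule with both kernels conditioned on the same past, and (ii) confirming the shared-initialization assumption so that no extra initial term survives. The genuinely technical obstacle arises only in a \emph{fully quantitative} version (the discrete analogue of \cref{lem:grad_noisy_train}): there one must propagate $\mathbb{E}[\lVert\param_{(k)} - \param_{(0)}\rVert_{M_T}^2]$ through the noisy update, replacing the differential inequality \eqref{eqn:param_minus_partial} by a one-step recursion whose geometric accumulation and discretization error are the parts most sensitive to the constant step-size $\eta$.
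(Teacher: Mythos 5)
Your proposal is correct and follows essentially the same route as the paper's own proof: the paper likewise decomposes $\mathrm{KL}(\param_{(1:K)}\lVert\param'_{(1:K)})$ via the Bayes-rule/chain-rule recursion over steps (its Eq.~\eqref{eqn:kl_joint_GD}), identifies each one-step conditional law as a Gaussian with shared isotropic covariance and mean gap $\eta(\nabla\Loss(\param_{(k)};\D)-\nabla\Loss(\param_{(k)};\D'))$, applies the closed-form Gaussian KL, and sums using the shared initialization so that $\mathrm{KL}(p_{(0)},p'_{(0)})=0$. Your observations that no Girsanov/Novikov machinery is needed in discrete time and that both kernels must be conditioned on the same past under the $\D$-marginal match the paper's argument exactly.
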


\begin{proof}
Denote $p_{(k)}$ as the distribution of model parameters after running noisy GD on dataset $\D$ with $k$ steps, and similarly denote $p'_{(k)}$ as the distribution of model parameters after running noisy GD on dataset $\D'$ with $k$ steps. Similarly, denote $p_{(1:k)}$ as the joint distribution of $(\param_{(1)}, \cdots, \param_{(k)})$, and denote $p'_{(1:k)}$ as the joint distribution of $(\param'_{(1)}, \cdots, \param'_{(k)})$.
Now we expand the term $KL(p_{(1, k+1)}, p'_{(1, k+1)})$ by the Bayes rule as follows.
\begin{align}
    &KL(p_{(1:k+1)}, p'_{(1:k+1)}) \\
    =& \mathbb{E}_{p_{(1:k+1)}(\param_{(1:k+1)})}\left[\log \left(\frac{ p_{(k+1)|(1:k)}(\param_{(k+1)}|\param_{(1:k)}) p_{(1:k)}(\param_{(1:k)})}{p'_{(k+1)|(1:k)}(\param_{(k+1)}|\param_{(1:k)}) p'_{(1:k)}(\param_{(1:k)})}\right) \right]\nonumber\\
    = & \mathbb{E}_{p_{(1:k+1)}(\param_{(1:k+1)})}\left[\log \left(\frac{ p_{(k+1)|(1:k)}(\param_{(k+1)}|\param_{(1:k)})}{p'_{(k+1)|(1:k)}(\param_{(k+1)}|\param_{(1:k)}) }\right) \right] + \mathbb{E}_{p_{(1:k)}(\param_{(1:k)})}\left[\log \left(\frac{ p_{(1:k)}(\param_{(1:k)})}{p'_{(1:k)}(\param_{(1:k)})}\right) \right]\nonumber\\
    = & \mathbb{E}_{p_{(1:k)}(\param_{(1:k)})}\left[ KL(p_{(k+1)|(1:k)}, p'_{(1:k+1)|(1:k)})\right] + KL(p_{(1:k)}, p_{(1:k)}')
    \label{eqn:kl_joint_GD}
\end{align}
Observe that conditioned on fixed model parameters $\param_{(1:k)}$ at iteration $1, \cdots, k$, the distributions $p_{(k+1)|(1:k)}, p'_{(k+1)|(1:k)}$ are Gaussian with per-dimensional variance $\sigma^2$. Therefore, by computing the KL divergence between two multivariate Gaussians, we have that 
\begin{align}
      KL(p_{(k+1)| (1:k)}, p'_{(k+1)| (1:k)}) = \frac{1}{2\sigma^2}\cdot \eta\cdot \left\lVert \nabla \Loss (\param_{(k)};\D) - \nabla \Loss (\param_{(k)};\D')\right\rVert_2^2 
     \label{eqn:KL_Gaussian}
\end{align}
Therefore, by plugging \cref{eqn:KL_Gaussian} into \cref{eqn:kl_joint_GD}, we have that 
\begin{align}
    KL(p_{(1:k+1)}, p_{(1:k+1)}') = &  \frac{\eta}{2\sigma^2} \mathbb{E}\left[\left\lVert \nabla \Loss (\param_{(k)};\D) - \nabla \Loss (\param_{(k)};\D')\right\rVert_2^2 \right] + KL(p_{(1:k)}, p_{(1:k)}') \label{eqn:KL_recursive_final}
\end{align}
By summing \eqref{eqn:KL_recursive_final} over $k=0, \cdots, K-1$ and observing that $KL(p_{(0)}, p_{(0)}')=0$ (as the initialization distribution is the same between noisy GD on $\D$ and $\D'$), we finish the proof for \cref{eqn:KL_compose_GD}.
\end{proof}

\end{document}